\documentclass{article}

\usepackage[final]{neurips_2025}

\usepackage[utf8]{inputenc} 
\usepackage[T1]{fontenc}    
\usepackage[hidelinks]{hyperref}       
\usepackage{url}            
\usepackage{booktabs}       
\usepackage{amsfonts}       
\usepackage{nicefrac}       
\usepackage{microtype}      
\usepackage{xcolor}         
\usepackage{multirow}
\usepackage{ulem}
\usepackage{microtype}
\usepackage{graphicx}
\usepackage{subfigure}

\usepackage{diagbox}
\usepackage{colortbl}
\usepackage{enumitem}
\usepackage{tikz}
\usepackage{threeparttable}


\usepackage{amsmath}
\usepackage{amssymb}
\usepackage{mathtools}
\usepackage{amsthm}
\usepackage{booktabs}

\newtheorem{theorem}{Theorem}[section]

\newtheorem{lemma}[theorem]{Lemma}

\newtheorem{definition}[theorem]{Definition}
\newtheorem{assumption}[theorem]{Assumption}
\theoremstyle{remark}
\newtheorem{remark}{Remark}
\usepackage{rotating}
\usepackage{bm}
\usepackage{array}
\usepackage{tikz}
\usepackage{algorithmic}
\usepackage{algorithm} 
\usepackage{titletoc}

\newcommand{\heterone}{Heterogeneous scenario 1}
\newcommand{\hetertwo}{Heterogeneous scenario 2}
\newcommand{\heterthree}{Heterogeneous scenario 3}
\newcommand{\heterfour}{Heterogeneous scenario 4}
\newcommand{\heterfive}{Heterogeneous scenario 5}
\newcommand{\model}{\bm{\theta}}
\newcommand{\classifier}{\bm{h}}
\newcommand{\featextra}{\bm{\psi}}
\newcommand{\sample}{\bm{\xi}}
\definecolor{greyL}{RGB}{235,235,235}
\definecolor{TgreyL}{RGB}{192,192,192}
\usepackage{etoc}
\etocdepthtag.toc{mtchapter}
\etocsettagdepth{mtchapter}{subsection}

\etocsettagdepth{mtappendix}{none}
\usepackage{minitoc}
\usepackage[utf8]{inputenc}
\usepackage{tcolorbox}
\definecolor{greyL}{RGB}{235,235,235}

\usepackage{hyperref}
\hypersetup{
    colorlinks=true,
    citecolor=red, 
    linkcolor=red,  
    urlcolor=cyan,
}

\tcbset{
  highlightbox/.style={
    colback=greyL, 
    colframe=black, 
    boxrule=1pt, 
    arc=5pt, 
    boxsep=5pt, 
    left=5pt, right=5pt, top=0pt, bottom=0pt, 
    width=\textwidth, 
    fontupper=\itshape 
  }
}

\usepackage[table]{xcolor}
\title{FedGPS: Statistical Rectification Against Data Heterogeneity in Federated Learning}
\newcommand{\method}{FedGPS}

 \author{
Zhiqin Yang$^{1}$ \quad Yonggang Zhang$^{3}$ \quad Chenxin Li$^{1}$\\\bf Yiu-ming Cheung$^{2}$ \quad Bo Han$^{2}$\quad Yixuan Yuan$^{1}\thanks{Corresponding author: Yixuan Yuan (yxyuan@ee.cuhk.edu.hk).}$   \\
 $^1$The Chinese University of Hong Kong \quad
 $^2$Hong Kong Baptist University \\
 $^3$The Hong Kong University of Science and Technology
 \setcounter{footnote}{0}
}

\begin{document}

\maketitle

\begin{abstract}
Federated Learning (FL) confronts a significant challenge known as data heterogeneity, which impairs model performance and convergence.
Existing methods have made notable progress in addressing this issue.
However, improving performance in certain heterogeneity scenarios remains an overlooked question: \textit{How robust are these methods to deploy under diverse heterogeneity scenarios?}
To answer this, we conduct comprehensive evaluations across varied heterogeneity scenarios, showing that most existing methods exhibit limited robustness.
Meanwhile, insights from these experiments highlight that sharing statistical information can mitigate heterogeneity by enabling clients to update with a global perspective.
Motivated by this, we propose \textbf{\method} (\textbf{Fed}erated \textbf{G}oal-\textbf{P}ath \textbf{S}ynergy), a novel framework that seamlessly integrates statistical distribution and gradient information from others.
Specifically, \method~statically modifies each client’s learning objective to implicitly model the global data distribution using surrogate information, while dynamically adjusting local update directions with gradient information from other clients at each round.
Extensive experiments show that \method~outperforms state-of-the-art methods across diverse heterogeneity scenarios, validating its effectiveness and robustness. The code is available at: \url{https://github.com/CUHK-AIM-Group/FedGPS}.
\end{abstract}

\addtocontents{toc}{\protect\setcounter{tocdepth}{-1}}
\section{Introduction}
Federated Learning (FL) facilitates collaborative model training across distributed data sources, garnering substantial interest in recent years~\cite{yangsurvey, kairouzsurvey,huang2024federated,ji2024emerging}.
Its primary goal is to keep data localized to protect sensitive information while harnessing contributions from other participants to enhance individual models~\cite{tan2022towards, sun2021partialfed} or construct a superior global model~\cite{fedavg}.
However, this decentralized paradigm encounters data heterogeneity~\cite{li2022federated,hu2024fedmut} (also known as statistical heterogeneity), due to the 
variations in client devices, geographic locations, and annotation processes~\cite{li2020federatedsurvey, fedprox}. 
This departure from the assumption of independent and identically distributed (i.i.d.) data presents a substantial challenge, complicating the training of distributed networks across diverse data distributions to achieve robust generalization on the overall data distribution~\cite{li2022federated, scaffold}.

To enhance performance in FL, numerous studies have advanced efforts to mitigate the impact of data heterogeneity~\cite{ccvr, vhl, fedfed,shenhot}.
FedAvg~\cite{fedavg} introduces the paradigm of local training followed by aggregation. Moreover, several studies have refined the learning objective of local training by incorporating constraints to mitigate client drift~\cite{fedprox, scaffold, fednova, padamfed}.
Client sampling~\cite{chen2022optimal, li2025optimal} and global aggregation weight adjustments~\cite{feddisco, liu2023feddwa} have also been tailored to adapt to heterogeneity scenarios.
Additionally, information-sharing strategies~\cite{fedftg, vhl, fedfed} have emerged as a promising approach to mitigate heterogeneity, though they often require increased communication or computational resources and demand careful privacy considerations to protect sensitive data.

\begin{figure}[t]
    \centering
    \includegraphics[width=0.8\linewidth]{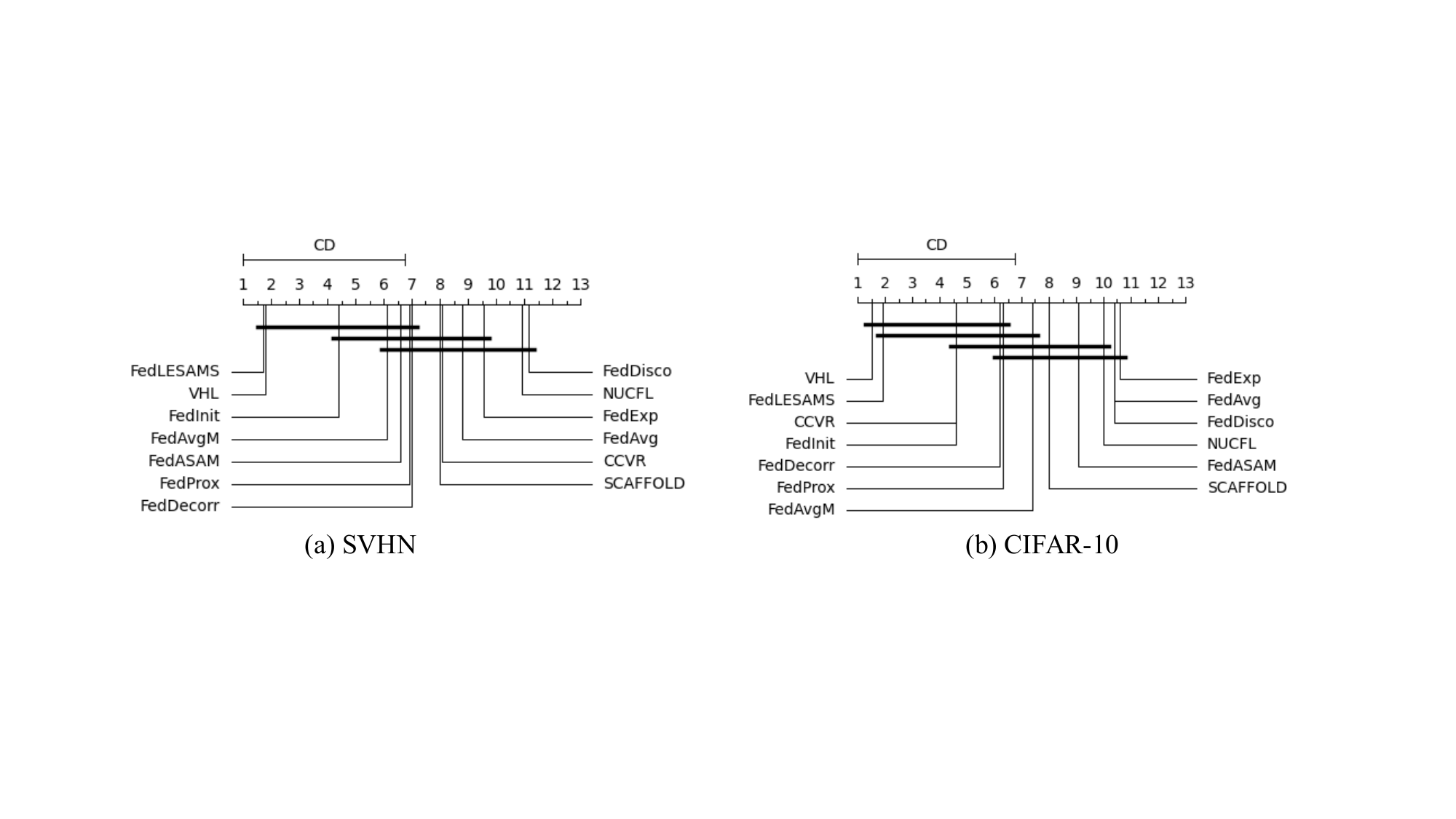}
    \caption{ Nemenyi post-hoc test results on the performance under (a) SVHN and (b) CIFAR-10. Black horizontal lines indicate the critical distance (CD). }
    \label{fig:stat_res}
    \vspace{-0.6cm}
\end{figure}

Distributed environments are inherently complicated, leading to diverse scenarios involving different clients. As a result, dataset heterogeneity varies across settings. To assess the robustness of algorithms under various data distribution scenarios, we raise a previously underexplored question:
\begin{center}
\begin{tcolorbox}[highlightbox]
To what extent do existing methods maintain robustness across diverse scenarios, and by what mechanisms is this robustness achieved?
\end{tcolorbox}
\vspace{-0.2cm}
\end{center}
The results presented in Fig.~\ref{fig:stat_res}, Tabs.~\ref{tab:CIFAR10_10_5_1}, ~\ref{tab:SVHN_10_5_1},  ~\ref{tab:CIFAR100_10_5_1} and ~\ref{tab:CIFAR10_100_10_1} show that most methods exhibit limited robustness, as indicated by the CD intervals that overlap between methods. This overlap highlights the challenges these methods face in adapting to diverse data distributions. Nevertheless, the findings indicate that statistical information from other clients provides valuable insights for refining local updates. Moreover, sharing detailed information risks privacy leakage, which contravenes the core principles of FL, while coarse-grained statistics, such as CCVR~\cite{ccvr}, sharing the mean and covariance of logits, offer only marginal improvements in adaptability across varied scenarios. Thus, determining which statistical information to use and how to leverage it effectively remains a significant challenge.

First, we revisit the objective of FL, wherein each client trains a model on its local data distribution, and these models are aggregated with the expectation of achieving robust generalization across the global data distribution. However, this process often introduces a distribution gap due to data heterogeneity. \textbf{(1) Distribution-Level:} Inspired by~\cite{vhl,he2024gradual}, we introduce a static modification to the goal of local training, enabling implicit learning of the global data distribution through a \textit{privacy-free} surrogate distribution via a two-stage statistical information alignment process, as depicted in Fig.~\ref{fig:overview}(a). \textit{Stage 1}, the local data distribution is aligned with a local surrogate distribution using the local model. \textit{Stage 2}, the local surrogate distribution is aligned with a global surrogate distribution. Through these stages, the divergence between the local and global distributions is effectively bounded, improving generalization while maintaining privacy.

Furthermore, achieving effective distribution alignment becomes difficult when the distribution shift is substantial or the amount of data available per round is limited (e.g., low client sampling rate). \textbf{(2) Gradient-Level:} Drawing inspiration from~\cite{bagdasaryan2020backdoor}, we propose incorporating gradient information from other clients prior to determining the local update direction. This strategy highlights the importance of utilizing insights from other clients’ gradients to dynamically adjust the local optimization path at each step, ensuring a more globally consistent update direction. Moreover, theoretical analysis reveals that careful parameter tuning of this gradient term can further rectify the update direction, resulting in a measurable reduction in the global model’s loss function. Building on this two-level alignment strategy, we introduce \method, a synergistic framework that integrates goal and path coordination, designed to ensure robustness in label-distribution-agnostic scenarios. Extensive experiments conducted on three benchmark datasets confirm the effectiveness of \method, showcasing its superior performance across diverse scenarios.

Our contributions are summarized as follows:
\begin{enumerate}[label=\textbullet]
    \item We comprehensively evaluate existing federated learning methods designed to address heterogeneity, showing that most exhibit limited robustness across diverse distribution partitions. Our findings highlight the significant potential of leveraging statistical information from other clients to enhance performance.
    \item Motivated by these insights, we attempt to propose a new framework to adapt to various heterogeneity scenarios called \textbf{\method}, which incorporates statistical information from other clients from two perspectives. At the distribution level, we constrain local models to learn data distribution aligned with the global distribution using surrogate information. Concurrently, we refine the update direction at each step based on other client information at the gradient view, enabling a more holistic optimization process.
    \item Extensive experiments with our framework across diverse settings and benchmark datasets demonstrate the efficacy of \method. Our results show that \method~surpasses existing methods, achieving robust and SOTA performance across various distribution splits.
\end{enumerate}

\section{Related Work}

Federated Learning (FL) enables localized data processing to preserve sensitive information, but this often results in data heterogeneity due to diverse data collection conditions. 
To mitigate this, several strategies have been developed to align local optimization with global objectives. For instance, FedProx~\citep{fedprox} incorporates a proximal term to limit divergence between local and global parameters, ensuring more stable updates. Similarly, SCAFFOLD~\citep{scaffold} introduces a control variate to correct local updates, while PAdaMFed~\citep{padamfed} enhances convergence by integrating gradients and control terms from consecutive rounds to better estimate the global optimization direction.
Another promising approach focuses on achieving a flatter loss landscape to enhance model robustness against heterogeneity. Techniques such as FedSAM~\citep{fedsam},  MoFedSAM~\citep{qu2022generalized}, FedGAMMA~\citep{fedgamma}, and FedLESAM~\citep{fedlesam} perturb local parameters before updates, improving generalization and robustness, as supported by sharpness-aware minimization principles~\citep{foretsharpness}.
Sharing information among participants has garnered increasing attention. FedProto~\citep{fedproto} shares class prototypes instead of model parameters, preserving privacy while inspiring subsequent approaches such as FedProK~\citep{fedprok} and PILORA~\citep{pilora}. Additionally, generative models~\citep{sun2024heterogeneous, liang2025diffusion} and local statistical methods~\citep{ccvr} have been effectively employed to address heterogeneity challenges, enhancing model robustness across diverse data distributions.
However, these methods may raise additional privacy concerns, prompting exploration of privacy-preserving mechanisms~\cite{zhaodoes}. 

On the server side, optimizing client selection strategies~\citep{chenoptimal, luo2022tackling, fraboni2021clustered} is crucial for minimizing communication overhead by prioritizing clients most relevant to the global model, thereby improving efficiency.
Advanced aggregation techniques further address heterogeneity. 
FedDisco~\citep{feddisco} employs discrepancy-aware weights that consider factors beyond mere data size, while other works revisit aggregation protocols for improved performance~\citep{li2023revisiting}. Some methods design different aggregation methods on the server side, such as FedMR~\cite{hu2024aggregation}. Server-side generative approaches, such as data-free knowledge distillation~\citep{zhu2021data,wang2024aggregation}, have also been explored to mitigate heterogeneity, offering a complementary perspective to client-side innovations. Besides, FL has also received a lot of attention in many areas, e.g, healthcare~\cite{zhang2020collaborative,liu2025pcrfed} and transportation~\cite{you2024unraveling}.

\section{Preliminary}

\textbf{Federated Learning.} In a typical federated learning setup~\cite{fedavg,zhang2024robust}, data samples are distributed across a set of $K$ participating clients $\mathcal{S} = \{1, 2, \dots, K\}$, without being centralized on a server.
Each client $k \in \mathcal{S}$ maintains a local model parameterized by $\model_k$ and collaboratively contributes to training a global model parameterized by $\model$. For each client $k$, the $i$-th data sample $\sample_{k,i} := (\mathbf{x}_{k,i}, y_{k,i})$, is drawn from its private local distribution $\mathcal{D}_k$. Then, the federated learning process can thus be formulated as the following optimization problem:
\begin{equation}
    \label{eq:general_fl_obj}
    \model^{*} = \mathop{\arg\min}_{\model\in\mathbb{R}^{|\model|}}F(\model):=\sum_{k=1}^{K}p_k F_k(\model_k),
\end{equation}
where $p_k$ represents the weight of client $k$. This equation captures the goal of FL, which seeks to get the optimal global model $\model^*$ that minimizes the global objective $F(\model)$ by optimizing local objectives $F_k(\model_k)$ for each client, expressed as:
\begin{equation}
    F_k(\model_k):=\mathbb{E}_{\sample_k\sim \mathcal{D}_k}\left[\ell(\model_k;\sample_k)\right],
\end{equation}
where $\ell(\cdot,\cdot)$ is the loss function, e.g., cross-entropy in a supervised learning task.
The local update at $t$-th round usually follows the conventional stochastic gradient descent (SGD) with $\eta_l$ denoting the local step size as follows:
\begin{equation}
    \label{eq:general_local_update}
    \model_k^{t+1} = \model_k^{t} - \eta_l \nabla F_k(\model_k^{t};\sample_k).
\end{equation}
The locally updated models are uploaded to the server, which derives a new global model through an aggregation mechanism $\text{AGG}(\cdot)$ based on the $t$-th round collected local data (e.g., Eq(\ref{eq:general_fl_obj})), global model $\model^t$, and the global step size $\eta_g$:
\begin{equation}
    \model^{t+1} = \text{AGG}(\eta_g;\model^t;\{ \model_k^{t+1}\}_{k\in\mathcal{S}_t}),
\end{equation}
where $\mathcal{S}_t$ denotes the set of clients participating in the $t$-th training round.

\begin{definition}[Wasserstein Distance]
\label{def:w_distance}
Consider two probability distributions \( \mu \) and \( \nu\) over the data space \( \mathcal{X} \times \mathcal{Y} \), where \( \mathcal{X} \subset \mathbb{R}^d \) is the feature space and \( \mathcal{Y} \) is the label space. Given a distance metric \( d \) on \( \mathcal{X} \times \mathcal{Y} \), the \( p \)-Wasserstein distance between \( mu \) and \( nu \), for any \( p \geq 1 \), is defined as:
\[
W_p(\mu, \nu) := \left( \inf_{\gamma \in \Gamma(\mu, \nu)} \int_{ (x, y) \sim \mu , (x', y') \sim \nu } d((x, y), (x', y'))^p \, d\gamma((x, y), (x', y')) \right)^{1/p},
\]
where \( \Gamma(\mu, \nu) \) is the set of all joint distributions \( \gamma \) with marginals \( \mu \) and \( \nu \), respectively.
\end{definition}

\section{Methodology}

\begin{figure}[t]
    \centering
    \subfigure[Heterogeneous distribution 1]{
        \includegraphics[width=0.31\textwidth]{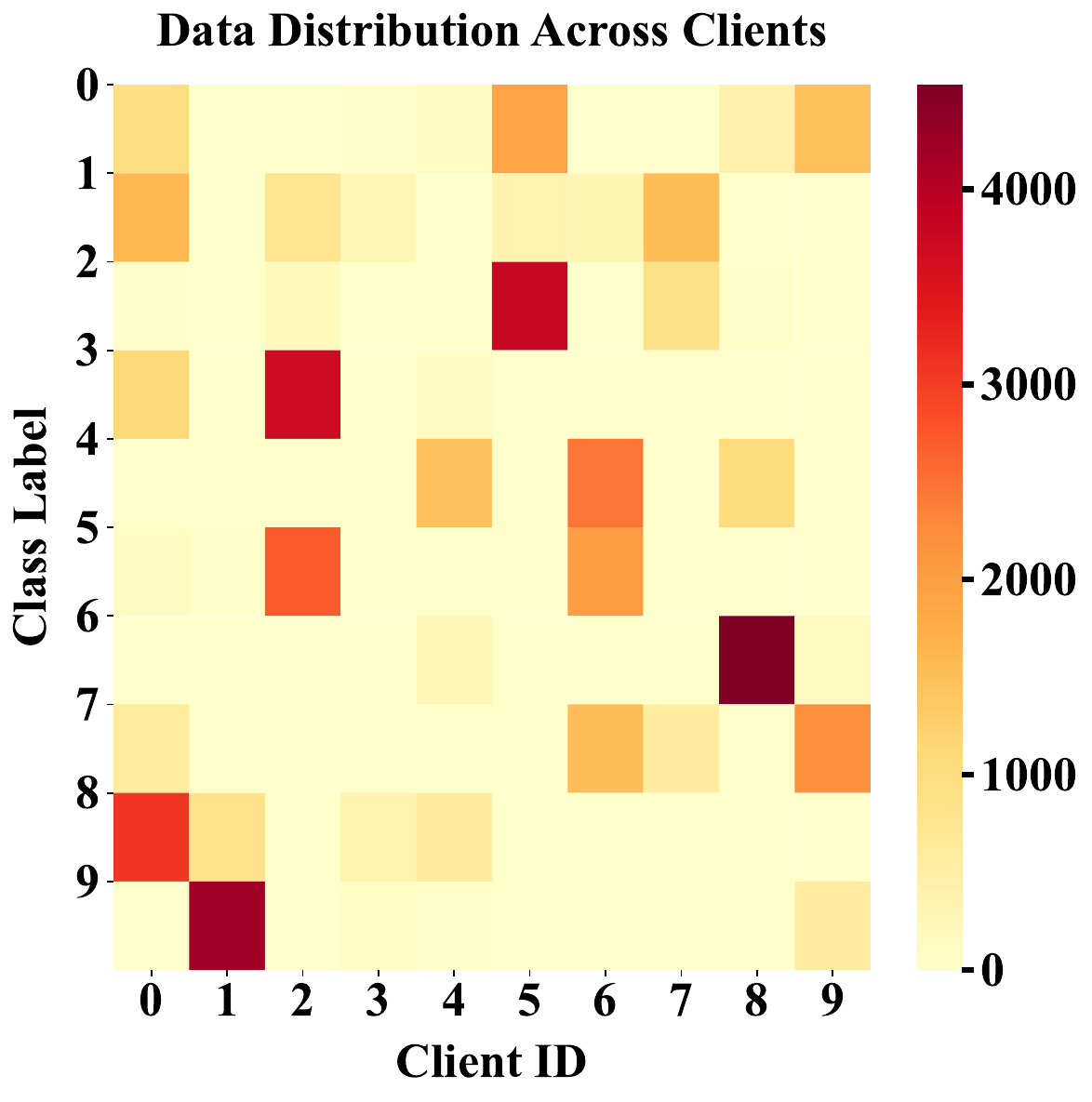}\label{fig:dis_scen_1}
    }
    \hspace{0.5pt}
    \subfigure[Heterogeneous distribution 2]{
        \includegraphics[width=0.31\textwidth]{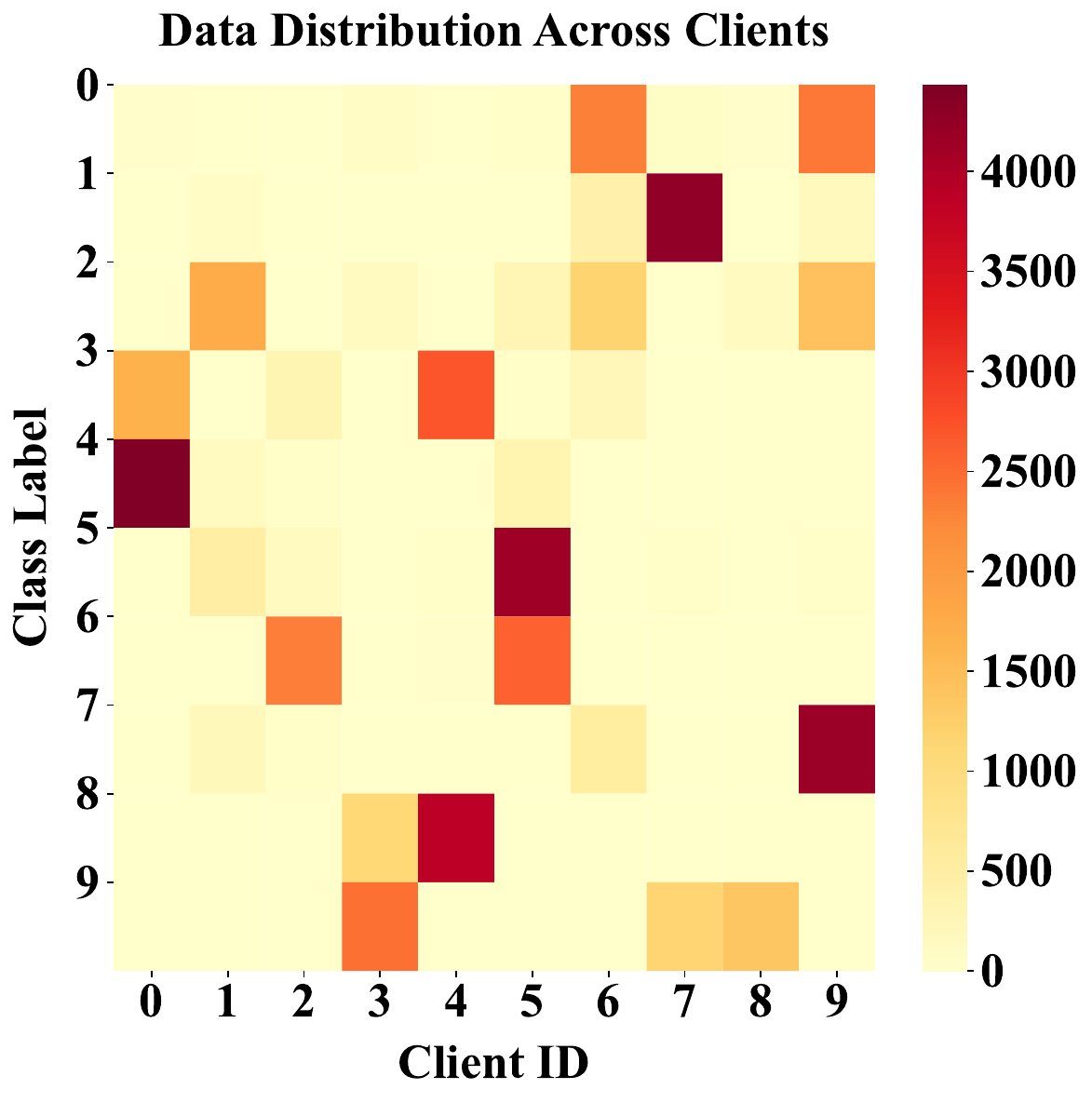}\label{fig:dis_scen_2}
    }
    \hspace{0.5pt}
    \subfigure[Distribution divergence measure]{
        \includegraphics[width=0.31\textwidth]{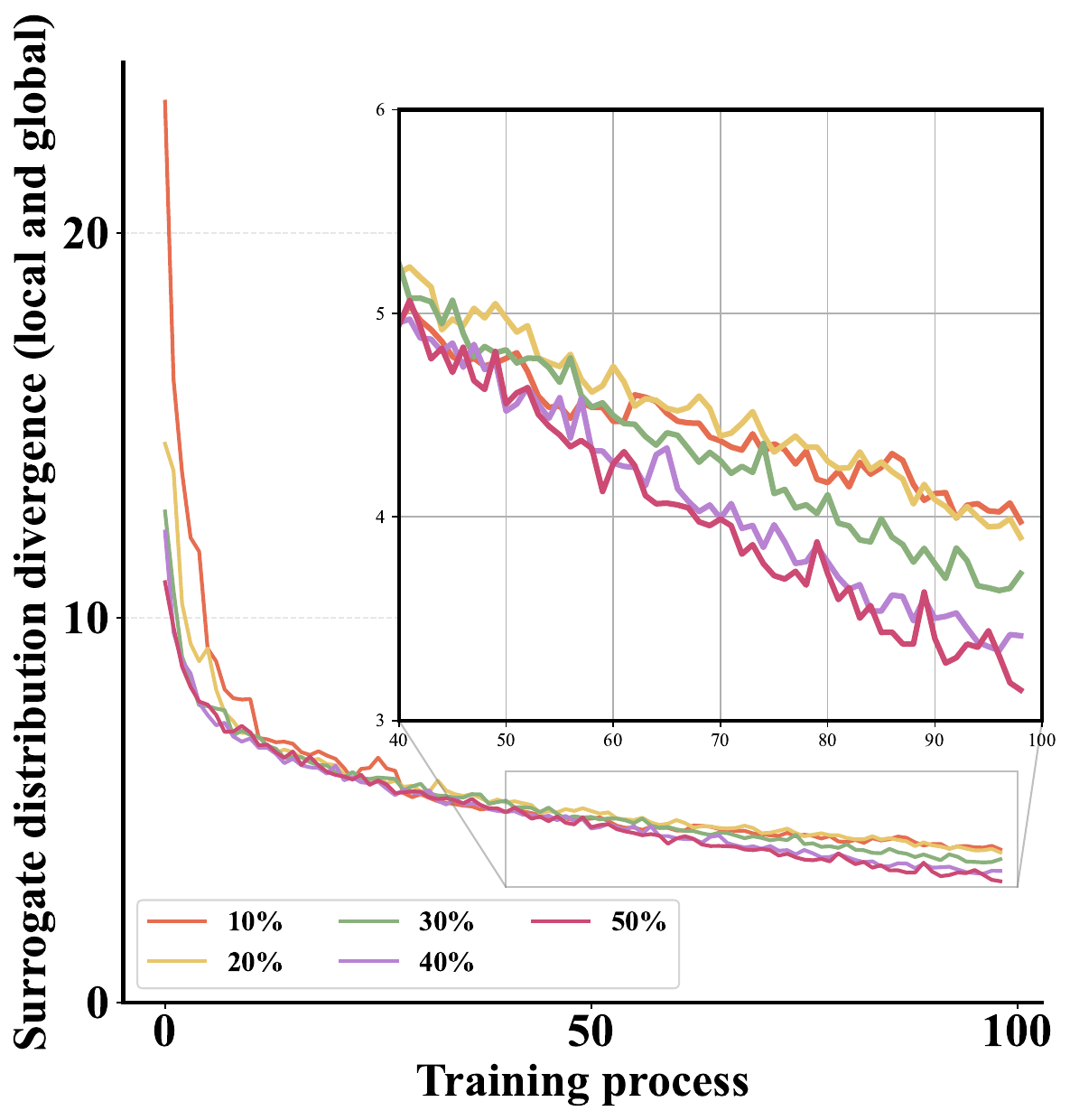}\label{fig:diver_measure}
    }
    \vspace{-0.3cm}
    \caption{(a) and (b) are examples of data distribution scenarios generated using the Dirichlet partition method under the CIFAR-10 dataset across 10 clients. All scenarios use the same heterogeneity control factor of $\alpha=0.1$, but vary the random seed to produce different heterogeneous distributions. (c) The divergence between local and global surrogate distributions is computed as the FL training proceeds with different ratios of client sampling, also means the proportion of the data that participates in the global update at each federated training round.
    The divergence is computed every 5 rounds.}
    \vspace{-0.6cm}
\end{figure}

This section details our proposed ``\textit{\textbf{\method}}'' framework. We begin by outlining the motivation behind \method (Sec.~\ref{sec:motivation}). Subsequently, we describe how statistical information is leveraged from two perspectives: the distribution view (Sec.~\ref{sec:static_global_obj}) and the gradient perspective (Sec.~\ref{sec:dy_path_rec}).

\subsection{Motivation}
\label{sec:motivation}
Performance degradation in FL stems from the divergence between local and global data distributions. Training on shifted local distributions $\mathcal{D}_k$ while expecting generalization on the global i.i.d. distribution $\mathcal{D}_g$ naturally creates a distribution gap. This can be expressed as:
\begin{equation}
    \model^* = \mathop{\arg\min}_{\model}\mathbb{E}_{\mathcal{D}_g}\left[F(\text{AGG}(\model_k))\right], \text{where } \model_k = \mathop{\arg\min}_{\model_k}\mathbb{E}_{\mathcal{D}_k}\left[F_k(\model_k,\sample_k)\right].
\end{equation}
Consequently, this divergence results in a performance gap with respect to the global distribution. The distribution shift can be quantified using the $p$-Wasserstein distance based on Definition~\ref{def:w_distance}.

To address this gap, existing methods often share distribution-related information. For example, FedProto~\cite{fedproto} shares class-specific average embeddings as prototypes, while FLGAN~\cite{ma2023flgan} uses synthetic data from a Conditional GAN (CGAN)~\cite{cao2022improving}. 
However, these approaches, which involve sharing raw data-derived information, introduce privacy risks. Additionally, VHL~\cite{vhl} employs domain adaptation~\cite{tan2025source} with virtual homogeneity data, yet this data still exhibits distribution shifts, as virtual data is trained on shifted models at each client.

Drawing on our evaluations and prior work~\cite{vhl,he2024gradual}, we introduce a privacy-preserving surrogate distribution (e.g., sampled from distinct Gaussian distributions) to tackle the distribution gap in federated learning (FL). This surrogate improves FL performance by minimizing the upper bound of the global model's generalization error through a two-stage alignment process. However, alignment quality varies with the volume of training data per round. For instance, in low client participation scenarios involving approximately $10\%$ data per round, the surrogate distribution gap between global and local models (red line in Fig.~\ref{fig:diver_measure}) exceeds that observed with $50\%$ data participation (pink line). This insight prompts us to investigate the parameter space, where model updates gain a more global perspective by leveraging statistical information that partially reflects the data distribution. Unlike prior approaches that focus on a single aspect, our method coordinates both distribution and parameter spaces, enhancing robustness across diverse FL heterogeneous scenarios.

\begin{figure}[t]
    \centering
    \includegraphics[width=1.0\linewidth]{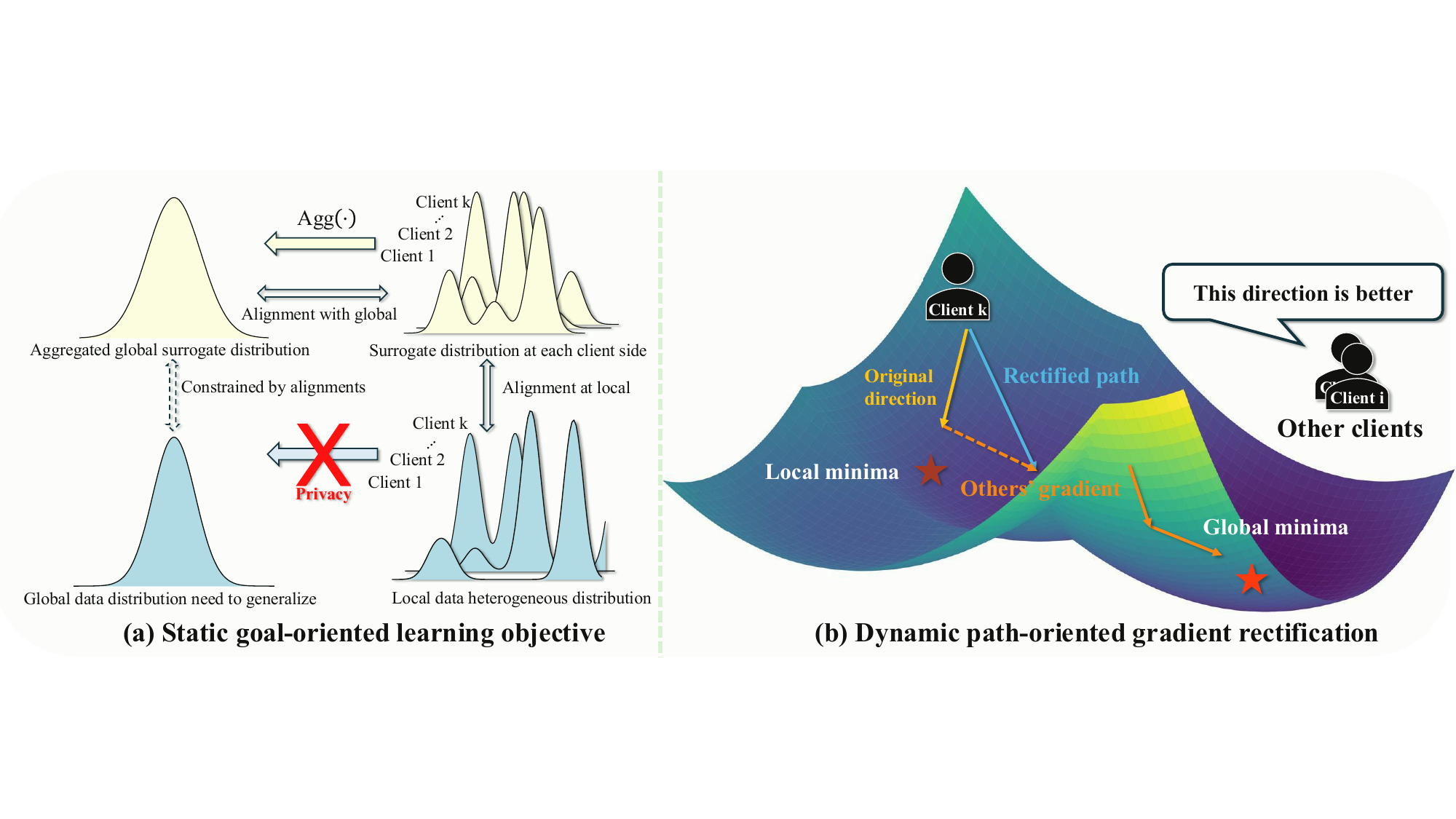}
    \caption{(a) Static goal-oriented objective. This objective is composed of two stages: local distribution aligns with local surrogate distribution (Alignment at local), and local surrogate distribution aligns with global surrogate distribution (Alignment with global).  (b) Dynamic path-oriented rectification corrects the original update direction by the gradient of other clients for a new update path. }
    \label{fig:overview}
    \vspace{-0.6cm}
\end{figure}

\subsection{Static Goal-oriented Objective}
\label{sec:static_global_obj}
Building on the motivation outlined above, we propose a \textbf{static goal-oriented objective function} that changes each client's learning goal to better generalize on the global distribution $\mathcal{D}_g$ by a two-stage alignment (as depicted in Fig.~\ref{fig:overview}(a)), rather than solely optimizing for the local distribution $\mathcal{D}_k$. Firstly, we give the formal definition of the surrogate dataset in Definition~\ref{def:surrogate_data}.
\begin{definition}[Surrogate Dataset]
\label{def:surrogate_data}
\method~assigns a distinct Gaussian distribution to each class in the original dataset. The surrogate dataset $\mathcal{D}^s$ is then generated by sampling from these class-specific Gaussian distributions, with each client holding the same surrogate dataset $\mathcal{D}^s$.
\end{definition}

Then, we decompose the model parameters $\model$ into a classifier $\classifier$ and a feature extractor $\featextra$ (where $\model=\classifier\circ\featextra$) cause we perform the alignment in the feature space. $\mathcal{P}_k$ represents the probability distribution of the local data in the feature space, induced by applying the feature extractor $\featextra_k$ to samples $\sample_k$, where $\sample_k \sim \mathcal{D}_k$. Following, $\mathcal{P}_k^s$ is the distribution of $k$-th local surrogate distribution with \(\mathcal{D}^s\), and the global surrogate distribution $\mathcal{P}_s$  is aggregated at the server side. 
Specifically, we give the formal definition of local surrogate distribution and global surrogate distribution as follows:
\begin{definition}[Local Surrogate Distribution]
\label{def:local_surrogate_dis}
For a client $k$ in a federated learning system, the local surrogate distribution $\mathcal{P}_{k}^{s}$ is conceptually defined as the set of feature embeddings obtained by passing each data point from the surrogate dataset through the $k$-th local model's feature extractor $\boldsymbol{\psi}_k$ at the client side.
\end{definition}
In the implementation, to ensure privacy and reduce communication overhead, what is transmitted to the server is a compressed, privacy-preserving statistical representation of the surrogate distribution. Typically,  these embeddings are then aggregated $\boldsymbol{\mathcal{E}}_{k,c}^{s}=\frac{1}{|\mathcal{D}^s_c|}\sum_{\boldsymbol{\xi}^s_c\sim\mathcal{D}^s}\boldsymbol{\psi}_k(\boldsymbol{\xi}_s^c)$ to form a set of class-wise prototype vectors (e.g., 512-dimensional), with each prototype representing a specific class $c$. This distribution serves as a compact proxy for the local surrogate distribution.
\begin{definition}[Global Surrogate Distribution]
\label{def:global_surrogate_dis}
At the server side, the global surrogate distribution $\mathcal{P}^{s}$ is defined as the set of feature embeddings obtained by passing each data point from the surrogate dataset through the global extractor $\boldsymbol{\psi}$.
\end{definition}

To alleviate the burden of the global surrogate distribution compute, the global surrogate distribution is replaced with the aggregation of selected local surrogate prototypes $ \boldsymbol{\mathcal{E}}^{s}_c=\sum_{k\in\mathcal{S}_t}\boldsymbol{\mathcal{E}}_{k,c}^s$ at round $t$.
Furthermore, we introduce the following theorem to formalize the new objective and quantify the alignment between the local and global distributions, which establishes bounds on the Wasserstein-1 distance between the distribution gap we analyzed before.

\begin{theorem}
    Given the global feature distribution \( \mathcal{P}_g \), the local feature distribution \(\mathcal{P}_k\), the surrogate distributions \( \mathcal{P}^s \) (global) and \( \mathcal{P}^{s}_k \) (local for the \( k \)-th client) over their corresponding data space. Suppose there exists \(\kappa\geq 0\) such that \(W_1(\mathcal{P}_{k,\mathcal{D}},\mathcal{P}_{g,\mathcal{D}})\leq \kappa\) under distribution \(\mathcal{D}\). If the following conditions hold:
\[
W_1(\mathcal{P}^{s}_k, \mathcal{P}_k) \leq \epsilon_1, \quad W_1(\mathcal{P}^{s}_k, \mathcal{P}^s) \leq \epsilon_2,
\]
where \( W_1 \) is the Wasserstein-1 distance as defined in Definition~\ref{def:w_distance}.  then the Wasserstein-1 distance between \( \mathcal{P}_g \) and \( \mathcal{P}^s \) is bounded as:
\[
W_1(\mathcal{P}_g, \mathcal{P}^s) \leq \epsilon_1 + \epsilon_2+\kappa.
\]
\end{theorem}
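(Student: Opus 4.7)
The plan is to exploit the fact that the Wasserstein-1 distance is a bona fide metric on the space of probability measures with finite first moment, so in particular it satisfies the triangle inequality and is symmetric. My strategy is to chain three intermediate distances from $\mathcal{P}_g$ to $\mathcal{P}^s$ via the two bridge distributions $\mathcal{P}_k$ and $\mathcal{P}^s_k$, and then substitute the three assumed bounds one-to-one.

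Concretely, I would first apply the triangle inequality once to split off the heterogeneity term,
\begin{equation*}
W_1(\mathcal{P}_g, \mathcal{P}^s) \leq W_1(\mathcal{P}_g, \mathcal{P}_k) + W_1(\mathcal{P}_k, \mathcal{P}^s),
\end{equation*}
and then a second time to route the remaining term through the local surrogate distribution,
\begin{equation*}
W_1(\mathcal{P}_k, \mathcal{P}^s) \leq W_1(\mathcal{P}_k, \mathcal{P}^s_k) + W_1(\mathcal{P}^s_k, \mathcal{P}^s).
\end{equation*}
Substituting $W_1(\mathcal{P}_g, \mathcal{P}_k) \leq \kappa$ (using symmetry of $W_1$ together with the hypothesis on $W_1(\mathcal{P}_{k,\mathcal{D}}, \mathcal{P}_{g,\mathcal{D}})$), $W_1(\mathcal{P}_k, \mathcal{P}^s_k) \leq \epsilon_1$, and $W_1(\mathcal{P}^s_k, \mathcal{P}^s) \leq \epsilon_2$ then yields the claimed three-term bound $\epsilon_1 + \epsilon_2 + \kappa$.

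The main obstacle is less technical than notational: one must verify that the four distributions $\mathcal{P}_g, \mathcal{P}_k, \mathcal{P}^s_k, \mathcal{P}^s$ are all defined on a common measurable space (the feature space induced by the relevant extractors) so that the triangle inequality is meaningful, and reconcile the slight notational mismatch between the hypothesis, which writes $\mathcal{P}_{k,\mathcal{D}}$ and $\mathcal{P}_{g,\mathcal{D}}$, and the conclusion, which writes $\mathcal{P}_g$ and $\mathcal{P}^s$. Once these are identified as the pushforwards of the respective data distributions through the appropriate feature extractors, no further calculation is required. If one wanted a more explicit argument instead of appealing to the metric axioms, one could construct a coupling between $\mathcal{P}_g$ and $\mathcal{P}^s$ by gluing together optimal couplings for the three pairs (via the gluing lemma for couplings) and bounding the resulting transport cost by the sum of the individual costs; this would give the same bound without invoking the triangle inequality as a black box.
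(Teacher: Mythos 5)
Your two applications of the triangle inequality are fine as far as they go, and the second one (routing $W_1(\mathcal{P}_k,\mathcal{P}^s)$ through $\mathcal{P}^s_k$ to get $\epsilon_1+\epsilon_2$) is exactly the first half of the paper's proof. The gap is in the step you dismiss as ``less technical than notational'': you substitute $W_1(\mathcal{P}_g,\mathcal{P}_k)\leq\kappa$ directly from the hypothesis, but the hypothesis does not say that. The quantity $W_1(\mathcal{P}_{k,\mathcal{D}},\mathcal{P}_{g,\mathcal{D}})\leq\kappa$ bounds the distance between the pushforwards of a \emph{single} data distribution $\mathcal{D}$ through the two \emph{different} feature extractors $\boldsymbol{\psi}_k$ and $\boldsymbol{\psi}$ (this is what the paper's Lemma~\ref{lemma:feature_extractor_discrepancy} establishes, with $\kappa=L\Delta_d$). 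By contrast, $\mathcal{P}_g$ and $\mathcal{P}_k$ differ in \emph{both} the extractor and the underlying data distribution: $\mathcal{P}_k$ is $\boldsymbol{\psi}_k$ pushed through the local data $\mathcal{D}_k$, while $\mathcal{P}_g$ is $\boldsymbol{\psi}$ pushed through the global data. Under data heterogeneity $\mathcal{D}_k$ can be arbitrarily far from the global distribution, so $W_1(\mathcal{P}_g,\mathcal{P}_k)$ is not controlled by $\kappa$, and your first triangle-inequality link breaks.

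The paper closes this gap with two moves you would need to add. First, it introduces the intermediate object $\mathcal{P}_{g,k}$, the global extractor $\boldsymbol{\psi}$ applied to the \emph{local} data $\mathcal{D}_k$; instantiating the hypothesis with $\mathcal{D}=\mathcal{D}_k$ legitimately gives $W_1(\mathcal{P}_{g,k},\mathcal{P}_k)\leq\kappa$ (same data, different extractors), whence $W_1(\mathcal{P}_{g,k},\mathcal{P}^s)\leq\kappa+\epsilon_1+\epsilon_2$ for every $k$. Second, since $\mathcal{P}_g=\frac{1}{K}\sum_{k}\mathcal{P}_{g,k}$, it builds a mixture coupling $\gamma=\frac{1}{K}\sum_k\gamma_k$ with $\gamma_k\in\Pi(\mathcal{P}_{g,k},\mathcal{P}^s)$ and uses the resulting convexity-type bound
\begin{equation*}
W_1(\mathcal{P}_g,\mathcal{P}^s)\;\leq\;\frac{1}{K}\sum_{k=1}^{K}W_1(\mathcal{P}_{g,k},\mathcal{P}^s)\;\leq\;\kappa+\epsilon_1+\epsilon_2
\end{equation*}
to pass from the per-client bounds to the global one. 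Your closing remark about gluing optimal couplings is in the right spirit, but it must be applied to this averaged decomposition, not to the chain $\mathcal{P}_g\to\mathcal{P}_k\to\mathcal{P}^s_k\to\mathcal{P}^s$ for a single fixed $k$.
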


\begin{remark}
This theorem establishes a key relationship between local and global feature distributions. 
Specifically, suppose each client's local surrogate feature distribution $\mathcal{P}^s_k$ closely approximates its true local feature distribution $\mathcal{P}_k$ within a tolerance of $\epsilon_1$ (\textit{Stage 1}). 
Additionally, assume $\mathcal{P}^s_k$ aligns with the global surrogate feature distribution $\mathcal{P}^s$ within a tolerance of $\epsilon_2$ (\textit{Stage 1}). 
Furthermore, let the local and global feature extractors produce similar outputs for identical data, within a tolerance of $\kappa$. 
Under these conditions, the global model’s feature distribution $\mathcal{P}_g$ will closely resemble $\mathcal{P}^s$ (Detailed proof can be found in the Appendix~\ref{appendix:proof}).
\end{remark}

The bound $\epsilon_1 + \epsilon_2 + \kappa$ ensures that a model trained on surrogate data generalizes effectively to the true global data. 
In practice, $\epsilon_1$ and $\epsilon_2$ can be optimized using distribution-matching losses, while $\kappa$ can be minimized through parameter regularization. 
Accordingly, our local optimization goal of each client can be formulated as follows:
\begin{equation}
\label{eq:our_obj_final}
F_k(\model_k) := \mathbb{E}_{\sample_k\sim\mathcal{D}_k}\ell(\model_k;\sample_k) +  \mathbb{E}_{\sample_s\sim\mathcal{D}^s}\ell(\model_k;\sample_s)
 + \lambda_1 d (\mathcal{P}_k,\mathcal{P}^s_k) + \lambda_2 d(\mathcal{P}^s,\mathcal{P}^s_k)+\lambda_3\|\model_k\|^2, 
\end{equation}
where the first two terms enhance the generalization of the local model $\model_k$ on both the local data distribution $\mathcal{D}_k$ and the surrogate data distribution $\mathcal{D}^s$. The function $d(\cdot, \cdot)$ quantifies the distance between distributions, such as the Wasserstein-1 distance. The terms weighted by hyperparameters $\lambda_1$, $\lambda_2$, and $\lambda_3$ control the trade-off between terms, which are tuned to optimize performance.

\subsection{Dynamic Path-oriented Rectification}
\label{sec:dy_path_rec}
To overcome the limitations of scarce data involved every round in achieving distribution alignment (demonstrated by Fig.~\ref{fig:diver_measure}), we develop another technique, \textbf{dynamic path-oriented gradient rectification}, to bolster model robustness. Our motivation draws a high-level concept from the model replacement strategy in the backdoor of federated models~\cite{bagdasaryan2020backdoor}. 
In this scenario, the malicious client exploits a deep understanding of the aggregation mechanism and the collective dynamics of benign clients. By precisely predicting the contributions of other clients' updates to the global model, the attacker meticulously designs and scales their malicious update. The key insight is that awareness of the aggregated influence from other clients confers substantial leverage in shaping the global model.

We define the gradient statistical information from other clients in Definition~\ref{def:grad_other}. Then we elaborate on how to utilize this information to improve the local update from a more global perspective at the gradient level  (as depicted in Fig.~\ref{fig:overview}(b)). 
\begin{definition}[Non-Self Gradient at Round $t$ of client $i$, $\delta^t_{\model_i}$]
\label{def:grad_other}
In a FL framework with a client set $\mathcal{K}$, let $\model^{t-1}$ denote the global model parameters at the end of round $t-1$, and $\mathcal{S}_{t-1} \subseteq \mathcal{K}$ the subset of clients selected for round $t-1$ and $|\mathcal{S}_{t-1}|\geq 2$. For each client $k \in \mathcal{K}$, $\Delta^{t-1}_{\model_k}$be the updated information of client $k$ at round $t-1$, where $\Delta^{t-1}_{\model_k}=\model_k^{t}-\model_k^{t-1}$. Let $\eta_g$ and $\eta_l$ denote the global and local update steps, respectively.

For a client $i \in \mathcal{K}$, the Non-Self Gradient at round $t$ of client $i$, denoted $\delta^t_{\model_i}$, is defined as:
\[
\delta^t_{\model_i} = 
-\eta_g \eta_l \frac{1}{|\mathcal{S}_{t-1} \setminus \{i\}|} \sum_{k \in \mathcal{S}_{t-1} \setminus \{i\}} \Delta^{t-1}_{\model_k},
\]
where $\mathcal{S}_{t-1} \setminus \{i\}$ is the set of clients in $\mathcal{S}_{t-1}$ excluding client $i$, and $|\mathcal{S}_{t-1} \setminus \{i\}|$ is its cardinality.
\end{definition}
By integrating this definition, the local client concurrently considers non-self gradient information before computing the update direction, as this information subtly conveys the underlying data distribution from others, which can be expressed as:
\begin{equation}
    \label{eq:dynamic_rec}
    \hat{\mathbf{g}}_{k}^{t+1,e+1} = \nabla F_k(\model_k^{t+1,e}+\lambda_g\frac{\delta^t_{\model_k}}{\|\delta^t_{\model_k}\|}).
\end{equation}
Here, $e$ represents the $e$-th local update iteration within a total of $E$ local epochs per round. The expression $\frac{\delta^t_{\model_k}}{\|\delta^t_{\model_k}\|}$ denotes a unit vector aligned with the direction of $\delta^t_{\model_k}$. We employ the $\lambda_g \frac{\delta^t_{\model_k}}{\|\delta^t_{\model_k}\|}$ instead of $\delta^t_{\model_k}$ to focus exclusively on the update direction from other clients, with the hyperparameter $\lambda_g$ providing adjustable scaling to optimize performance. Lastly, the local model \(\model_k^{t+1,e+1}\) updated by the new rectified path as follows:
\begin{equation}
    \label{eq:dynamic_update}
    \model_k^{t+1,e+1} = \model_k^{t+1,e} - \eta_l \hat{\mathbf{g}}_k^{t+1,e+1}.
\end{equation}
This term is deemed dynamic as the gradient path is adjusted at each update iteration. The local update direction is consistently refined using statistical gradient information from other clients.

\begin{table}[t]
    \caption{Top-1 accuracy of baselines and our method \method~with 5 different heterogeneous scenarios on CIFAR-10, heterogeneity degree $\alpha=0.1$, local epochs $E=1$ and total client number $K=10$.}
    \label{tab:CIFAR10_10_5_1}
    \setlength\tabcolsep{2.5pt}
    \setlength\arrayrulewidth{1.0pt}
    \renewcommand\arraystretch{1.2}
    \resizebox{\linewidth}{!}{
    \begin{tabular}{r||ccc|ccc|ccc|ccc|ccc||c}
    \toprule[1.5pt]
\multicolumn{17}{c}{\cellcolor{greyL}Dataset: CIFAR-10  Heterogeneity Level:$\mathbf{\alpha=0.1}$ Client Number:$\mathbf{K=10}$, Client Sampling Rate: $\mathbf{50\%}$ Total Communication Round:$\mathbf{T=500}$ Local Epochs:$\mathbf{E=1}$}\\
    \midrule[1.5pt]
    Diff Scenario&\multicolumn{3}{c|}{\heterone} & \multicolumn{3}{c|}{\hetertwo} & \multicolumn{3}{c|}{\heterthree} & \multicolumn{3}{c|}{\heterfour} & \multicolumn{3}{c||}{\heterfive} & \\
    \cmidrule{1-17}
     \multicolumn{17}{c}{\cellcolor{greyL}Centralized Training Acc=xxx\%}  \\
    \hline
    & ACC$\uparrow$ & ROUND $\downarrow$& SpeedUp$\uparrow$ & ACC$\uparrow$ & ROUND $\downarrow$& SpeedUp$\uparrow$  & ACC$\uparrow$ & ROUND $\downarrow$& SpeedUp$\uparrow$ &  ACC$\uparrow$ & ROUND $\downarrow$& SpeedUp$\uparrow$ &  ACC$\uparrow$ & ROUND $\downarrow$& SpeedUp$\uparrow$  &  \\
    \hline
\cmidrule(lr){1-17}
Methods&\multicolumn{3}{c|}{ Target Acc=84\%}&\multicolumn{3}{c|}{ Target Acc=79\%}&\multicolumn{3}{c|}{Target Acc=80\%}&\multicolumn{3}{c|}{Target Acc=68\%}&\multicolumn{3}{c||}{Target Acc=65\%}&Mean Acc$\pm$ Std \\ 
\cmidrule(lr){1-17}
    FedAvg & $84.21$&$340$&$1.0\times$&$79.13$&$301$&$1.0\times$&$80.63$&$416$&$1.0\times$&$68.62$&$189$&$1.0\times$&$65.86$&$415$&$1.0\times$&$75.69 \pm 7.99$\\
    FedAvgM &$85.74$&$181$&$1.9\times$&$81.78$&$200$&$1.5\times$&$81.35$&$310$&$1.3\times$&$70.15$&$348$&$0.5\times$&$67.51$&$233$&$1.8\times$&$77.31 \pm 7.98$ \\

    FedProx &$86.13$&$181$&$1.9\times$&$83.12$&$179$&$1.7\times$&$82.37$&$219$&$1.9\times$&$76.62$&$175$&$1.1\times$&$68.81$&$168$&$2.5\times$& $79.41 \pm 6.85$\\
    SCAFFOLD & $82.39$&None&None&$80.78$&$412$&$0.7\times$&$79.08$&None&None&$71.83$&$193$&$1.0\times$&$68.43$&$175$&$2.4\times$&$76.50 \pm 6.05$\\
    CCVR & $84.30$&$391$&$0.9\times$&$83.28$&$136$&$2.2\times$&$83.20$&$192$&$2.2\times$&$76.57$&$53$&$3.6\times$&$74.72$&$\mathbf{66}$&$\mathbf{6.3\times}$& $80.41 \pm 4.42$\\
    VHL &$89.07$&$\mathbf{116}$&$\mathbf{2.9\times}$&$87.20$&$131$&$2.3\times$&$86.83$&$210$&$2.0\times$&$84.30$&$89$&$2.1\times$&$81.05$&$160$&$2.6\times$&$85.69 \pm 3.10$ \\
    FedASAM &$86.49$&$270$&$1.3\times$&$81.99$&$211$&$1.4\times$&$80.45$&$310$&$1.3\times$&$73.11$&$188$&$1.0\times$&$66.68$&$348$&$1.2\times$&$77.74 \pm 7.84$  \\
    FedExp & $84.00$&$270$&$1.3\times$&$79.25$&$211$&$1.4\times$&$79.60$&None&None&$71.55$&$188$&$1.0\times$&$66.66$&$315$&$1.3\times$&$76.21 \pm 6.97$\\
    FedDecorr &$85.76$&$339$&$1.0\times$&$84.07$&$244$&$1.2\times$&$81.38$&$358$&$1.2\times$&$73.14$&$181$&$1.0\times$&$73.77$&$212$&$2.0\times$&$79.62 \pm 5.85$\\
    FedDisco &$85.69$&$270$&$1.3\times$&$81.84$&$191$&$1.6\times$&$80.42$&$364$&$1.1\times$&$70.37$&$188$&$1.0\times$&$69.94$&$315$&$1.3\times$&$77.65 \pm 7.11$\\
    FedInit &$86.84$&$339$&$1.0\times$&$83.49$&$244$&$1.2\times$&$80.48$&$414$&$1.0\times$&$69.44$&$318$&$0.6\times$&$68.04$&$175$&$2.4\times$&$77.66 \pm 8.46$ \\
    FedLESAM &$88.80$&$151$&$2.3\times$&$85.52$&$120$&$2.5\times$&$84.24$&$233$&$1.8\times$&$78.99$&$90$&$2.1\times$&$74.18$&$119$&$3.5\times$&$82.35 \pm 5.77$ \\
    NUCFL &$83.76$&None&None&$79.45$&$378$&$0.8\times$&$79.76$&None&None&$68.78$&$210$&$0.9\times$&$65.78$&$487$&$0.9\times$&$75.51\pm7.77$ \\
    \rowcolor{TgreyL}\cellcolor{white}\textbf{\method(Ours)}& $\mathbf{90.31}$&$139$&$2.4\times$&$\mathbf{88.45}$&$\mathbf{119}$&$\mathbf{2.5\times}$&$\mathbf{87.78}$&$\mathbf{158}$&$\mathbf{2.6\times}$&$\mathbf{85.06}$&$\mathbf{89}$&$\mathbf{2.1\times}$&$\mathbf{82.04}$&$137$&$3.0\times$&$\mathbf{86.73 \pm 3.23}$\\
    \bottomrule[1.5pt]
    \end{tabular}}
\end{table}

\section{Experiments}

\begin{table}[t]
    \caption{Top-1 accuracy of baselines and our method \method~with 5 different heterogeneous scenarios on SVHN, heterogeneity degree $\alpha=0.1$, local epochs $E=1$ and total client number $K=10$.}
        \label{tab:SVHN_10_5_1}
    \setlength\arrayrulewidth{1.0pt}
    \setlength\tabcolsep{2.5pt}
    \renewcommand\arraystretch{1.2}
    \resizebox{\linewidth}{!}{
    \begin{tabular}{r||ccc|ccc|ccc|ccc|ccc||c}
    \toprule[1.5pt]
    \multicolumn{17}{c}{\cellcolor{greyL}Dataset: SVHN  Heterogeneity Level:$\mathbf{\alpha=0.1}$ Client Number:$\mathbf{K=10}$, Client Sampling Rate: $\mathbf{50\%}$ Total Communication Round:$\mathbf{T=500}$ Local Epochs:$\mathbf{E=1}$}\\
    \midrule[1.5pt]
    Diff Scenario&\multicolumn{3}{c|}{\heterone} & \multicolumn{3}{c|}{\hetertwo} & \multicolumn{3}{c|}{\heterthree} & \multicolumn{3}{c|}{\heterfour} & \multicolumn{3}{c||}{\heterfive} & \\
    \cmidrule{1-17}
     \multicolumn{17}{c}{\cellcolor{greyL}Centralized Training Acc=84\%}  \\
    \hline
    & ACC$\uparrow$ & ROUND $\downarrow$& SpeedUp$\uparrow$ & ACC$\uparrow$ & ROUND $\downarrow$& SpeedUp$\uparrow$  & ACC$\uparrow$ & ROUND $\downarrow$& SpeedUp$\uparrow$ &  ACC$\uparrow$ & ROUND $\downarrow$& SpeedUp$\uparrow$ &  ACC$\uparrow$ & ROUND $\downarrow$& SpeedUp$\uparrow$  &  \\
    \hline
\cmidrule(lr){1-17}
Methods&\multicolumn{3}{c|}{ Target Acc=85\%}&\multicolumn{3}{c|}{ Target Acc=92\%}&\multicolumn{3}{c|}{Target Acc=92\%}&\multicolumn{3}{c|}{Target Acc=92\%}&\multicolumn{3}{c||}{Target Acc=92\%}&Mean Acc$\pm$ Std \\ 
\cmidrule(lr){1-17}
    FedAvg & $85.61$&$151$&$1.0\times$&$92.56$&$102$&$1.0\times$&$92.73$&$100$&$1.0\times$&$92.08$&$340$&$1.0\times$&$92.11$&$65$&$1.0\times$&$91.02\pm2.72$\\
    FedAvgM &$88.64$&$150$&$1.0\times$&$92.41$&$110$&$0.9\times$&$92.34$&$99$&$1.0\times$&$92.30$&$144$&$2.4\times$&$93.34$&$64$&$1.0\times$&$91.81 \pm 1.82$\\

    FedProx &$88.65$&$102$&$1.5\times$&$93.07$&$107$&$1.0\times$&$93.13$&$154$&$0.6\times$&$92.56$&$104$&$3.3\times$&$92.94$&$64$&$1.0\times$&$92.07 \pm 1.92$\\
    SCAFFOLD &$87.88$&$98$&$1.5\times$&$91.58$&None&None&$92.22$&$75$&$1.3\times$&$91.86$&None&None&$91.74$&None&None&$91.06 \pm 1.79$ \\
    CCVR &$89.77$&$\mathbf{27}$&$\mathbf{5.6\times}$&$91.41$&None&None&$92.68$&$56$&$1.8\times$&$92.08$&$214$&$1.6\times$&$92.65$&$91$&$0.7\times$&$91.72 \pm 1.21$ \\
    VHL & $93.57$&$43$&$3.5\times$&$94.89$&$110$&$0.9\times$&$94.99$&$93$&$1.1\times$&$94.96$&$85$&$4.0\times$&$94.90$&$64$&$1.0\times$&$94.66 \pm 0.61$ \\
    FedASAM &$88.14$&$150$&$1.0\times$&$92.56$&$107$&$1.0\times$&$92.82$&$92$&$1.1\times$&$92.65$&$116$&$2.9\times$&$93.19$&$64$&$1.0\times$&$91.87 \pm 2.10$  \\
    FedExp & $86.24$&$150$&$1.0\times$&$92.11$&$110$&$0.9\times$&$91.87$&None&None&$92.03$&$339$&$1.0\times$&$92.83$&$64$&$1.0\times$&$91.02 \pm 2.70$\\
    FedDecorr &$89.82$&$80$&$1.9\times$&$92.99$&$235$&$0.4\times$&$93.02$&$71$&$1.4\times$&$93.19$&$182$&$1.9\times$&$93.11$&$64$&$1.0\times$&$92.43 \pm 1.46$ \\
    FedDisco &$84.54$&None&None&$92.80$&$100$&$1.0\times$&$92.50$&$99$&$1.0\times$&$91.91$&None&None&$92.83$&$64$&$1.0\times$&$90.92 \pm 3.58$\\
    FedInit &$86.69$&$368$&$0.4\times$&$90.50$&None&None&$93.83$&$180$&$0.6\times$&$93.16$&$134$&$2.5\times$&$93.61$&$64$&$1.0\times$&$91.56 \pm 3.03$ \\
    FedLESAM &$89.29$&$165$&$0.9\times$&$93.62$&$173$&$0.6\times$&$94.86$&$63$&$1.6\times$&$93.78$&$134$&$2.5\times$&$94.71$&$64$&$1.0\times$&$93.25 \pm 2.28$ \\
    NUCFL &$86.49$&$118$&$1.3\times$&$90.53$&None&None&$91.93$&None&None&$91.36$&None&None&$91.92$&None&None&$90.45 \pm 2.28$    \\
\rowcolor{TgreyL}\cellcolor{white}\textbf{\method(Ours)}&$\mathbf{94.20}$&$65$&$2.3\times$&$\mathbf{95.20}$&$\mathbf{67}$&$\mathbf{1.5\times}$&$\mathbf{95.29}$&$\mathbf{49}$&$\mathbf{2.0\times}$&$\mathbf{95.23}$&$\textbf{72}$&$\mathbf{4.7\times}$&$\mathbf{95.08}$&$\mathbf{39}$&$\mathbf{1.7\times}$&$\mathbf{95.00 \pm 0.45}$ \\
    \bottomrule[1.5pt]
    \end{tabular}
    }
    \vspace{-0.6cm}
\end{table}
\begin{table}[t]
    \caption{Top-1 accuracy of baselines and our method \method~with 5 different heterogeneous scenarios on CIFAR-100, heterogeneity degree $\alpha=0.1$, local epochs $E=1$ and total client number $K=10$.}
    \label{tab:CIFAR100_10_5_1}
    \setlength\tabcolsep{2.5pt}
    \setlength\arrayrulewidth{1.0pt}
    \renewcommand\arraystretch{1.2}
    \resizebox{\linewidth}{!}{
    \begin{tabular}{r||ccc|ccc|ccc|ccc|ccc||c}
    \toprule[1.5pt]
\multicolumn{17}{c}{\cellcolor{greyL}Dataset: CIFAR-100  Heterogeneity Level:$\mathbf{\alpha=0.1}$ Client Number:$\mathbf{K=10}$, Client Sampling Rate: $\mathbf{50\%}$ Total Communication Round:$\mathbf{T=500}$ Local Epochs:$\mathbf{E=1}$}\\
    \midrule[1.5pt]
    Diff Scenario&\multicolumn{3}{c|}{\heterone} & \multicolumn{3}{c|}{\hetertwo} & \multicolumn{3}{c|}{\heterthree} & \multicolumn{3}{c|}{\heterfour} & \multicolumn{3}{c||}{\heterfive} & \\
    \cmidrule{1-17}
     \multicolumn{17}{c}{\cellcolor{greyL}Centralized Training Acc=78\%}  \\
    \hline
    & ACC$\uparrow$ & ROUND $\downarrow$& SpeedUp$\uparrow$ & ACC$\uparrow$ & ROUND $\downarrow$& SpeedUp$\uparrow$  & ACC$\uparrow$ & ROUND $\downarrow$& SpeedUp$\uparrow$ &  ACC$\uparrow$ & ROUND $\downarrow$& SpeedUp$\uparrow$ &  ACC$\uparrow$ & ROUND $\downarrow$& SpeedUp$\uparrow$  &  \\
    \hline
\cmidrule(lr){1-17}
Methods&\multicolumn{3}{c|}{ Target Acc=69\%}&\multicolumn{3}{c|}{ Target Acc=69\%}&\multicolumn{3}{c|}{Target Acc=69\%}&\multicolumn{3}{c|}{Target Acc=70\%}&\multicolumn{3}{c||}{Target Acc=66\%}&Mean Acc$\pm$ Std \\ 
\cmidrule(lr){1-17}
    FedAvg&$69.89$&$500$&$1.0\times$&$69.08$&$411$&$1.0\times$&$69.13$&$471$&$1.0\times$&$70.62$&$429$&$1.0\times$&$66.54$&$436$&$1.0\times$&$69.05\pm1.54$   \\
    FedAvgM&$70.10$&$350$&$1.4\times$&$69.44$&$476$&$0.9\times$&$69.69$&$400$&$1.2\times$&$70.52$&$434$&$1.0\times$&$66.85$&$491$&$0.9\times$&$69.32 \pm 1.44$  \\

    FedProx&$69.36$&$460$&$1.1\times$&$67.46$&None&None&$68.31$&None&None&$69.45$&None&None&$65.23$&None&None&$67.96 \pm 1.73$ \\
    SCAFFOLD & $63.78$&None&None&$63.13$&None&None&$64.45$&None&None&$65.32$&None&None&$60.34$&None&None&$63.40\pm1.90$\\
    CCVR &-&-&-&-&-&-&-&-&-&-&-&-&-&-&-&- \\
    VHL &$70.93$&$324$&$1.5\times$&$69.99$&$407$&$1.0\times$&$70.08$&$401$&$1.2\times$&$71.03$&$405$&$1.1\times$&$68.77$&$306$&$1.4\times$&$70.16 \pm 0.91$ \\
    FedASAM &$70.04$&$350$&$1.4\times$&$68.95$&None&None&$69.32$&$389$&$1.2\times$&$70.74$&$434$&$1.0\times$&$66.52$&$428$&$1.0\times$&$69.11 \pm 1.60$ \\
    FedExp &$69.72$&$413$&$1.2\times$&$69.00$&$476$&$0.9\times$&$69.61$&$428$&$1.1\times$&$70.43$&$433$&$1.0\times$&$65.31$&None&None&$68.81 \pm 2.02$ \\
    FedDecorr & $68.91$&None&None&$68.88$&None&None&$68.11$&None&None&$70.19$&$458$&$0.9\times$&$62.93$&None&None&$68.38 \pm 2.31$\\
    FedDisco &$69.50$&$428$&$1.2\times$&$68.55$&None&None&$69.13$&$427$&$1.1\times$&$70.71$&$427$&$1.0\times$&$65.80$&None&None&$68.71 \pm 1.63$ \\
    FedInit &$67.87$&None&None&$66.92$&None&None&$66.82$&None&None&$69.41$&None&None&$63.55$&None&None&$66.91 \pm 2.15$ \\
    FedLESAM&$68.84$&None&None&$67.31$&None&None&$66.57$&None&None&$67.82$&None&None&$65.61$&None&None&$67.23 \pm 1.23$  \\
    NUCFL & $68.29$&None&None&$67.94$&None&None&$65.47$&None&None&$67.81$&None&None&$64.44$&None&None&$66.79 \pm 1.72$ \\
    \rowcolor{TgreyL}\cellcolor{white}\textbf{\method(Ours)}&$\mathbf{71.14}$&$\mathbf{336}$&$\mathbf{1.5\times}$&$\mathbf{70.58}$&$\mathbf{427}$&$\mathbf{1.0\times}$&$\mathbf{70.50}$&$\mathbf{374}$&$\mathbf{1.3\times}$&$\mathbf{71.44}$&$\mathbf{400}$&$\mathbf{1.1\times}$&$\mathbf{69.79}$&$\mathbf{292}$&$\mathbf{1.5\times}$&$\mathbf{70.69 \pm 0.64}$  \\
    \bottomrule[1.5pt]
    \end{tabular}}
    \vspace{-0.6cm}
\end{table}

We organize this section as follows: (a) Detailed description of all the evaluated methods in our comprehensive evaluation (Sec~\ref{sec:exp_baselines}); (b) The implementation and experimental settings we followed (Sec~\ref{sec:exp_setting}); (c) The main results and observations to demonstrate the efficacy of \method~(Sec~\ref{sec:main_res}); (d) Ablation study on two modules of \method~ (Sec.~\ref{sec:abalation_study}).
\subsection{Evaluated Details}
\label{sec:exp_baselines}
\textbf{Compared Methods: }We evaluate the FL methods that alleviate data heterogeneity from different perspectives. 1) FedAvg~\cite{fedavg} is the fundamental work in FL; 2) FedAvgM~\cite{fedavgm} accumulate model updates with momentum; 3) FedProx~\cite{fedprox} constrain the divergence between local and global models; 4) SCAFFOLD~\cite{scaffold} use extra term to correct the local gradients; 5) CCVR~\cite{ccvr} share statistical logits to sample rectification data at the server side; 6) VHL~\cite{vhl} use virtual homogeneity data to constrain model by domain adaptation. 7) FedASAM~\cite{fedsam} and FedLESAMS~\cite{fedlesam} use the insight of sharpness aware minimization; 8) FedExp~\cite{fedexp} is inspired by Projection Onto Convex Sets (POCS) to select global step size adaptively; 9) FedDecorr~\cite{feddecorr1, feddecorr2} constrain the feature covariance matrix due to the dimension collapse; 10) FedDisco~\cite{feddisco} adjusts the aggregation weight based on discrepancy between clients; 11) FedInit~\cite{fedinit} improves the local consistency by related initialization; 12) NUCFL~\cite{nucfl} calibrates local classifier after local training.

\textbf{Datasets, Models and Metrics:} Following~\cite{huang2024federated, vhl, moon}, we evaluate our method on three standard datasets: CIFAR-10, CIFAR-100~\cite{krizhevsky2009learning}, and SVHN~\cite{netzer2011reading}. In line with prior work~\cite{moon, pieri2023handling}, we use ResNet-18 for CIFAR-10 and SVHN, and ResNet-50 for CIFAR-100. We report three metrics related to communication efficiency and performance, building on previous work~\cite{vhl,fedfed}: (1) ``ACC'': the best accuracy achieved during training, with the target accuracy set as the best performance of FedAvg to provide a lower bound for evaluation; (2) ``ROUND'': the communication round required to reach the target accuracy; and (3) ``SpeedUp'': the speedup factor compared to FedAvg.
\subsection{Implementation Details}
\label{sec:exp_setting}

\textbf{Federated Settings:} To simulate a heterogeneous data distribution across clients, we employ the Dirichlet partitioning method, a common approach in recent FL works~\cite{li2022federated, moon,fedavgm}. This method draws client data proportions $\mathbf{q}$ from a Dirichlet distribution, $\mathbf{q}\sim\text{Dir}(\alpha\mathbf{p})$, where $\alpha$ is the concentration parameter that controls the degree of heterogeneity. We use $\alpha=0.1$, but vary the random seed to generate multiple distinct heterogeneous data distributions. Examples of these distributions are shown in Fig.~\ref{fig:dis_scen_1} and ~\ref{fig:dis_scen_2}. We simulate cross-silo scenarios using 10 clients and cross-device scenarios using 100 clients. We set the sampling rate $\lambda_s$ as $50\%$ for cross-silos and $10\%$ for cross-devices scenario. We set local epochs $E=1$ (results for different local epochs are shown in the Appendix~\ref{appendix:exp_more_local_epochs}). 

\textbf{Experimental Details:} To ensure a fair and direct comparison, all methods were evaluated under identical conditions, including the same data partitioning, sampling rate, local epochs, and communication rounds. 
We use the SGD optimizer with 0.01 learning rate and 0.9 momentum,  1e-5 weight decay (also denoted as $\lambda_3$). Among the hyperparameters, $\lambda_1$ and $\lambda_2$ were both set to 0.1, and $\lambda_g$ is fixed at 0.5 for the main experiments (Details can be seen in the Appendix~\ref{appendix:more_exp_details}). 

\begin{table}[t]
    \caption{Top-1 accuracy of baselines and our method \method~with 5 different heterogeneous scenarios on CIFAR-10, heterogeneity degree $\alpha=0.1$, local epochs $E=1$ and total client number $K=100$.}
    \label{tab:CIFAR10_100_10_1}
    \setlength\tabcolsep{2.5pt}
    \setlength\arrayrulewidth{1.0pt}
    \renewcommand\arraystretch{1.2}
    \resizebox{\linewidth}{!}{
    \begin{tabular}{r||ccc|ccc|ccc|ccc|ccc||c}
    \toprule[1.5pt]
\multicolumn{17}{c}{\cellcolor{greyL}Dataset: CIFAR-10  Heterogeneity Level:$\mathbf{\alpha=0.1}$ Client Number:$\mathbf{K=100}$, Client Sampling Rate: $\mathbf{10\%}$ Total Communication Round:$\mathbf{T=500}$ Local Epochs:$\mathbf{E=1}$}\\
    \midrule[1.5pt]
    Diff Scenario&\multicolumn{3}{c|}{\heterone} & \multicolumn{3}{c|}{\hetertwo} & \multicolumn{3}{c|}{\heterthree} & \multicolumn{3}{c|}{\heterfour} & \multicolumn{3}{c||}{\heterfive} & \\
    \cmidrule{1-17}
     \multicolumn{17}{c}{\cellcolor{greyL}Centralized Training Acc=xxx\%}  \\
    \hline
    & ACC$\uparrow$ & ROUND $\downarrow$& SpeedUp$\uparrow$ & ACC$\uparrow$ & ROUND $\downarrow$& SpeedUp$\uparrow$  & ACC$\uparrow$ & ROUND $\downarrow$& SpeedUp$\uparrow$ &  ACC$\uparrow$ & ROUND $\downarrow$& SpeedUp$\uparrow$ &  ACC$\uparrow$ & ROUND $\downarrow$& SpeedUp$\uparrow$  &  \\
    \hline
\cmidrule(lr){1-17}
Methods&\multicolumn{3}{c|}{ Target Acc=48\%}&\multicolumn{3}{c|}{ Target Acc=48\%}&\multicolumn{3}{c|}{Target Acc=57\%}&\multicolumn{3}{c|}{Target Acc=39\%}&\multicolumn{3}{c||}{Target Acc=39\%}&Mean Acc$\pm$ Std \\ 
\cmidrule(lr){1-17}
    FedAvg &$48.22$&$449$&$1.0\times$&$48.23$&$452$&$1.0\times$&$57.61$&$482$&$1.0\times$&$39.96$&$479$&$1.0\times$&$39.41$&$498$&$1.0\times$&$46.69\pm 7.45$ \\
    FedAvgM&$58.80$&$303$&$1.5\times$&$60.07$&$363$&$1.2\times$&$66.40$&$414$&$1.2\times$&$46.89$&$291$&$1.6\times$&$45.21$&$432$&$1.2\times$&$55.47 \pm 9.09$  \\

    FedProx&$52.84$&$357$&$1.3\times$&$54.18$&$364$&$1.2\times$&$63.04$&$481$&$1.0\times$&$44.11$&$370$&$1.3\times$&$42.90$&$432$&$1.2\times$&$51.41 \pm 8.23$ \\
    SCAFFOLD &$60.17$&$202$&$2.2\times$&$62.34$&$158$&$2.9\times$&$58.24$&$335$&$1.4\times$&$60.75$&$44$&$10.9\times$&$60.90$&$37$&$13.5\times$&$60.48 \pm 1.49$ \\
    CCVR &$64.06$&$69$&$6.5\times$&$68.93$&$76$&$5.9\times$&$62.63$&$291$&$1.7\times$&$62.82$&$38$&$12.6\times$&$61.73$&$31$&$16.1\times$&$64.03 \pm 2.86$ \\
    VHL &$72.70$&$128$&$3.5\times$&$70.21$&$201$&$2.2\times$&$76.12$&$235$&$2.1\times$&$68.18$&$143$&$3.3\times$&$62.44$&$129$&$3.9\times$&$69.93 \pm 5.13$ \\
    FedASAM &$46.35$&None&None&$45.32$&None&None&$54.35$&None&None&$41.50$&$478$&$1.0\times$&$33.62$&None&None&$44.23 \pm 7.55$ \\
    FedExp & $38.26$&None&None&$46.76$&None&None&$56.61$&None&None&$43.33$&$367$&$1.3\times$&$37.55$&None&None&$44.50 \pm 7.75$\\
    FedDecorr &$63.69$&$303$&$1.5\times$&$66.58$&$268$&$1.7\times$&$69.92$&$337$&$1.4\times$&-&-&-&-&-&-&$66.73 \pm 3.12$ \\
    FedDisco &-&-&-&-&-&-&-&-&-&-&-&-&-&-&-&- \\
    FedInit &$71.01$&$130$&$3.5\times$&$72.09$&$138$&$3.3\times$&$75.76$&$336$&$1.4\times$&$62.96$&$90$&$5.3\times$&$67.38$&$88$&$5.7\times$&$69.84 \pm 4.87$ \\
    FedLESAM &$72.64$&$110$&$4.1\times$&$75.48$&$146$&$3.1\times$&$75.47$&$234$&$2.1\times$&$77.56$&$75$&$6.4\times$&$73.73$&$75$&$6.6\times$&$74.98 \pm 1.88$ \\
    NUCFL &$53.72$&$323$&$1.4\times$&$52.85$&$297$&$1.5\times$&$53.80$&None&None&$49.47$&$231$&$2.1\times$&$46.17$&$356$&$1.4\times$&$51.20 \pm 3.32$\\
    \rowcolor{TgreyL}\cellcolor{white}\textbf{\method(Ours)}& $\mathbf{78.32}$&$\mathbf{102}$&$\mathbf{4.4\times}$&$\mathbf{76.97}$&$\mathbf{155}$&$2.9\times$&$\mathbf{76.27}$&$\mathbf{232}$&$\mathbf{2.1\times}$&$\mathbf{78.12}$&$\mathbf{94}$&$5.1\times$&$\mathbf{75.53}$&$\mathbf{76}$&$6.6\times$&$\mathbf{77.04 \pm 1.19}$ \\
    \bottomrule[1.5pt]
    \end{tabular}}
\end{table}
\begin{figure}[t]
    \centering
    \subfigure[Ablation results on different part of \method.]{\label{fig:ablation_res}
        \includegraphics[width=0.45\textwidth]{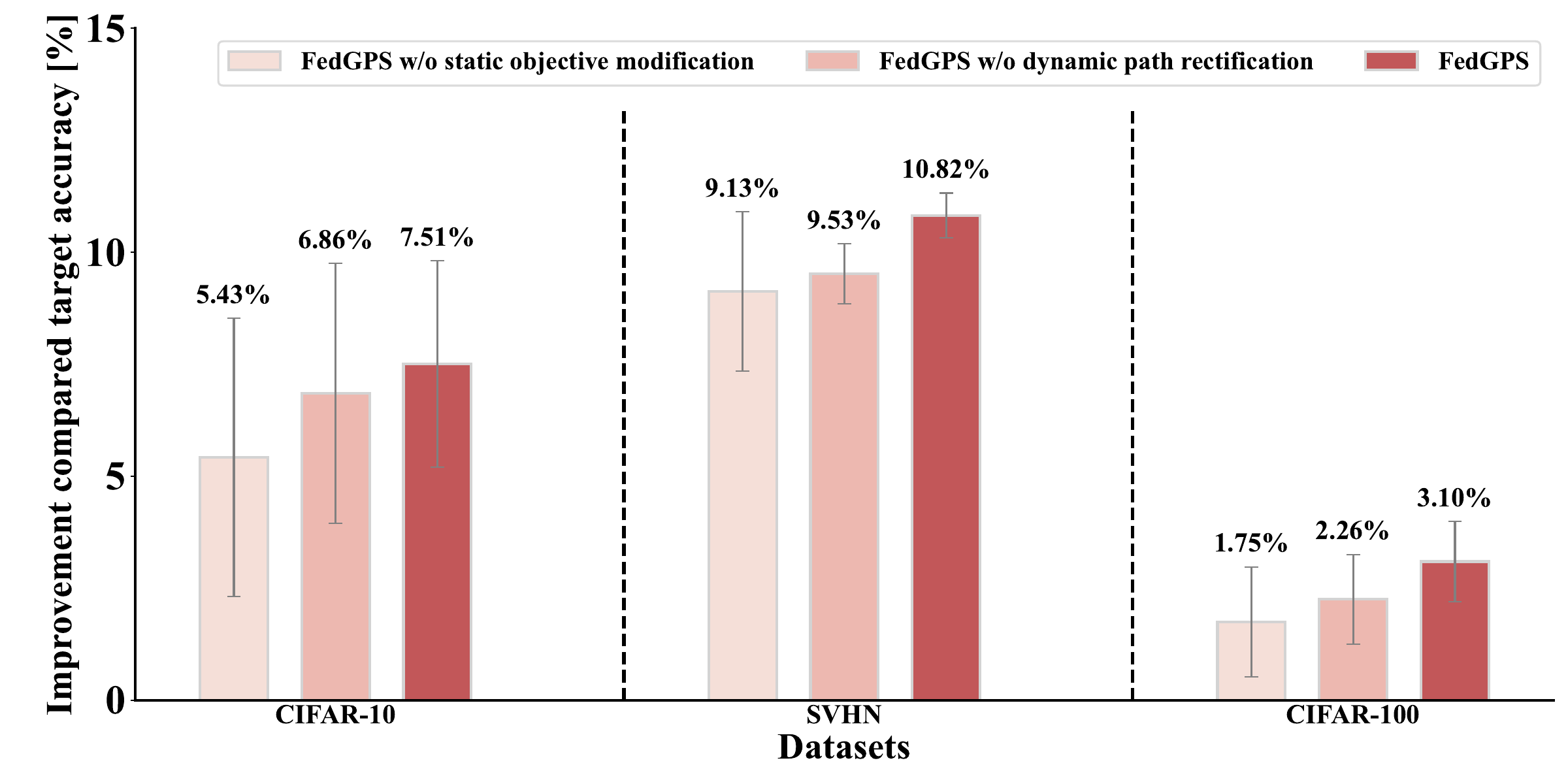}
    }
    \hspace{0.5pt}
    \subfigure[Test accuracy and convergence rate on different baselines and \method.]{\label{fig:convergence_rate}
        \includegraphics[width=0.45\textwidth]{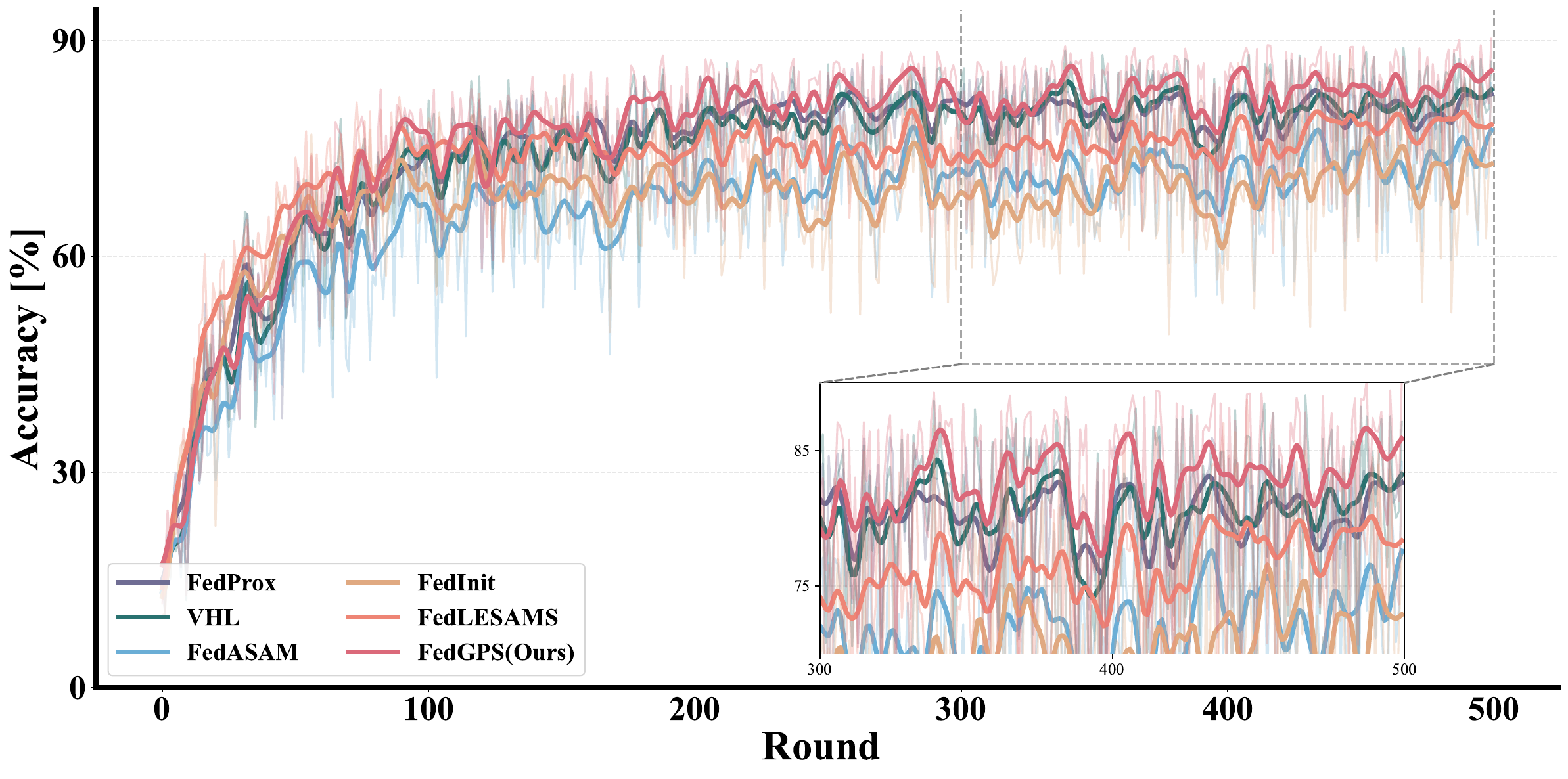}
    }
  \caption{The visualization of the ablation study and convergence of \method~compared with other baselines. Due to the large volume of baselines, we select the top 5 baselines to plot. }
    \label{fig:res_vis}
\end{figure}
\subsection{Main Results}
\label{sec:main_res}
Our evaluation results on CIFAR-10, SVHN, and CIFAR-100 are shown respectively in Tabs.~\ref{tab:CIFAR10_10_5_1}, ~\ref{tab:SVHN_10_5_1},  ~\ref{tab:CIFAR100_10_5_1} and ~\ref{tab:CIFAR10_100_10_1}. Additionally, if a method fails to produce valid results, e.g., NaN loss, we denote its performance as ``-'' in our results.
Based on our experiments, we outline several key observations that highlight the characteristics of the evaluated methods and provide insights for future research:

\textbf{Observation 1: Absence of SOTA Methods Across Scenarios.}
The results indicate that most methods exhibit limited robustness across various scenarios. For instance, VHL demonstrates superior performance compared to other methods under the setting $\alpha=0.1, K=10, \lambda_s=50\%$. However, on the same dataset with $\alpha=0.1, K=100, \lambda_s=10\%$, its performance degrades significantly. Similar patterns are observed in other methods, suggesting that the performance of a given method can vary substantially across different settings.

\textbf{Observation 2: Value of global classifier calibration.}
Global classifier calibration proves to be effective in certain contexts. For example, CCVR employs logits-based statistical information sampled from a Gaussian distribution to calibrate the classifier ($\classifier$) globally. This approach reduces the number of communication rounds required to achieve the target accuracy on specific datasets, also stabilizes the training curve. As shown in Tab.~\ref{tab:CIFAR10_100_10_1}, under certain distributions, CCVR achieves the target accuracy with fewer communication rounds compared to our method, despite lower overall performance. This observation inspires future research to enhance performance using such techniques.

\textbf{Observation 3: Performance variability across settings.}
The performance of methods varies significantly across different settings, indicating a need for improved adaptability or meticulous hyperparameter tuning. For instance, FedASAM and FedExp outperform vanilla FedAvg under $\alpha=0.1, K=10, \lambda_s=50\%$, but struggle to surpass FedAvg under $\alpha=0.1, K=100, \lambda_s=10\%$. Similarly, many methods achieve performance comparable to or worse than FedAvg on CIFAR-100, underscoring the challenge of generalizing across diverse datasets and configurations.

Our experimental results demonstrate that \method~ consistently achieves state-of-the-art (SOTA) performance across diverse federated learning settings and datasets. As reported in Tab.~\ref{tab:CIFAR10_10_5_1}, \method~ surpasses the best baseline methods under various heterogeneous scenarios. However, its performance gains on SVHN are modest, as vanilla FedAvg already approximates centralized training performance in these scenarios, limiting the potential for improvement in distributed settings. In more challenging environments, such as those detailed in Tab.~\ref{tab:CIFAR100_10_5_1}, \method~ exhibits substantially greater improvements. Crucially, \method~ prioritizes robustness across heterogeneous data partitions over optimizing for specific scenarios' performance, a design choice that enhances its generalization ability. The convergence rates of different evaluated methods and \method~ are shown in Fig.~\ref{fig:convergence_rate}.

\subsection{Ablation Study on Two Perspectives}
\label{sec:abalation_study}
To evaluate the individual contributions of the static goal-oriented objective function and the dynamic path-oriented gradient rectification in \method~to FL performance, we conduct ablation studies by equipping FedAvg with each module in isolation. As shown in Fig.~\ref{fig:ablation_res}, the results report relative performance improvements over the target accuracy of vanilla FedAvg. Specifically, \method~without the static objective modification relies exclusively on dynamic path rectification, whereas \method~without dynamic path rectification employs only the static objective function. These experiments confirm the distinct effectiveness of each module. Notably, the synergistic integration of both modules yields superior performance across diverse heterogeneous scenarios. Additional ablation studies, exploring varying numbers of clients, datasets, and local epochs, are detailed in the Appendix~\ref{appendix:more_exp_res}. Furthermore, to assess the robustness of \method~under different training seeds, we initialize the model with three different random seeds under identical settings and data distribution. Its comprehensive results are provided in the experimental section of the Appendix~\ref{appendix:exp_more_training_seeds}.

\section{Conclusion and Further Discussion}
In this work, we explore an important and overlooked question: how well do existing notable algorithms perform in multiple heterogeneous scenarios? Extensive experiments show that most of the existing algorithms are limited in robustness, which inspired the \textbf{FedGPS} framework. It combines two orthogonal views to achieve label-distribution-agnostic robustness by considering the statistical information of the client from the distribution level and the gradient view, respectively. More analysis about the communication and privacy of \method~are listed in the Appendix~\ref{appendix:more_facts}. It also catalyzes future research into distribution-agnostic algorithms, paving the way for resilient federated learning in complex, real-world settings.

\textbf{Limitations: }Limited computational resources may constrain the performance of \method, as \method~ relies on additional surrogate data for its distribution alignment process. Future work could investigate more efficient alignment techniques that minimize the need for surrogate data or explore alternative approaches to enhance scalability. Furthermore, \method~does not yet address challenges posed by heterogeneous data features, necessitating further research into the robustness of its distribution and gradient collaboration framework across a broader range of heterogeneous FL scenarios, with the goal of achieving distribution-agnostic robustness.
\section*{Broader impacts}
Federated learning (FL), defined by its distributed data collection and keeping data locally, inherently navigates complex real-world applications driven by diverse tasks and participants. This complexity has spurred extensive exploration of varied federated settings. Our work tackles a critical challenge in FL: \textbf{the pervasive data heterogeneity that undermines the robustness of existing methods across diverse data distributions.} Through over \textbf{1100+ groups of experiments}, we investigate mitigation strategies from multiple perspectives, introducing novel insights that significantly enhance robustness. We also provide key observations to guide future research and inform the selection of federated methods for heterogeneous scenarios. \textbf{Rather than advocating for a single algorithm tailored to a specific scenario, we emphasize the need for broader, actionable insights to support practical FL deployments, enabling customized solutions for diverse applications. }This paper marks a pivotal step toward distribution-agnostic federated learning, establishing a foundation for robust, scalable, and adaptable FL systems. By bridging experimentation with practical applicability, our contributions aim to catalyze transformative advancements in this rapidly evolving field and future real applications with sensitive data protection.
\section*{Acknowledgements}
We thank all the reviewers for their constructive suggestions and dedication to this paper.  ZQY, CXL, and YXY were supported by Hong Kong Innovation and Technology Commission Innovation and Technology Fund ITS/229/22. YGZ was funded by Inno HK Generative AI R\&D Center. BH was supported by NSFC General Program No. 62376235 and RGC General Research Fund No. 12200725. YMC was supported by the Hong Kong Baptist University (HKBU) under grant RC-FNRA-IG/23-24/SCI/02, and the seed funding for collaborative research grants RC-SFCRG/23-24/R2/SCI/10. 
\normalem
\bibliography{neurips2025}
\bibliographystyle{unsrt}

\appendix
\etocdepthtag.toc{mtappendix}
\etocsettagdepth{mtchapter}{none}
\etocsettagdepth{mtappendix}{subsection}

\renewcommand{\contentsname}{Appendix}

\tableofcontents
\newpage

\section{Proof Results}
\label{appendix:proof}
\begin{assumption}[Lipschitz Continuity]
\label{assumption:lipschitz_continuity}
For a local feature extractor \(f:\mathcal{X}\rightarrow\mathcal{Z}\) parameterized by \(\featextra\) is \(L\)-Lipschitz continuous, that is,
\[
\|f_{\featextra_k}(\mathbf{x}) - f_{\featextra'_k}(\mathbf{x})\|_{\mathcal{Z}}\leq L \|\featextra_k - \featextra'_k\|_{\featextra},
\text{where } k \in \mathcal{S},
\]
for all \( \mathbf{x}_k \in \mathcal{X} \), where \( \featextra, \featextra' \in \mathbb{R}^{|\featextra|} \). Moreover, \( \|\cdot\|_{\mathcal{Z}} \) and \( \|\cdot\|_{\featextra} \) are norms on the feature and parameter spaces, respectively.
\end{assumption}

\begin{remark}
This assumption ensures that small changes in the parameters of the local feature extractor lead to proportionally small changes in the extracted features. Specifically, for any client \(k\), if the parameters \(\featextra\) are slightly modified to \(\featextra'\), the resulting feature representations remain close in the feature space \(\mathcal{Z}\), with the difference bounded by the Lipschitz constant \(L\).
\end{remark}

\begin{lemma}
\label{lemma:vector_triangle}
    For random vectors $\mathbf{v}_1, \mathbf{v}_2, \cdots \mathbf{v}_n$, we have
    
    \[
    \|\sum_{t=1}^{T}\mathbf{v}_t\|_2\leq\sum_{t=1}^{T}\|\mathbf{v}_t\|_2.
    \]
    
\end{lemma}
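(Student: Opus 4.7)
The plan is to establish this inequality directly via induction on $T$, using the ordinary (deterministic) triangle inequality for the Euclidean norm $\|\mathbf{a}+\mathbf{b}\|_2 \leq \|\mathbf{a}\|_2 + \|\mathbf{b}\|_2$ as the only nontrivial ingredient. Because the stated bound is pointwise in the underlying sample space, the fact that the $\mathbf{v}_t$ are random plays no role: it suffices to prove the inequality for arbitrary deterministic vectors, and the same conclusion then follows almost surely for any random realization.

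First I would record the base case $T=1$, where $\|\mathbf{v}_1\|_2 \leq \|\mathbf{v}_1\|_2$ holds trivially with equality. For the inductive step, I would suppose the bound holds for some $T=k\geq 1$, write the partial sum as $\sum_{t=1}^{k+1}\mathbf{v}_t = \left(\sum_{t=1}^{k}\mathbf{v}_t\right) + \mathbf{v}_{k+1}$, and apply the two-term triangle inequality to obtain
\[
\left\|\sum_{t=1}^{k+1}\mathbf{v}_t\right\|_2 \leq \left\|\sum_{t=1}^{k}\mathbf{v}_t\right\|_2 + \|\mathbf{v}_{k+1}\|_2.
\]
Invoking the inductive hypothesis on the first term on the right-hand side yields $\sum_{t=1}^{k}\|\mathbf{v}_t\|_2 + \|\mathbf{v}_{k+1}\|_2 = \sum_{t=1}^{k+1}\|\mathbf{v}_t\|_2$, which closes the induction.

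There is essentially no obstacle here; the only mild subtlety is the wording ``random vectors.'' Under the natural almost-sure interpretation, the argument above is immediate because it holds for each $\omega$ in the sample space individually. If instead an expectation-form variant were desired, one could simply take expectations of both sides after establishing the pointwise bound, using monotonicity of expectation. In either reading, the lemma is a direct corollary of the Euclidean triangle inequality, and is presumably stated here as a utility result to streamline later arguments (for example, when controlling sums of gradient terms $\delta^t_{\model_i}$ or telescoping the Wasserstein bound $\epsilon_1 + \epsilon_2 + \kappa$ into a multi-term form).
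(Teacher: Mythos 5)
Your proof is correct. The paper states this lemma without proof, treating it as the standard triangle inequality for the Euclidean norm, and your induction on $T$ from the two-term inequality $\|\mathbf{a}+\mathbf{b}\|_2 \leq \|\mathbf{a}\|_2+\|\mathbf{b}\|_2$ is exactly the canonical argument one would supply; your observation that the ``random'' qualifier is immaterial because the bound holds pointwise on the sample space is also the right reading, and matches how the paper uses the lemma (pointwise, inside the proof of Lemma~\ref{lemma:parameter_bound}).
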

\subsection{Proof of Necessary Lemma~\ref{lemma:parameter_bound}}
\begin{lemma}[Bounded difference between global and local feature extractor \(\featextra\), \(\featextra_k\)]
\label{lemma:parameter_bound}
In FL with \( K \) clients, where each round \( t \) samples a subset of clients \( \mathcal{S}_t \subseteq \mathcal{S} \), and the global feature extractor parameters are updated as \( \featextra^{t+1} = \frac{1}{|\mathcal{S}_t|} \sum_{k \in \mathcal{S}_t} {\featextra_k}^{t+1} \), with local parameters \( {\featextra_k} \) updated via bounded optimization, there exists \( \Delta_d > 0 \) such that:
\[
\|{\featextra_k} - \featextra\|_2 \leq \Delta_d,
\]
for all \( k \in \mathcal{S} \), where \( \|\cdot\|_2 \) is the Euclidean norm on the parameter space.
\end{lemma}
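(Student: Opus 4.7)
\medskip

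The plan is to exhibit an explicit constant $\Delta_d$ by first bounding how far any local copy $\featextra_k$ can drift from the global parameter $\featextra$ during one communication round, and then showing that the aggregation step cannot itself create a larger deviation. The key assumption to unpack is ``bounded optimization'': I would make it concrete by assuming each local SGD step satisfies $\|\nabla F_k(\featextra_k^{t,e})\|_2 \leq G$ for some constant $G > 0$, which is the standard bounded-gradient hypothesis used throughout the FL literature (and is consistent with the $L$-Lipschitz feature extractor of Assumption~5.1, modulo smoothness of the loss).

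First, I would track a single round starting from the common initialization $\featextra_k^{t,0} = \featextra^t$. Unrolling $E$ local SGD steps and applying Lemma~5.2 gives
\begin{equation*}
\|\featextra_k^{t+1} - \featextra^t\|_2 \;\leq\; \eta_l \sum_{e=0}^{E-1} \|\nabla F_k(\featextra_k^{t,e})\|_2 \;\leq\; \eta_l E G \;=:\; R.
\end{equation*}
Second, because $\featextra^{t+1}$ is a convex combination of the $\{\featextra_k^{t+1}\}_{k\in\mathcal{S}_t}$, another application of Lemma~5.2 yields $\|\featextra^{t+1} - \featextra^t\|_2 \leq R$. Third, a triangle inequality on $\featextra_k^{t+1} - \featextra^{t+1} = (\featextra_k^{t+1} - \featextra^t) - (\featextra^{t+1} - \featextra^t)$ gives $\|\featextra_k^{t+1} - \featextra^{t+1}\|_2 \leq 2R$ for every $k \in \mathcal{S}_t$. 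Setting $\Delta_d := 2\eta_l E G$ discharges the claim for participating clients.

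The main obstacle is handling clients with $k \notin \mathcal{S}_t$, whose local parameters are either undefined or stale from their most recent participation round. I would resolve this in one of two standard ways: (i) adopt the convention $\featextra_k^{t} = \featextra^{t}$ for non-participating clients, which trivially satisfies the bound; or (ii) assume a sampling scheme in which each client participates at least once every $\tau$ rounds and accumulate the per-round global drift, so that stale clients satisfy $\|\featextra_k - \featextra\|_2 \leq (\tau+1) R$, giving $\Delta_d := 2\tau \eta_l E G$. A secondary subtlety is that the hypothesis does not explicitly give bounded gradients; if one instead only assumes the local optimizer produces updates with norm at most $R$ per round (which is what ``bounded optimization'' literally asserts), then steps one through three go through unchanged with $R$ taken as a primitive, making the proof essentially a two-line triangle-inequality argument. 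In either case the conclusion holds with an explicit, round-independent $\Delta_d$.
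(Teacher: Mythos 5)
Your proof is correct, but it takes a genuinely different route from the paper's. The paper interprets ``bounded optimization'' as a uniform bound on the parameter norm itself, $\|{\featextra_k}^{t+1}\|_2 \leq B$ for all clients and rounds (justified by regularization or gradient clipping); it then bounds every pairwise difference $\|{\featextra_k}^{t+1} - {\featextra_j}^{t+1}\|_2 \leq 2B$, writes ${\featextra_k}^{t+1} - \featextra^{t+1}$ as the average over $j \in \mathcal{S}_t$ of such pairwise differences, and applies the vector triangle inequality to conclude with $\Delta_d = 2B$. You instead read the hypothesis as a bound on the per-step gradient (or per-round update) and track the within-round drift: each local model moves at most $R = \eta_l E G$ from the common initialization $\featextra^t$, the averaged global model moves at most $R$, hence the gap is at most $2R$. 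Your constant is considerably tighter and more meaningful---it scales with the local step size and the number of local epochs rather than with the radius of the ambient parameter ball, which matters downstream since the discrepancy lemma sets $\kappa = L\Delta_d$ and one wants $\kappa$ small. The price is the non-participation issue, which you correctly identify: the paper's cruder bound sidesteps it for free, because any two points in a ball of radius $B$ are within $2B$ of each other regardless of when a client last synchronized, whereas your drift argument needs either the convention $\featextra_k^{t} = \featextra^{t}$ for idle clients or a bounded-staleness assumption. In the latter case your constant should be $(\tau+2)R$ rather than $(\tau+1)R$ if one accumulates both the $2R$ gap at the last participation round and $\tau$ subsequent rounds of global drift, but this does not affect the conclusion. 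Both arguments establish the lemma: yours under a slightly stronger (though entirely standard) hypothesis with a sharper constant, the paper's under a weaker hypothesis with a much looser one.
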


\begin{proof}
Local parameters \( {\featextra_k} \) are updated using optimization (e.g., SGD) with regularization or gradient clipping, ensuring bounded norms. At round \( t \), the global parameters are:
\[
\featextra^{t+1} = \frac{1}{|\mathcal{S}_t|} \sum_{k \in \mathcal{S}_t} {\featextra_k}^{t+1}.
\]
Consider the parameter difference for any client \( k \in \mathcal{S} \), not necessarily in \( \mathcal{S}_t \):
\begin{align*}
    \|{\featextra_k}^{t+1} - \featextra^{t+1}\|_2 =& \left\| {\featextra_k}^{t+1} - \frac{1}{|\mathcal{S}_t|} \sum_{j \in \mathcal{S}_t} {\featextra_j}^{t+1} \right\|_2 = \left\| \frac{1}{|\mathcal{S}_t|} \sum_{j \in \mathcal{S}_t}({\featextra_k}^{t+1} -  {\featextra_j}^{t+1}) \right\|_2\\&
    \overset{(a)}{\leq} \sum_{j \in \mathcal{S}_t} \|\frac{1}{|\mathcal{S}_t|} {\featextra_k}^{t+1} - {\featextra_j}^{t+1}\|_2 \overset{(b)}{=}\frac{1}{|\mathcal{S}_t|}\sum_{j \in \mathcal{S}_t} \| {\featextra_k}^{t+1} - {\featextra_j}^{t+1}\|_2 ,
\end{align*}
where \((a)\) is from Lemma~\ref{lemma:vector_triangle} and \((b)\) is because \(\|a\mathbf{v}\|_2=a\|\mathbf{v}\|_2\).

Assume optimization bounds the parameter norm: \( \|{\featextra_k}^{t+1}\|_2 \leq B \), for some \( B > 0 \), across all rounds and clients (achieved via regularization). Then:
\[
\|{\featextra_k}^{t+1} - {\featextra_j}^{t+1}\|_2 \leq \|{\featextra_k}^{t+1}\|_2 + \|{\featextra_j}^{t+1}\|_2 \leq 2B.
\]
Thus:
\[
\|{\featextra_k}^{t+1} - \featextra^{t+1}\|_2 \leq \frac{1}{|\mathcal{S}_t|} \sum_{j \in \mathcal{S}_t} 2B = 2B.
\]
This bound holds for all \( k \in \mathcal{S} \), as the maximum difference is independent of whether \( k \in \mathcal{S}_t \). Set \( \Delta_d = 2B \), so:
\[
\|{\featextra_k} - \featextra\|_2 \leq \Delta_d,
\]
which completes the proof of Lemma.~\ref{lemma:parameter_bound}.
\end{proof}

\subsection{Proof of Necessary Lemma~\ref{lemma:feature_extractor_discrepancy} }

\begin{lemma}
    \label{lemma:feature_extractor_discrepancy}
    In a FL with \( K \) clients, let the global feature extractor \( \featextra \) have parameters \( \featextra^{t+1} = \frac{1}{|\mathcal{S}_t|} \sum_{k \in \mathcal{S}_t} {\featextra_k}^{t+1} \), where \( {\featextra_k} \) are parameters of local feature extractors. Let \( \mathcal{P}_{k,\mathcal{D}} \) and \( \mathcal{P}_{g,\mathcal{D}} \) denote the feature distributions induced by \( \featextra_k \) and \( \featextra \) on a certain data distribution \( \mathcal{D} \). Given Assumption \ref{assumption:lipschitz_continuity} and Lemma \ref{lemma:parameter_bound}, there exists \( \kappa = L \Delta_d \), such that:
\[
W_1(\mathcal{P}_{k,\mathcal{D}}, \mathcal{P}_{g,\mathcal{D}}) \leq \kappa,
\]
where \( W_1 \) is the Wasserstein-1 distance in the feature space \( \mathcal{Z} \).
\end{lemma}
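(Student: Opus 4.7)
\medskip

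\noindent\textbf{Proof plan for Lemma~\ref{lemma:feature_extractor_discrepancy}.}

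The plan is to upper-bound $W_1(\mathcal{P}_{k,\mathcal{D}}, \mathcal{P}_{g,\mathcal{D}})$ by exhibiting an explicit coupling of the two feature distributions and then controlling the transport cost pointwise. Since both $\mathcal{P}_{k,\mathcal{D}}$ and $\mathcal{P}_{g,\mathcal{D}}$ arise as pushforwards of the \emph{same} data distribution $\mathcal{D}$ under, respectively, $f_{\featextra_k}$ and $f_{\featextra}$, the natural candidate is the pushforward coupling: draw $\mathbf{x}\sim\mathcal{D}$ and form the pair $(f_{\featextra_k}(\mathbf{x}), f_{\featextra}(\mathbf{x}))\in\mathcal{Z}\times\mathcal{Z}$. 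I would first verify that this indeed defines an element $\gamma\in\Gamma(\mathcal{P}_{k,\mathcal{D}}, \mathcal{P}_{g,\mathcal{D}})$, i.e.\ its marginals coincide with the two feature distributions, which is immediate from the definition of pushforward.

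Next, I would apply the Lipschitz assumption (Assumption~\ref{assumption:lipschitz_continuity}) pointwise to the integrand. For every $\mathbf{x}\in\mathcal{X}$,
\[
\|f_{\featextra_k}(\mathbf{x}) - f_{\featextra}(\mathbf{x})\|_{\mathcal{Z}} \leq L\,\|\featextra_k - \featextra\|_{\featextra}.
\]
Invoking Lemma~\ref{lemma:parameter_bound}, which guarantees $\|\featextra_k - \featextra\|_2 \leq \Delta_d$ (assuming the norm $\|\cdot\|_{\featextra}$ used in the Lipschitz condition coincides with, or is equivalent to, the Euclidean norm $\|\cdot\|_2$ appearing in Lemma~\ref{lemma:parameter_bound}), the pointwise bound becomes $\|f_{\featextra_k}(\mathbf{x}) - f_{\featextra}(\mathbf{x})\|_{\mathcal{Z}}\leq L\Delta_d$.

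Integrating this uniform pointwise bound against $\mathcal{D}$ under the chosen coupling $\gamma$ gives
\[
\int d\bigl(f_{\featextra_k}(\mathbf{x}), f_{\featextra}(\mathbf{x})\bigr)\,d\gamma \;\leq\; L\Delta_d,
\]
and by Definition~\ref{def:w_distance} (with $p=1$) the infimum over all couplings is at most the value on any particular coupling. Hence $W_1(\mathcal{P}_{k,\mathcal{D}}, \mathcal{P}_{g,\mathcal{D}}) \leq L\Delta_d$, and setting $\kappa := L\Delta_d$ concludes the argument.

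The main obstacle is not the transport-cost estimate itself, which is essentially one line once the coupling is fixed, but rather the bookkeeping around the norms: the Lipschitz assumption is stated with a generic parameter-space norm $\|\cdot\|_{\featextra}$ while Lemma~\ref{lemma:parameter_bound} controls $\|\cdot\|_2$, and the Wasserstein definition uses an abstract metric $d$ on the feature/label space while the Lipschitz bound is in $\|\cdot\|_{\mathcal{Z}}$. I would explicitly assume (or note as a mild compatibility condition) that $d$ dominates $\|\cdot\|_{\mathcal{Z}}$ on the feature marginal and that $\|\cdot\|_{\featextra}$ is equivalent to $\|\cdot\|_2$ up to absorbing constants into $L$, so that the constants line up and $\kappa = L\Delta_d$ is the clean final bound.
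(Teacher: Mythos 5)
Your proposal is correct and follows essentially the same route as the paper: the pushforward coupling $(f_{\featextra_k}(\mathbf{x}), f_{\featextra}(\mathbf{x}))$ for $\mathbf{x}\sim\mathcal{D}$, the pointwise Lipschitz bound combined with Lemma~\ref{lemma:parameter_bound} to get a uniform transport cost of $L\Delta_d$, and the infimum-over-couplings argument. Your explicit flagging of the norm-compatibility issues ($\|\cdot\|_{\featextra}$ versus $\|\cdot\|_2$, and $d$ versus $\|\cdot\|_{\mathcal{Z}}$) is a point of care the paper's own proof passes over silently.
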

\begin{proof}
    The Wasserstein-1 distance is:
\[
W_1(\mathcal{P}_{k,\mathcal{D}}, \mathcal{P}_{g,\mathcal{D}}) = \inf_{\gamma \in \Pi(\mathcal{P}_{k,\mathcal{D}}, \mathcal{P}_{g,\mathcal{D}})} \int \| z - z' \|_Z \, d\gamma(z, z').
\]
For \( x \sim \mathcal{D} \), by Assumption \ref{assumption:lipschitz_continuity}, with \( f:\mathcal{X}\rightarrow\mathcal{Z}\) parameterized by local feature extractor \(\featextra_k\) and global feature extractor \(\featextra\), respectively:
\[
\|\featextra_k(x) - \featextra(x)\|_{\mathcal{Z}} = \|\featextra(x; {\featextra_k}^{t+1}) - \featextra(x; \featextra^{t+1})\|{\mathcal{Z}} \leq L \|{\featextra_k}^{t+1} - \featextra^{t+1}\|_2.
\]
By Lemma \ref{lemma:parameter_bound}, \( \|{\featextra_k}^{t+1} - \featextra^{t+1}\|_2 \leq \Delta_d \). Thus:
\[
\|\featextra_k(x) - \featextra(x)\|_{\mathcal{Z}} \leq L \Delta_d.
\]

Define a coupling \( \gamma \) where \( z_k = \featextra_k(x) \), \( z = \featextra(x) \), with probability \( \mathcal{D}(x) \) and marginals:
\begin{itemize}
    \item First: \( \int_{z} \gamma(z_k, z) = \mathcal{D}(x: \featextra_k(x) = z_k) = \mathcal{P}_{k,\mathcal{D}} \).
    \item Second: \( \int_{z} \gamma(z, z) = \mathcal{D}(x: \featextra(x) = z) = \mathcal{P}_{g,\mathcal{D}} \).
\end{itemize}
 
The cost is:
\[
\int \| z_k - z \|_Z \, d\gamma(z_k, z) \leq L \Delta_d.
\]
Thus:
\[
W_1(\mathcal{P}_{k,\mathcal{D}}, \mathcal{P}_{g,\mathcal{D}}) \leq \kappa, \quad \kappa = L \Delta_d,
\]
which completes the proof of Lemma~\ref{lemma:feature_extractor_discrepancy}.
\end{proof}

\subsection{Proof of Theorem 4.1}
\textbf{Theorem 4.1.}  Given the global feature distribution \( \mathcal{P}_g \), the local feature distribution \(\mathcal{P}_k\), the surrogate distributions \( \mathcal{P}^s \) (global) and \( \mathcal{P}^{s}_k \) (local for the \( k \)-th client) over their corresponding data space. Suppose there exists \(\kappa\geq 0\) such that \(W_1(\mathcal{P}_{k,\mathcal{D}},\mathcal{P}_{g,\mathcal{D}})\leq \kappa\) under distribution \(\mathcal{D}\). If the following conditions hold:
\[
W_1(\mathcal{P}^{s}_k, \mathcal{P}_k) \leq \epsilon_1, \quad W_1(\mathcal{P}^{s}_k, \mathcal{P}^s) \leq \epsilon_2,
\]
where \( W_1 \) is the Wasserstein-1 distance as defined in Definition 3.1.  then the Wasserstein-1 distance between \( \mathcal{P}_g \) and \( \mathcal{P}^s \) is bounded as:
\[
W_1(\mathcal{P}_g, \mathcal{P}^s) \leq \epsilon_1 + \epsilon_2+\kappa.
\]
\begin{proof}
    We prove \( W_1(\mathcal{P}_g, \mathcal{P}^s) \leq \epsilon_1 + \epsilon_2 + \kappa \) in each round \( t \), where a subset of clients \( \mathcal{S}_t \subseteq \mathcal{S} \) is sampled.

The Wasserstein-1 distance is:
\[
W_1(\mathcal{P}, \mathcal{Q}) = \inf_{\gamma \in \Pi(\mathcal{P}, \mathcal{Q})} \int \| z - z' \|_Z \, d\gamma(z, z').
\]

First, we prove the bounded local private feature distribution to the global surrogate distribution distance. For each client \( k \in \mathcal{S} \) (not necessarily in the sampled subset \( \mathcal{S}_t \)), we aim to bound the Wasserstein-1 distance between the local feature distribution \( \mathcal{P}_k \) and the global surrogate feature distribution \( \mathcal{P}^s \). To do so, we introduce the local surrogate feature distribution \( \mathcal{P}^s_k \) as an intermediate distribution and apply the triangle inequality for the Wasserstein-1 distance. The triangle inequality states that for any three probability distributions, we have:
\[
W_1(\mathcal{P}_k, \mathcal{P}^s) \leq W_1(\mathcal{P}_k, \mathcal{P}^s_k) + W_1(\mathcal{P}^s_k, \mathcal{P}^s).
\]
This inequality decomposes the distance between \( \mathcal{P}_k \) and \( \mathcal{P}^s \) into two segments: the distance from the local true feature distribution \( \mathcal{P}_k \) to the local surrogate \( \mathcal{P}^s_k \), and the distance from the local surrogate \( \mathcal{P}^s_k \) to the global surrogate \( \mathcal{P}^s \). Now, we have (1) The condition \( W_1(\mathcal{P}^s_k, \mathcal{P}_k) \leq \epsilon_1 \) implies, by the symmetry of the Wasserstein-1 distance (\( W_1(\mathcal{P}, \mathcal{Q}) = W_1(\mathcal{Q}, \mathcal{P}) \)), that:
\[
    W_1(\mathcal{P}_k, \mathcal{P}^s_k) = W_1(\mathcal{P}^s_k, \mathcal{P}_k) \leq \epsilon_1.
    \]
This bound measures the alignment quality between the true local features and the surrogate features for client \( k \), reflecting how well the surrogate data approximates the true data in the feature space.  (2) The condition \( W_1(\mathcal{P}^s_k, \mathcal{P}^s) \leq \epsilon_2 \) directly provides:
    \[
    W_1(\mathcal{P}^s_k, \mathcal{P}^s) \leq \epsilon_2.
    \]
    This bound measures the consistency between the local surrogate features and the global surrogate features, reflecting the uniformity of surrogate representations across clients.
Substitute these bounds into the triangle inequality:
\[
W_1(\mathcal{P}_k, \mathcal{P}^s) \leq W_1(\mathcal{P}_k, \mathcal{P}^s_k) + W_1(\mathcal{P}^s_k, \mathcal{P}^s) \leq \epsilon_1 + \epsilon_2.
\]
Thus, we have:
\begin{equation}
\label{eq:theorem_1}
    W_1(\mathcal{P}_k, \mathcal{P}^s) \leq \epsilon_1 + \epsilon_2.
\end{equation}

This bound holds for all \( k \in \mathcal{S} \), as the given conditions apply to each client, and the global surrogate distribution \( \mathcal{P}^s = \frac{1}{K} \sum_{k=1}^K \mathcal{P}^s_k \) is defined over all clients.

Additionally, we denote the \(\mathcal{P}_{g,k}\) as the feature distribution on the local data distribution \(\mathcal{D}_k\) by the global feature extractor \(\featextra\). Then, we build a connection between \(\mathcal{P}_{g,k}\) and \(\mathcal{P}^s\).  
By Lemma \ref{lemma:feature_extractor_discrepancy} and Eq.(\ref{eq:theorem_1}), we have:
\begin{equation}
    W_1(\mathcal{P}_{g,k}, \mathcal{P}^s) \leq W_1(\mathcal{P}_{g,k}, \mathcal{P}_k) + W_1(\mathcal{P}_k, \mathcal{P}^s) \leq \kappa + \epsilon_1 + \epsilon_2.
\end{equation}

Lastly, since \( \mathcal{P}_g \) is the feature distribution of \( \featextra \) over the global data distribution, equivalent to the average of \( \mathcal{P}_{g,k} \), construct \( \gamma = \frac{1}{K} \sum_{k=1}^K \gamma_k \), where \( \gamma_k \in \Pi(\mathcal{P}_{g,k}, \mathcal{P}^s) \). Marginals:
\begin{itemize}
    \item First: \( \int_{z'} \gamma(z, z') = \frac{1}{K} \sum_{k=1}^K \mathcal{P}_{g,k} = \mathcal{P}_g \).
    \item Second: \( \int_z \gamma(z, z') = \frac{1}{K} \sum_{k=1}^K \mathcal{P}^s = \mathcal{P}^s \).
\end{itemize}
The cost is:
\[
\int \| z - z' \|_Z \, d\gamma(z, z') \leq \frac{1}{K} \sum_{k=1}^K W_1(\mathcal{P}_{g,k}, \mathcal{P}^s) \leq \kappa + \epsilon_1 + \epsilon_2.
\]
Thus:
\[
W_1(\mathcal{P}_g, \mathcal{P}^s) \leq \epsilon_1 + \epsilon_2 + \kappa,
\]
which completes the proof of Theorem 4.1.
\end{proof}
\section{More Facts about \method}
\label{appendix:more_facts}
In this section, we outline some facts of \method~, including the communication overheads (Sec.~\ref{appendix:comm_ana}) and privacy issue of \method (Sec.\ref{appendix:privacy_ana}). Furthermore, we also give a brief introduction about the meaning of Nemenyi post-hoc test (Sec.~\ref{appendix:brief_intro_nemenyi}). We also provide a theoretical justification of dynamic path-oriented rectification (Sec.~\ref{appendix:thero_just}).
\subsection{Communication Analysis of \method}
\label{appendix:comm_ana}
In this subsection, we give a detailed communication overhead analysis regarding \method. In summary, there is one additional model (containing the aggregated gradient) of the same size as the global model during the download stage, while no extra communication overhead during upload from the client to the server side. Thus, \method~brings about 1.5 times the communication overhead than vanilla methods, e.g., FedAvg. Specifically, we have detailed the computational costs of the server and client, as well as the communication costs of download and upload between two adjacent rounds, and explained the reasons for these additional costs of \method~in Tab.~\ref{tab:comparison_with_FedAvg_commu}. The extra $C*512$ (where $C*512\ll M$, e.g., $0.05\%$ in CIFAR-10) is the local uploaded local surrogate distribution prototypes and download global surrogate distribution prototype. 
\begin{table}[t]
\centering
\renewcommand{\arraystretch}{2.5}
\caption{Detailed description of the two consecutive upload and download rounds, along with the associated local computational requirements.}
\label{tab:comparison_with_FedAvg_commu}
\resizebox{\linewidth}{!}{\begin{threeparttable}
\begin{tabular}{l p{6cm} p{10cm}}
\toprule[1.5pt]
\textbf{Process} & \multicolumn{1}{c}{\textbf{FedAvg}} & \multicolumn{1}{c}{\textbf{FedGPS}} \\
\midrule
 \midrule
\textbf{Global aggregation at round $t-1$} & 
Update global model $\model^{t}=\sum\model_{k,E}^{t-1}$ & \parbox{10cm}{
1. Update global model $\model^{t}=\model^{t-1}+\eta_g\sum_{k\in\mathcal{S}_{t-1}}\Delta_k^{t-1}$. \\
 2. Update global surrogate prototypes $\mathcal{E}^{c}=\sum\mathcal{E}^c_k$.} \\
\cellcolor{greyL}\textbf{Explanation} &\multicolumn{2}{l}{\cellcolor{greyL} \parbox{16cm}{Apart from the direct aggregation parameters, e.g., FedAvg-like. The $\Delta$ of client parameters has also been widely used in many studies.}} \\
\textbf{Server} & \multicolumn{2}{c}{
Select subset $\mathcal{S}_t$ to participate Round $t$} \\
\midrule
\midrule
\textbf{Round $t$ selected $\mathcal{S}_t$ download from server} & 
Global model $\model^{t}$ (\# Comm $M$) & \parbox{10cm}{
1. Global model $\model^{t}$; 2. Global model update information $\Delta \model^t=\model^{t}-\model^{t-1}=\eta_g\sum_{k\in\mathcal{S}_{t-1}}\Delta_k^{t-1}$ (\# Comm $2M+C*512$)} \\
\cellcolor{greyL}\textbf{Explanation}&\multicolumn{2}{l}{\cellcolor{greyL} \parbox{16cm}{
$\Delta \model^t$ contains the gradient aggregation information updated by the selected client in the previous round $t-1$. The global surrogate distribution is represented by a prototype for each class. The prototype for each class is a 512-dimensional vector. }}\\
\textbf{Local operation} & 
Update local model $\model^{t}_{k,0}=\model^{t}$ ($0$ means the model without local epochs training) & 
1. Update local model $\model^{t}_{k,0}=\model^{t}$; 2. Compute Non-Self Gradient based on $\Delta\model^{t}$. \\
\cellcolor{greyL}\textbf{Local extra operation explanation} &\multicolumn{2}{l}{\cellcolor{greyL} \parbox{16cm}{
\textbf{A:} $\delta_{\model}^{t}=\Delta \model^{t}-\Delta_{k}^{t-1}$ If this client is selected last round which means we should distract its local update $\Delta_{k}^{t-1}$ of last round (this is kept locally). \\
\textbf{B:} $\delta_{\model}^t=\Delta \model^{t}$ If this client is not selected last round which means $\Delta \model^{t}$ contains all other client's gradient information.}} \\

\textbf{Local training} & 
Traditional SGD uses the corresponding loss function and local data & 
SGD use Eq.~(\ref{eq:our_obj_final}) in \method~with local data \\
\cellcolor{greyL}\textbf{Explanation} & \multicolumn{2}{l}{\cellcolor{greyL} \parbox{16cm}{
Incorporating gradient information from other clients only occurs by adding the parameters together before each gradient descent. This additional computation overhead is almost negligible and can be disregarded. There are a total of iteration times of parameter summation operations. (Negligible additional computation expenses)}} \\
\midrule
\midrule
\textbf{Round $t$ selected $\mathcal{S}_t$ upload to server} & 
New local model parameters $\model^{t}_{k,E}$ (\# Comm $M$) & \parbox{10cm}{
1. Local updated parameters $\Delta_{k}^{t}=\model_{k,E}^{t}-\model_{k,0}^{t}$ \\
 2. Compute local surrogate prototypes $\mathcal{E}_{k}^{c}=\frac{1}{\|\mathcal{D}_s^c\|}\sum\psi_k(\xi_s^c)$ (\# Comm $M+C*512$)}\\
\textbf{Global aggregation at round $t$} & 
$\model^{t+1}=\sum\model_{k,E}^{t}$ & 
$\model^{t+1}=\model^{t}+\eta_g\sum_{k\in\mathcal{S}_{t}}\Delta_k^{t}$ \\
\bottomrule[1.5pt]
\end{tabular}
\begin{tablenotes}
\AtBeginEnvironment{tablenotes}{}
\item We denote the whole model size as $M$ and the total classes of the dataset as $C$, e.g.,$C=10$ for CIFAR-10.
\end{tablenotes}\end{threeparttable}
}
\end{table}

\begin{table}[t]
\centering
\renewcommand{\arraystretch}{2.5}
\caption{The performance comparison between \method-CF and \method~under Heterogeneous scenario 1 with CIFAR-10.}
\label{tab:fedgps_with_fedgps_cf}
\resizebox{\linewidth}{!}{
\begin{tabular}{cccc}
\toprule[1.5pt]
& $K=10,\lambda_s=50\%,R=500, E=1$ & $K=10,\lambda_s=50\%,R=200, E=5$ & $K=100,\lambda_s=10\%,R=500, E=1$ \\
\hline
\method-CF & 90.01 & 88.13 & 78.07 \\
\method & 90.31 & 88.47 & 78.32 \\
\bottomrule[1.5pt]
\end{tabular}
}
\end{table}

We also tried a communication-friendly version of \method, which was denoted as \method-CF. Specifically, when uploading the model, \method-CF, like \method, only has a communication cost of $M+C*512$. When downloading from the server, it still only downloads the global model and global surrogate prototypes, meaning the communication cost remains $M+C*512$. Here, we use the difference between the global model downloaded in two rounds from the server to represent the gradient aggregation information of other clients. Similarly, if a client is selected in both adjacent rounds, its own update information should be removed. Finally, \method-CF achieves comparable performance to \method; the results are listed in Tab.~\ref{tab:fedgps_with_fedgps_cf}. Moreover, it reduces the download overhead of $M$, making \method-CF only have an additional communication cost of $C*512$ compared to FedAvg, which is negligible.

\subsection{Privacy Analysis of \method}
\label{appendix:privacy_ana}
The two technologies of \method~do not transmit any additional client's information to any other client compared to traditional methods. On the contrary, \method~ replaces raw data prototypes with surrogate prototypes instead, thereby further protecting privacy compared to previous works~\cite{fedproto,wang2024taming}. We list the privacy clarification among surrogate distribution and gradient information in the following:
\begin{itemize}
    \item \textbf{Regarding the surrogate distribution privacy issue: } Because the surrogate is sampled from different Gaussian distributions, it does not contain any information related to the local data. Further, we transmit the aggregated class-wise prototypes of surrogate data. After aggregating all the high-dimensional embeddings of each class, the surrogate data information is further protected. Many papers~\cite{fedproto,wang2024taming} also transmit using the original data prototypes, which is a weaker level of information protection than \method.
    \item \textbf{Regarding the gradient information privacy issue:} \method~doesn't transmit the gradient information of a certain client to any other client. Every client only upload its own information to server and download the aggregated information from the server. This process is the same as most of other federated methods in uploading and downloading the aggregated information (Detailed information can be referred at the Tab.~\ref{tab:comparison_with_FedAvg_commu}). 
\end{itemize}
\subsection{Brief introduction of Nemenyi post-hoc test method}
\label{appendix:brief_intro_nemenyi}
The Nemenyi post-hoc test in Fig.~\ref{fig:stat_res} is a non-parametric statistical method used for pairwise comparisons of multiple groups~\cite{nemenyi1963distribution} (e.g., algorithms or models) after a significant result from a Friedman test (a non-parametric analog to repeated-measures ANOVA). It ranks the performance of each method across multiple independent runs or datasets and computes a "critical distance" (CD) threshold. If the average rank difference between two methods exceeds the CD, their performances are considered statistically significantly different at a given significance level (typically $\alpha$=0.05). The test is conservative and accounts for multiple comparisons to control the family-wise error rate, making it robust for scenarios like ours, where we evaluate algorithm robustness across diverse heterogeneity partitions (e.g., different random seeds for Dirichlet distributions). 
In Fig.~\ref{fig:stat_res}, the Nemenyi post-hoc test assesses the robustness of baseline methods across different heterogeneity scenarios. The results show overlapping CD intervals for most baselines, indicating no statistically significant performance differences among them. This finding highlights the need for our proposed approach, as existing methods exhibit limited adaptability to varied data distributions. Furthermore, the Nemenyi test has been widely adopted in holistic evaluations~\cite{demvsar2006statistical,chen2023disentangling,jansen2024statistical} and federated learning scenarios~\cite{yang2024federated,song2024causal}.

\subsection{Theoretical Justification of Dynamic Path-oriented Rectification }
\label{appendix:thero_just}
The insight behind incorporating information from other clients in \method~stems from works like SCAFFOLD~\cite{scaffold} and other related research~\cite{padamfed}, which also use other client information as a control variate to adjust update direction. Beyond this intuition, we provide a theoretical justification using a Taylor expansion to demonstrate that integrating other clients' information can indeed further decrease the deviation between local and global update directions.

We denote local loss function as $f_k(\cdot):\mathbb{R}^d\rightarrow\mathbb{R}$ and global loss function $F(\cdot)$. First of all: The original local update use the vanilla gradient descent on the local model $\model_k$ is denoted as $g_{\text{old}}=\nabla f_k(\model_k).$ For \method, we incorporate non-self gradient $\delta_{\model_k}$ to local model , we denote $\frac{\delta_{\model_k}}{||\delta_{\model_k}||}$ as $g_k'$:
$$\model'_k=\model+\lambda_g g_k'.$$
Then we get a new update direction computed based on the new model parameters $\model_k'$:
$$g_{\text{new}}=\nabla f_k(\model_k').$$ 
For a continuously twice-differentiable function $f_k(\model)$, the gradient function$\nabla f_k(\model)$ expands around point $\model$ along direction $g_k'$:
$$\nabla f_k (\model_k+\lambda_g g_k')\approx \nabla f_k(\model_k) +\nabla^2 f_k(\model_k)(\lambda_g g_k')+R_3,$$
where $R_3$ represents higher-order terms that can be neglected and $\nabla^2 f_k$ is the Hessian at $\model_k$. Thus we can get:
$$\nabla f_k (\model_k')\approx \nabla f_k(\model_k) +\lambda_g \nabla^2 f_k(\model_k)g_k' .$$
Here, we assume that the loss function is convex. So the Hessian $\bar{H}=\nabla^2 f_k$  is positive semi-definite. Ideally, we assume non-self gradients contain all the gradients from other clients; we can denote $\delta_{\model_k} = \nabla F(\model) - \nabla f_k(\model_k)$. Substitute into the expansion:
$$\nabla f_k(\model_k') \approx \nabla f_k(\model_k) + \lambda_g \bar{H} (\nabla F(\model) - \nabla f_i(\model))=(I - \lambda_g \bar{H}) \nabla f_k(\model_k) + \lambda_g \bar{H} \nabla F(\model),$$
where $I$ is the identity matrix. As a result:
- The original bias between local and global gradient: $d_0 = ||g_{\text{old}} - \nabla F(\model)||$
- Refined update direction bias between new model parameters $\model_k'$ and global gradient: $d'= \|g_{\text{new}} - \nabla F(\model)\| \approx \|(I - \lambda_g \bar{H}) (\nabla f_k(\model) - \nabla F(\model))\| \leq \|I - \lambda_g \bar{H}\| \cdot d_0$. If $\lambda_g$ is tuned to make $||I - \lambda_g \bar{H} < 1||$, then $d'<d_0$, which reduces the shift.

In practice, direct access to all client gradient information is often limited due to privacy and communication overhead. Nevertheless, through careful hyperparameter tuning, \method~consistently achieves state-of-the-art (SOTA) performance across various heterogeneous scenarios.

\section{More Experimental Details}
\label{appendix:more_exp_details}
In this section, we present a comprehensive overview of our experimental implementation and process. First, we visualize various data distributions across diverse heterogeneous scenarios (Sec.~\ref{appendix:more_data_dis}). Next, we provide the hyperparameters for both the baselines and \method~(Sec.~\ref{appendix:hyper_parameters}). Additionally, this section includes the process and pseudocode for \method~(Sec.~\ref{appendix:pseudocode}).
\subsection{More Data Distribution}
\label{appendix:more_data_dis}
We present a detailed visualization of different data distribution across various datasets and client numbers with the same heterogeneity degree \(\alpha=0.1\). A heatmap visualizes the distribution, with darker colors indicating higher quantities for the corresponding label.
\begin{figure}[htbp]
    \centering
    \includegraphics[width=1.0\linewidth]{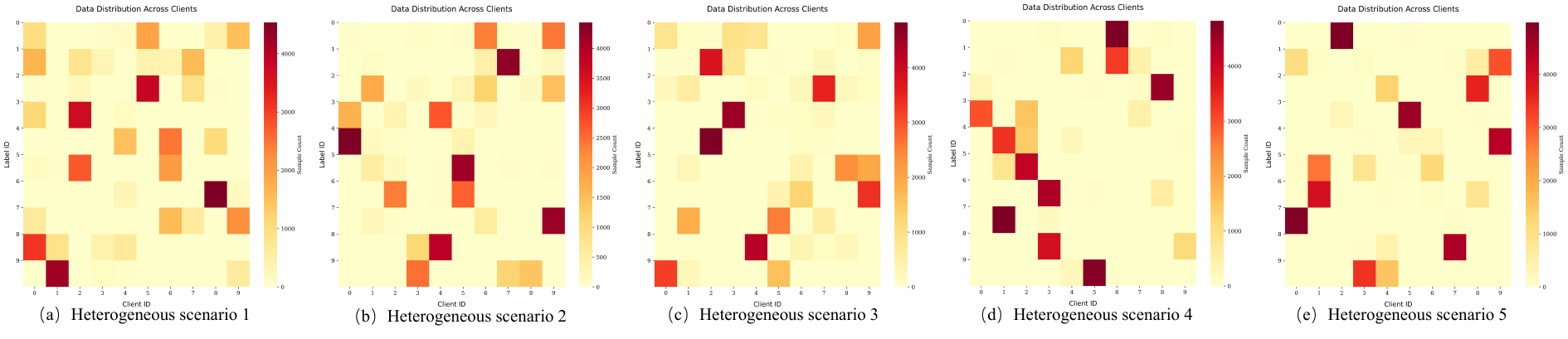}
    \caption{The visualization of the CIFAR-10 dataset distribution across $K=10$ clients under five different heterogeneous scenarios.}
    \label{fig:vis_cifar10_10}
\end{figure}
\begin{figure}[htbp]
    \centering
    \includegraphics[width=1.0\linewidth]{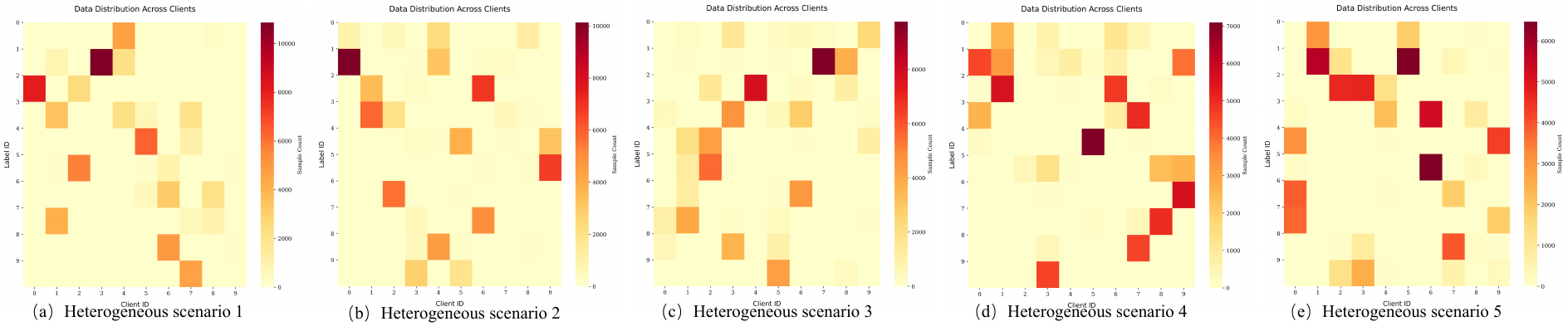}
    \caption{The visualization of the SVHN dataset distribution across $K=10$ clients under five different heterogeneous scenarios.}
    \label{fig:vis_SVHN_10}
\end{figure}
\begin{figure}[htbp]
    \centering
    \includegraphics[width=1.0\linewidth]{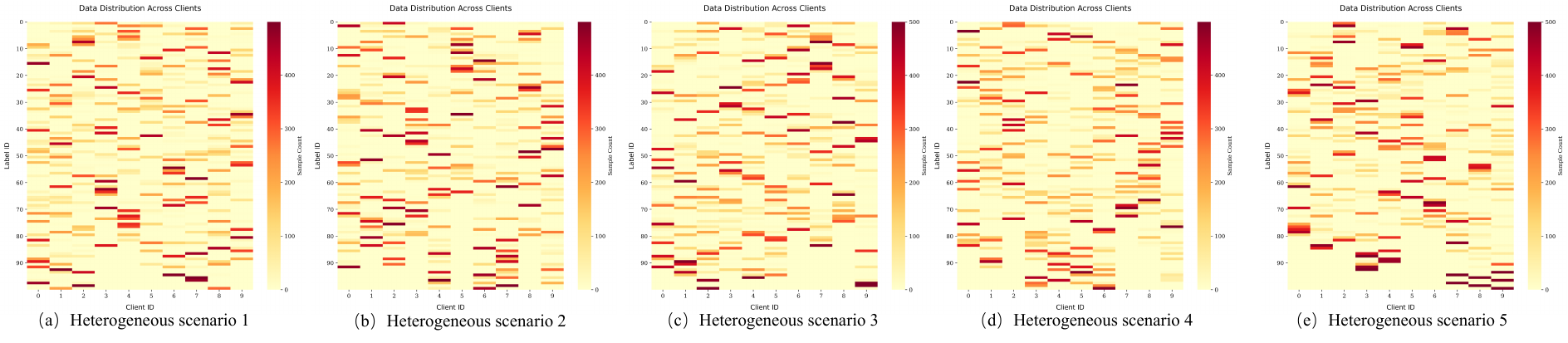}
    \caption{The visualization of the CIFAR-100 dataset distribution across $K=10$ clients under five different heterogeneous scenarios.}
    \label{fig:vis_cifar100_10}
\end{figure}
\begin{figure}[htbp]
    \centering
    \includegraphics[width=1.0\linewidth]{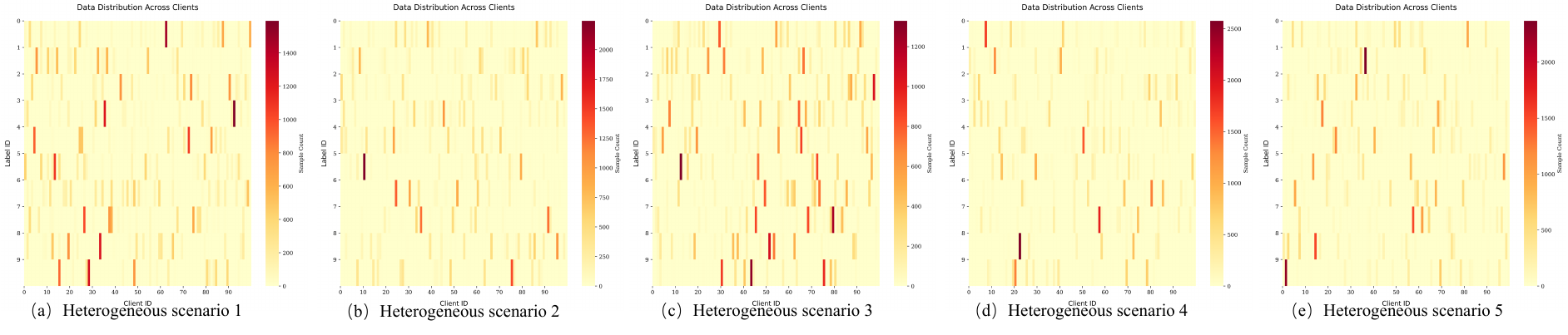}
    \caption{The visualization of the CIFAR-10 dataset distribution across $K=100$ clients under five different heterogeneous scenarios.}
    \label{fig:vis_cifar10_100}
\end{figure}
\begin{figure}[htbp]
    \centering
    \includegraphics[width=1.0\linewidth]{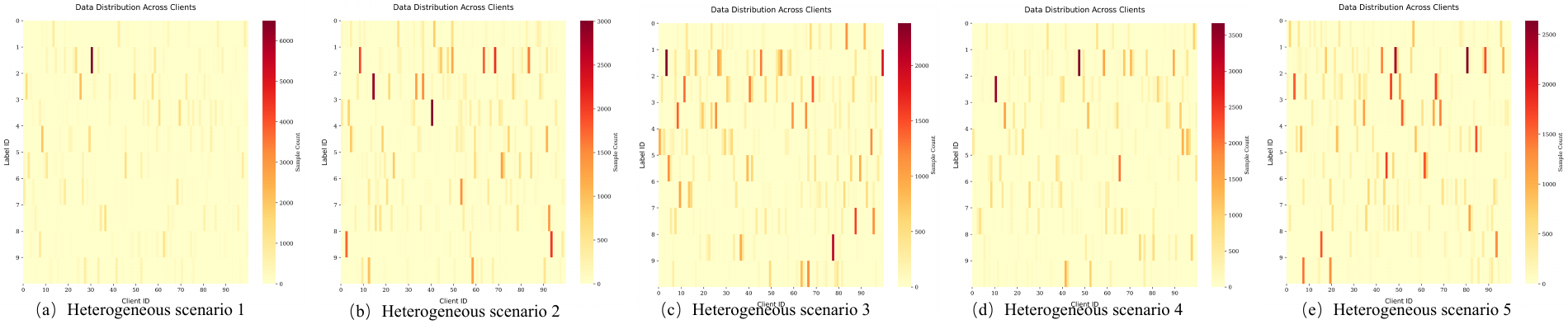}
    \caption{The visualization of the SVHN dataset distribution across $K=10$ clients under five different heterogeneous scenarios.}
    \label{fig:vis_SVHN_100}
\end{figure}
\begin{figure}[htbp]
    \centering
    \includegraphics[width=1.0\linewidth]{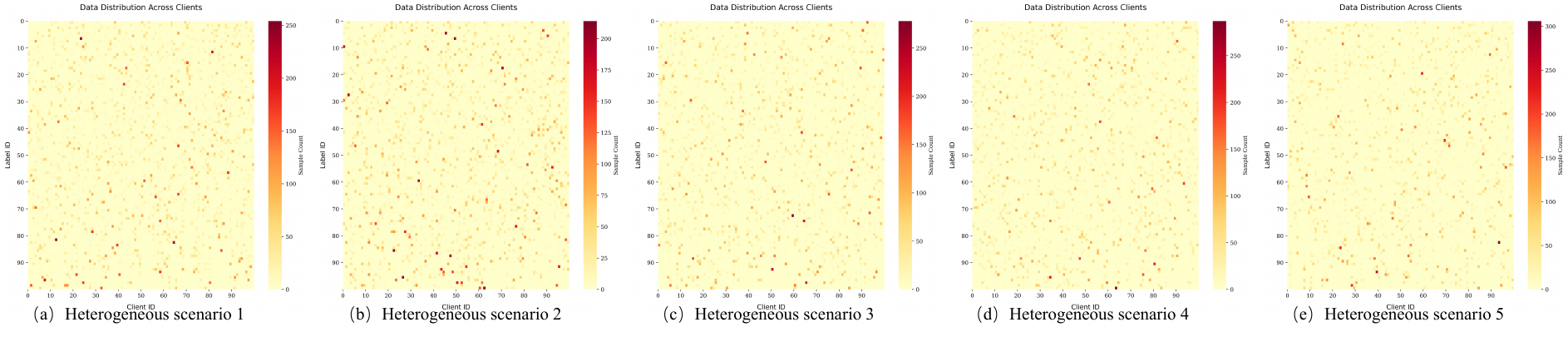}
    \caption{The visualization of the CIFAR-100 dataset distribution across $K=100$ clients under five different heterogeneous scenarios.}
    \label{fig:vis_cifar100_100}
\end{figure}

The Figs~\ref{fig:vis_cifar10_10},~\ref{fig:vis_SVHN_10},~\ref{fig:vis_cifar100_10},~\ref{fig:vis_cifar10_100},~\ref{fig:vis_SVHN_100} and ~\ref{fig:vis_cifar100_100} demonstrate that, despite using the same dataset and degree of heterogeneity, the data distributions across different scenarios vary significantly. This variation can lead to differing performances of the same algorithm. Therefore, the algorithm's robustness across various heterogeneous scenarios is crucial.
\begin{table}[t]
\renewcommand{\arraystretch}{1.5}
\caption{All the hyperparameters of the compared baselines, including learning rate, momentum, weight decay, Nesterov, and the remaining hyperparameters of the method itself. }
\label{tab:other_baselines_hyper}
\resizebox{\linewidth}{!}{
\begin{tabular}{r||ccccc}
\toprule[1.5pt]
Method    & Learning rate & Momentum & Weight decay & Nesterov & Other hyperparameters                                   \\
\hline
\hline
FedAvg    & 0.01          & 0.9      & 0.00001      & False    & None                                                    \\
FedAvgM   & 0.01          & 0.9      & 0.00001      & True     & Momentum coefficient: 0.9                               \\
FedProx   & 0.01          & 0.9      & 0.00001      & False    & Proximal coefficient: 0.125                             \\
SCAFFOLD  & 0.01          & 0.9      & 0.00001      & False    & Global step size: 1.0                                   \\
CCVR      & 0.001         & 0.9      & 0.00001      & False    & None                                                    \\
VHL       & 0.01          & 0.9      & 0.00001      & False    & VHL alpha:1.0                                           \\
FedASAM   & 0.01          & 0.9      & 0.00001      & False    & Rho: 0.1, eta: 0                                        \\
FedExp    & 0.01          & 0.9      & 0.00001      & False    & Eps: 1e-3, eta\_g: 1.0, lr\_weight\_decay: 0.998        \\
FedDecorr & 0.01          & 0.9      & 0.00001      & False    & Feddecorr term coefficient: 0.1                         \\
FedDisco  & 0.01          & 0.9      & 0.00001      & False    & Metri: 'KL divergence' feddisco a: 0.5, feddisco b: 0.1 \\
FedInit   & 0.1           & None     & 0.001        & False    & Beta: 0.1                                               \\
FedLESAM  & 0.1           & None     & 0.001        & False    & Rho:0.5, beta:0.1, max\_norm:10.0, global step size:1.0 \\
NUCFL     & 0.001         & 0.9      & 0.0001       & False    & Calibration method: DCA,  Non-uniform penalty: CKA    \\ 
\textbf{\method~(Ours)}& 0.01          & 0.9      & 0.00001      & False&\(\lambda_1:0.1,\lambda_2:0.2,\lambda_g:0.5,\eta_g:1.0\) \\
\bottomrule[1.5pt]
\end{tabular}}
\end{table}
\subsection{Detailed Hyperparameters}
\label{appendix:hyper_parameters}

We list all the hyperparameters of the baselines and our framework \method~ in the Tab.~\ref{tab:other_baselines_hyper}.

\subsection{Process and Pseudocode of Algorithm}
\label{appendix:pseudocode}
\begin{algorithm}[t]
    \renewcommand\arraystretch{3}
	\caption{Pseudo-code of \method}
	\label{algo:fedgps}
	\textbf{Server input: } communication round $T$, server initialize the model $\model^0$\\
    \textbf{Client $k$'s input:} local epochs $E$,  $k$-th local dataset $\mathcal{D}^k$
    
	\begin{algorithmic}
		\STATE {\bfseries Initialization:} all clients initialize the model $\model_{k}^0$ and surrogate data $\mathcal{D}^s$.
        \STATE \textbf{Server Executes:}
            \FOR{each round   $ t=1,2, \cdots, T$}
        	\STATE server random samples a subset of clients $\mathcal{S}_t \subseteq \mathcal{K}$, 
    
                \STATE server \textbf{communicates} $\model^t$ to selected clients 
        		\FOR{ each client $ k \in \mathcal{S}_{r}$ \textbf{ in parallel }}
                       \STATE $ \Delta_k^{t+1} \leftarrow $ \text{Local\_Training} $(k, \model^t)$
                       
        		\ENDFOR
             \STATE $\model^{t+1} = \model^t + \eta_g\frac{1}{|\mathcal{S}_t|}\sum_{k\in\mathcal{S}_t}\Delta_k^{t+1}$
        	\ENDFOR
        
       \STATE
        \STATE \textbf{Local\_Training$(k, \model^t)$:}
        \STATE Update local model by global model \(\model^t_k\leftarrow\model^t\)
        \STATE \colorbox{cyan!10}{Compute \(\delta_{\model_k}^t\) using Definition 4.2}
        \FOR{each iterations  $ e=1,2, \cdots, E$}
        \STATE\colorbox{cyan!10}{ Compute the \(\hat{\mathbf{g}}_k^{t+1,e+1}\) new gradient using Eq.~\ref{eq:our_obj_final} and Eq.~\ref{eq:dynamic_rec}}
            \STATE \colorbox{cyan!10}{Update local model at \(e\) iteration: \(\model_k^{t+1,e+1} = \model_k^{t+1,e} - \eta_l\hat{\mathbf{g}}_k^{t+1,e+1}\)}
        \ENDFOR
        \STATE Compute the update information at this round for client $k$: $\Delta_k^{t+1} = \model_k^{t+1} - \model_k^{t}$
        \STATE Storage $\Delta_k^{t+1}$ for Non-self gradient computation
        \STATE \textbf{Return} $\Delta_k^{t+1} $ to server
        \STATE
\end{algorithmic}
\end{algorithm}
In the Algorithm section, we elaborate on the pseudocode workflow of \method~ in Algorithm~\ref{algo:fedgps}. Consistent with other federated learning (FL) frameworks, we predefine the number of communication rounds and initialize both global and local model parameters. In each round, a subset of \(|\mathcal{S}_t|\) clients is sampled for local training and model weight communication. Distinctively, \method~ first computes the non-self gradient locally. This non-self gradient is then used to derive new weights, corresponding to the dynamic path-oriented rectification process. Subsequently, using these new weights, a new gradient direction is obtained via Eq.~\ref{eq:our_obj_final}, which aligns with the static goal-oriented learning objective. The original parameters are then updated based on this new gradient direction. We highlight the key differences from the vanilla FedAvg algorithm to underscore the unique contributions of \method~. All related hyperparametrs can be referred to Tab.~\ref{tab:other_baselines_hyper}.

\begin{table}[t]
    \caption{Top-1 accuracy of baselines and our method \method~with 5 different heterogeneous scenarios on SVHN, heterogeneity degree $\alpha=0.1$, local epochs $E=1$ and total client number $K=100$.}
    \label{tab:SVHN_100_10_1}

    \setlength\tabcolsep{2.5pt}
    \setlength\arrayrulewidth{1.0pt}
    \renewcommand\arraystretch{1.5}
    \resizebox{\linewidth}{!}{
    \begin{tabular}{r||ccc|ccc|ccc|ccc|ccc||c}
    \toprule[1.5pt]
\multicolumn{17}{c}{\cellcolor{greyL}Dataset: SVHN  Heterogeneity Level:$\mathbf{\alpha=0.1}$ Client Number:$\mathbf{K=100}$, Client Sampling Rate: $\mathbf{10\%}$ Total Communication Round:$\mathbf{T=500}$ Local Epochs:$\mathbf{E=1}$}\\
    \midrule[1.5pt]
    Diff Scenario&\multicolumn{3}{c|}{\heterone} & \multicolumn{3}{c|}{\hetertwo} & \multicolumn{3}{c|}{\heterthree} & \multicolumn{3}{c|}{\heterfour} & \multicolumn{3}{c||}{\heterfive} & \\
    \cmidrule{1-17}
     \multicolumn{17}{c}{\cellcolor{greyL}Centralized Training Acc=97\%}  \\
    \hline
    & ACC$\uparrow$ & ROUND $\downarrow$& SpeedUp$\uparrow$ & ACC$\uparrow$ & ROUND $\downarrow$& SpeedUp$\uparrow$  & ACC$\uparrow$ & ROUND $\downarrow$& SpeedUp$\uparrow$ &  ACC$\uparrow$ & ROUND $\downarrow$& SpeedUp$\uparrow$ &  ACC$\uparrow$ & ROUND $\downarrow$& SpeedUp$\uparrow$  &  \\
    \hline
\cmidrule(lr){1-17}
Methods&\multicolumn{3}{c|}{ Target Acc=91\%}&\multicolumn{3}{c|}{ Target Acc=91\%}&\multicolumn{3}{c|}{Target Acc=91\%}&\multicolumn{3}{c|}{Target Acc=91\%}&\multicolumn{3}{c||}{Target Acc=91\%}&Mean Acc$\pm$ Std \\ 
\cmidrule(lr){1-17}
    FedAvg&$91.15$&$473$&$1.0\times$&$91.06$&$480$&$1.0\times$&$91.94$&$393$&$1.0\times$&$91.67$&$437$&$1.0\times$&$91.17$&$399$&$1.0\times$&$91.40\pm 0.39$  \\
    FedAvgM&$92.06$&$368$&$1.3\times$&$92.55$&$348$&$1.4\times$&$93.20$&$319$&$1.2\times$&$92.35$&$497$&$0.9\times$&$91.38$&$457$&$0.9\times$&$92.31 \pm 0.67$  \\

    FedProx&$90.92$&None&None&$91.73$&$476$&$1.0\times$&$91.97$&$496$&$0.8\times$&$90.97$&None&None&$91.21$&$471$&$0.8\times$&$91.36 \pm 0.47$ \\
    SCAFFOLD &$93.25$&$300$&$1.6\times$&$91.86$&$353$&$1.4\times$&$92.07$&$415$&$0.9\times$&$92.06$&$434$&$1.0\times$&$93.00$&$309$&$1.3\times$&$92.45 \pm 0.63$ \\
    CCVR &$91.74$&$\mathbf{107}$&$\mathbf{4.7\times}$&$92.62$&$\mathbf{102}$&$\mathbf{4.0\times}$&$91.71$&$\mathbf{104}$&$\mathbf{4.5\times}$&$91.61$&$\mathbf{93}$&$\mathbf{4.6\times}$&$90.70$&$\mathbf{155}$&$\mathbf{2.8\times}$&$91.68 \pm 0.68$ \\
    VHL &$93.47$&$314$&$1.5\times$&$93.75$&$255$&$1.9\times$&$94.43$&$293$&$1.3\times$&$94.23$&$265$&$1.6\times$&$94.05$&$298$&$1.3\times$&$93.99 \pm 0.38$ \\
    FedASAM &$90.84$&None&None&$91.11$&$476$&$1.0\times$&$92.55$&$392$&$1.0\times$&$91.90$&$434$&$1.0\times$&$92.11$&$479$&$0.8\times$&$91.70 \pm 0.71$ \\
    FedExp &$90.91$&None&None&$90.73$&None&None&$92.11$&$496$&$0.8\times$&$91.43$&None&None&$91.40$&$398$&$1.0\times$&$91.32 \pm 0.54$ \\
    FedDecorr &$91.01$&None&None&$91.15$&$479$&$1.0\times$&$91.83$&None&None&$90.89$&None&None&$90.47$&None&None&$91.07 \pm 0.49$ \\
    FedDisco &-&-&-&-&-&-&-&-&-&-&-&-&-&-&-&- \\
    FedInit &$94.13$&$259$&$1.8\times$&$93.46$&$291$&$1.6\times$&$94.25$&$293$&$1.3\times$&$94.28$&$293$&$1.5\times$&$94.09$&$320$&$1.2\times$&$94.04 \pm 0.33$ \\
    FedLESAM &$94.62$&$211$&$2.2\times$&$94.83$&$160$&$3.0\times$&$94.65$&$204$&$1.9\times$&$94.67$&$191$&$2.3\times$&$94.78$&$187$&$2.1\times$&$94.71 \pm 0.09$ \\
    NUCFL & $90.61$&None&None&$91.08$&$458$&$1.0\times$&$91.26$&None&None&$91.29$&None&None&$91.41$&$479$&$0.8\times$&$91.13 \pm 0.31$ \\
    \rowcolor{TgreyL}\cellcolor{white}\textbf{\method(Ours)}&$\mathbf{95.03}$&$213$&$2.2\times$&$\mathbf{95.17}$&$181$&$2.7\times$&$\mathbf{95.14}$&$239$&$1.6\times$&$\mathbf{95.01}$&$223$&$2.0\times$&$\mathbf{95.05}$&$198$&$2.0\times$&$\mathbf{95.08 \pm 0.07}$  \\
    \bottomrule[1.5pt]
    \end{tabular}}
\end{table}
\begin{table}[t]
    \caption{Top-1 accuracy of baselines and our method \method~with 5 different heterogeneous scenarios on CIFAR-100, heterogeneity degree $\alpha=0.1$, local epochs $E=1$ and total client number $K=100$.}
    \label{tab:CIFAR100_100_10_1}
    \setlength\tabcolsep{2.5pt}
    \setlength\arrayrulewidth{1.0pt}
        \renewcommand\arraystretch{1.5}
    \resizebox{\linewidth}{!}{
    \begin{tabular}{r||ccc|ccc|ccc|ccc|ccc||c}
    \toprule[1.5pt]
\multicolumn{17}{c}{\cellcolor{greyL}Dataset: CIFAR-100  Heterogeneity Level:$\mathbf{\alpha=0.1}$ Client Number:$\mathbf{K=100}$, Client Sampling Rate: $\mathbf{10\%}$ Total Communication Round:$\mathbf{T=500}$ Local Epochs:$\mathbf{E=1}$}\\
    \midrule[1.5pt]
    Diff Scenario&\multicolumn{3}{c|}{\heterone} & \multicolumn{3}{c|}{\hetertwo} & \multicolumn{3}{c|}{\heterthree} & \multicolumn{3}{c|}{\heterfour} & \multicolumn{3}{c||}{\heterfive} & \\
    \cmidrule{1-17}
     \multicolumn{17}{c}{\cellcolor{greyL}Centralized Training Acc=78\%}  \\
    \hline
    & ACC$\uparrow$ & ROUND $\downarrow$& SpeedUp$\uparrow$ & ACC$\uparrow$ & ROUND $\downarrow$& SpeedUp$\uparrow$  & ACC$\uparrow$ & ROUND $\downarrow$& SpeedUp$\uparrow$ &  ACC$\uparrow$ & ROUND $\downarrow$& SpeedUp$\uparrow$ &  ACC$\uparrow$ & ROUND $\downarrow$& SpeedUp$\uparrow$  &  \\
    \hline
\cmidrule(lr){1-17}
Methods&\multicolumn{3}{c|}{ Target Acc=44\%}&\multicolumn{3}{c|}{ Target Acc=43\%}&\multicolumn{3}{c|}{Target Acc=41\%}&\multicolumn{3}{c|}{Target Acc=42\%}&\multicolumn{3}{c||}{Target Acc=33\%}&Mean Acc$\pm$ Std \\ 
\cmidrule(lr){1-17}
    FedAvg&$44.72$& $484$ & $1.0\times$ &$43.40$ & $500$&$1.0\times$& $41.55$&$483$ &$1.0\times$&$42.43$ &$489$ &$1.0\times$&$33.17$ & $492$&$1.0\times$&$41.05\pm4.56$ \\
    FedAvgM&$47.22$&$457$&$1.1\times$&$49.55$&$393$&$1.3\times$&$49.11$&$372$&$1.3\times$&$50.69$&$340$&$1.4\times$&$47.66$&$237$&$2.1\times$&$48.85 \pm 1.42$  \\

    FedProx&$41.65$&None&None&$37.13$&None&None&$38.13$&None&None&$40.73$&None&None&$28.26$&None&None&$37.18 \pm 5.32$\\
    SCAFFOLD &$43.04$&None&None&$43.34$&$496$&$1.0\times$&$41.72$&$483$&$1.0\times$&$41.36$&None&None&$40.14$&$352$&$1.4\times$&$41.92 \pm 1.30$ \\
    CCVR  &-&-&-&-&-&-&-&-&-&-&-&-&-&-&-&-\\
    VHL&$54.51$&$334$&$1.4\times$&$52.45$&$337$&$1.5\times$&$53.12$&$304$&$1.6\times$&$53.83$&$320$&$1.5\times$&$51.72$&$255$&$1.9\times$&$53.13 \pm 1.10$ \\
    FedASAM &$43.63$&None&None&$45.69$&$449$&$1.1\times$&$44.66$&$430$&$1.1\times$&$46.28$&$432$&$1.1\times$&$39.93$&$420$&$1.2\times$&$44.04 \pm 2.51$ \\
    FedExp &-&-&-&-&-&-&-&-&-&-&-&-&-&-&-&- \\
    FedDecorr &$47.76$&$427$&$1.1\times$&$44.06$&$475$&$1.1\times$&$46.04$&$424$&$1.1\times$&$47.85$&$391$&$1.3\times$&$45.03$&$319$&$1.5\times$&$46.15 \pm 1.67$ \\
    FedDisco &-&-&-&-&-&-&-&-&-&-&-&-&-&-&-&-  \\
    FedInit &$56.69$&$354$&$1.4\times$&$57.06$&$338$&$1.5\times$&$54.59$&$339$&$1.4\times$&$56.60$&$334$&$1.5\times$&$55.80$&$257$&$1.9\times$&$56.15 \pm 0.98$ \\
    FedLESAM&$57.78$&$279$&$1.7\times$&$57.17$&$269$&$1.9\times$&$55.07$&$264$&$1.8\times$&$56.65$&$277$&$1.8\times$&$56.14$&$211$&$2.3\times$&$56.56 \pm 1.03$  \\
    NUCFL &-&-&-&-&-&-&-&-&-&-&-&-&-&-&-&- \\
    \rowcolor{TgreyL}\cellcolor{white}\textbf{\method(Ours)}&$\mathbf{58.77}$&$\mathbf{264}$&$\mathbf{1.8\times}$&$\mathbf{57.84}$&$\mathbf{282}$&$\mathbf{1.8\times}$&$\mathbf{55.89}$&$\mathbf{249}$&$\mathbf{1.9\times}$&$\mathbf{57.52}$&$\mathbf{260}$&$\mathbf{1.9\times}$&$\mathbf{57.93}$&$\mathbf{207}$&$\mathbf{2.4\times}$& $\mathbf{57.59 \pm 1.06}$ \\
    \bottomrule[1.5pt]
    \end{tabular}}
\end{table}
\section{Further Experimental Results}
\label{appendix:more_exp_res}
In this section, we perform additional ablation studies to demonstrate the effectiveness of \method. We first compare with more baselines (Sec.~\ref{appendix:more_baselines_subsection}). Then we explore various settings, including different numbers of clients (Sec.~\ref{appendix:exp_more_clients}), varying local training epochs (Sec.~\ref{appendix:exp_more_local_epochs}), various client sampling rate (Sec.~\ref{appendix:diff_client_sampling_rate}), different degrees of heterogeneity (Sec.~\ref{appendix:exp_more_lheterogeneity_degree}), another heterogeneity partition method (Sec.~\ref{appendix:diff_partition_strategy}), and multiple training seeds to ensure robustness against variations in training initialization and procedures arising from random client sampling (Sec.~\ref{appendix:exp_more_training_seeds}). Lastly, we give some visualization of the whole training process to verify the performance and convergence (Sec.~\ref{appendix:more_visual_res}).

\subsection{More Baselines}
\label{appendix:more_baselines_subsection}
\begin{table}[t]
\centering
\renewcommand{\arraystretch}{1.5}
\caption{The results comparison between \method and more baselines under Heterogeneous scenario 1 with different datasets.}
\label{tab:more_baselines}
\resizebox{\linewidth}{!}{
\begin{tabular}{cccc}
\toprule[1.5pt]
& $K=10,\lambda_s=50\%,R=500, E=1$ & $K=10,\lambda_s=50\%,R=200, E=5$ & $K=100,\lambda_s=10\%,R=500, E=1$ \\
\hline\hline
\multicolumn{4}{c}{CIFAR-10}\\
\hline
FedAdam~\cite{reddiadaptive}& 85.34&85.05&67.43\\
\(\Delta\)-SGD~\cite{kimadaptive} & 86.96&86.66&69.49\\
FedMR~\cite{hu2024aggregation} &84.28&86.83&-- \\
\rowcolor{cyan!10}\method & \textbf{90.31} & \textbf{88.47} & \textbf{78.32} \\
\hline\hline
\multicolumn{4}{c}{SVHN}\\
\hline
FedAdam~\cite{reddiadaptive}&89.80&91.01&93.27 \\
\(\Delta\)-SGD~\cite{kimadaptive} & 89.78&90.21&94.08\\
FedMR~\cite{hu2024aggregation} &91.32&86.64&-- \\
\rowcolor{cyan!10}\method &\textbf{94.20}&\textbf{93.61}&\textbf{95.03}\\
\hline\hline
\multicolumn{4}{c}{CIFAR-100}\\
\hline
FedAdam~\cite{reddiadaptive}&69.89&65.33&55.43\\
\(\Delta\)-SGD~\cite{kimadaptive} &70.07&67.78&57.48\\
FedMR~\cite{hu2024aggregation} &69.45&67.45&--\\
\rowcolor{cyan!10}\method &\textbf{71.14}&\textbf{68.90}&\textbf{58.77}\\
\bottomrule[1.5pt]
\end{tabular}
}
\end{table}

Besides, we also compare with other strategies to mitigate the heterogeneity problem in FL. Firstly, we compare with adaptive methods. We select two representative adaptive methods~\cite{mukherjeelocally}, e.g., \(\Delta\)-SGD~\cite{kimadaptive} and FedAdam~\cite{reddiadaptive}. Furthermore, we also include the FedMR~\cite{hu2024aggregation}, which is a new method to modify the aggregation strategy~\cite{hu2024fedcross}. The results are shown in Tab.~\ref{tab:more_baselines}, \method~still outperforms these methods.

\subsection{Ablation Study on Client Number \texorpdfstring{$K$}{K}}
\label{appendix:exp_more_clients}
In this section, the results are listed on Tabs.~\ref{tab:SVHN_100_10_1} and ~\ref{tab:CIFAR100_100_10_1}. We extend our evaluation of \method~ beyond the CIFAR-10 dataset to include the SVHN and CIFAR-100 datasets, focusing on scenarios with a large number of clients (simulating cross-device settings). Specifically, we set 100 clients, with 10\% randomly sampled each round for local training with \(E=1\) local epoch, followed by aggregation. Experimental results reveal that, compared to the 10-client scenario, the 100-client setup with a lower sampling rate leads to reduced model performance within the same number of communication rounds. The SVHN dataset, owing to its relative simplicity, exhibits minimal performance degradation. In contrast, the CIFAR-100 dataset experiences a more pronounced impact. Furthermore, several baseline methods struggle to converge when training larger models, such as ResNet-50, on CIFAR-100 with a low sampling rate, often requiring extensive hyperparameter tuning to address these challenges. Further experiments are needed to investigate these issues thoroughly. We further conduct additional client experiments, e.g., 500 clients across the entire FL system, as shown in Table~\ref {tab:500_client}. The results still show the superior performance of \method. Beyond the ResNet-based model, \method~also consistently shows improvement on ViT-based models, as Tab.~\ref{tab:100_vit_model} shows.

\begin{table}[h]
\centering
\begin{minipage}{0.45\textwidth}
\centering
\caption{Experimental results of a larger number under Heterogeneous scenario 1 with CIFAR-10 dataset.}
\label{tab:500_client} \scalebox{0.89}{
\begin{tabular}{l r | l r}
\toprule[1.5pt]
\multicolumn{4}{c}{$K=500,\lambda_s=10\%$,R=500, E=1} \\
\toprule[1.5pt]
FedAvg & 73.19 & FedExp & 74.48 \\
FedAvgM & 75.43 & FedDecorr & 74.24 \\
FedProx & 76.72 & FedDisco & - \\
SCAFFOLD & 64.18 & FedInit & 73.05 \\
CCVR & 64.43 & FedLESAM & 80.67 \\
VHL & 80.58 & NUCFL & 73.98 \\
FedASAM & 75.06 & \textbf{\method(Ours)} & \textbf{82.18} \\
\bottomrule[1.5pt]
\end{tabular}}
\end{minipage}
\hfill
\begin{minipage}{0.48\textwidth}
\centering
\caption{Experimental results of ViT-based model for 100 clients under Heterogeneous scenario 1 with CIFAR-10 dataset.}
\label{tab:100_vit_model} \scalebox{0.89}{
\begin{tabular}{l r | l r}
\toprule[1.5pt]
\multicolumn{4}{c}{$K=100,\lambda_s=10\%$,R=500, E=1 } \\
\toprule[1.5pt]
FedAvg & 30.45 & FedExp & 28.35 \\
FedAvgM & 33.56 & FedDecorr & 39.63 \\
FedProx & 48.46 & FedDisco & - \\
SCAFFOLD & 35.38 & FedInit & 32.34 \\
CCVR & - & FedLESAM & 45.83 \\
VHL & 47.36 & NUCFL & 38.89 \\
FedASAM & 35.43 & \textbf{\method(Ours)} & \textbf{56.71} \\
\bottomrule[1.5pt]
\end{tabular}}
\end{minipage}
\end{table}

\subsection{Ablation Study on Local Epoch \texorpdfstring{$E$}{E}}
\label{appendix:exp_more_local_epochs}
\begin{table}[t]
    \caption{Top-1 accuracy of baselines and our method \method~with 5 different heterogeneous scenarios on CIFAR-10, heterogeneity degree $\alpha=0.1$, local epochs $E=5$ and total client number $K=10$.}
    \label{tab:CIFAR10_10_5_5}
    \setlength\tabcolsep{2.5pt}
    \setlength\arrayrulewidth{1.0pt}
        \renewcommand\arraystretch{1.5}
    \resizebox{\linewidth}{!}{
    \begin{tabular}{r||ccc|ccc|ccc|ccc|ccc||c}
    \toprule[1.5pt]
\multicolumn{17}{c}{\cellcolor{greyL}Dataset: CIFAR-10  Heterogeneity Level:$\mathbf{\alpha=0.1}$ Client Number:$\mathbf{K=10}$, Client Sampling Rate: $\mathbf{50\%}$ Total Communication Round:$\mathbf{T=200}$ Local Epochs:$\mathbf{E=5}$}\\
    \midrule[1.5pt]
    Diff Scenario&\multicolumn{3}{c|}{\heterone} & \multicolumn{3}{c|}{\hetertwo} & \multicolumn{3}{c|}{\heterthree} & \multicolumn{3}{c|}{\heterfour} & \multicolumn{3}{c||}{\heterfive} & \\
    \cmidrule{1-17}
     \multicolumn{17}{c}{\cellcolor{greyL}Centralized Training Acc=95\%}  \\
    \hline
    & ACC$\uparrow$ & ROUND $\downarrow$& SpeedUp$\uparrow$ & ACC$\uparrow$ & ROUND $\downarrow$& SpeedUp$\uparrow$  & ACC$\uparrow$ & ROUND $\downarrow$& SpeedUp$\uparrow$ &  ACC$\uparrow$ & ROUND $\downarrow$& SpeedUp$\uparrow$ &  ACC$\uparrow$ & ROUND $\downarrow$& SpeedUp$\uparrow$  &  \\
    \hline
\cmidrule(lr){1-17}
Methods&\multicolumn{3}{c|}{ Target Acc=85\%}&\multicolumn{3}{c|}{ Target Acc=85\%}&\multicolumn{3}{c|}{Target Acc=84\%}&\multicolumn{3}{c|}{Target Acc=72\%}&\multicolumn{3}{c||}{Target Acc=65\%}&Mean Acc$\pm$ Std \\ 
\cmidrule(lr){1-17}
    FedAvg &$85.89$&$192$&$1.0\times$&$85.36$&$136$&$1.0\times$&$84.21$&$147$&$1.0\times$&$72.71$&$91$&$1.0\times$&$65.51$&$151$&$1.0\times$&$78.76\pm 8.17$ \\
    FedAvgM &$85.60$&$118$&$1.6\times$&$86.84$&$113$&$1.2\times$&$84.06$&$140$&$1.1\times$&$75.56$&$67$&$1.4\times$&$70.35$&$75$&$2.0\times$&$80.48 \pm 7.18$ \\

    FedProx&$85.52$&$158$&$1.2\times$&$84.31$&None&None&$82.87$&None&None&$76.16$&$87$&$1.0\times$&$74.60$&$62$&$2.4\times$&$80.69 \pm 4.97$ \\
    SCAFFOLD &$83.75$&None&None&$80.10$&None&None&$82.14$&None&None&$73.32$&$90$&$1.0\times$&$74.14$&$47$&$3.2\times$ & $78.69 \pm 4.72$ \\
    CCVR &$83.95$&None&None&$83.87$&None&None&$83.32$&None&None&$78.54$&$21$&$4.3\times$&$75.36$&$15$&$10.1\times$&$81.01 \pm 3.88$ \\
    VHL &$88.10$&$92$&$2.1\times$&$86.40$&$148$&$0.9\times$&$84.50$&$146$&$1.0\times$&$80.91$&$47$&$1.9\times$&$76.88$&$67$&$2.3\times$&$83.36 \pm 4.50$ \\
    FedASAM &$85.79$&$118$&$1.6\times$&$86.12$&$135$&$1.0\times$&$81.38$&None&None&$74.91$&$67$&$1.4\times$&$68.01$&$75$&$2.0\times$&$79.24 \pm 7.74$ \\
    FedExp &$85.05$&$118$&$1.6\times$&$85.64$&$135$&$1.0\times$&$82.49$&None&None&$74.15$&$67$&$1.4\times$&$73.36$&$67$&$2.3\times$&$80.14 \pm 5.95$ \\
    FedDecorr &$84.53$&None&None&$84.90$&None&None&$83.36$&None&None&$74.75$&$67$&$1.4\times$&$71.74$&$75$&$2.0\times$&$79.86 \pm 6.15$ \\
    FedDisco &$85.60$&$191$&$1.0\times$&$85.59$&$135$&$1.0\times$&$84.66$&$146$&$1.0\times$&$70.14$&None&None&$66.79$&$99$&$1.5\times$&$78.56 \pm 9.30$ \\
    FedInit &$79.23$&None&None&$74.43$&None&None&$75.76$&None&None&$61.05$&None&None&$62.82$&None&None&$70.66 \pm 8.18$ \\
    FedLESAM&$86.36$&$164$&$1.2\times$&$80.94$&None&None&$81.33$&None&None&$65.53$&None&None&$64.99$&None&None&$75.83 \pm 9.88$  \\
    NUCFL & $82.80$&None&None&$78.48$&None&None&$76.51$&None&None&$66.80$&None&None&$64.57$&None&None&$73.83 \pm 7.82$ \\
    \rowcolor{TgreyL}\cellcolor{white}\textbf{\method(Ours)}&$\mathbf{88.47}$&$\mathbf{68}$&$\mathbf{2.8\times}$&$\mathbf{87.96}$&$\mathbf{68}$&$\mathbf{2.0\times}$&$\mathbf{85.79}$&$\mathbf{146}$&$\mathbf{1.0\times}$&$\mathbf{84.69}$&$47$&$1.9\times$&$\mathbf{77.70}$&$44$&$3.4\times$&$\mathbf{84.92 \pm 4.32}$  \\
    \bottomrule[1.5pt]
    \end{tabular}}
\end{table}
\begin{table}[t]
    \caption{Top-1 accuracy of baselines and our method \method~with 5 different heterogeneous scenarios on SVHN, heterogeneity degree $\alpha=0.1$, local epochs $E=5$ and total client number $K=10$.}
    \label{tab:SVHN_10_5_5}
    \setlength\tabcolsep{2.5pt}
    \setlength\arrayrulewidth{1.0pt}
        \renewcommand\arraystretch{1.5}
    \resizebox{\linewidth}{!}{
    \begin{tabular}{r||ccc|ccc|ccc|ccc|ccc||c}
    \toprule[1.5pt]
\multicolumn{17}{c}{\cellcolor{greyL}Dataset: SVHN  Heterogeneity Level:$\mathbf{\alpha=0.1}$ Client Number:$\mathbf{K=10}$, Client Sampling Rate: $\mathbf{50\%}$ Total Communication Round:$\mathbf{T=200}$ Local Epochs:$\mathbf{E=5}$}\\
    \midrule[1.5pt]
    Diff Scenario&\multicolumn{3}{c|}{\heterone} & \multicolumn{3}{c|}{\hetertwo} & \multicolumn{3}{c|}{\heterthree} & \multicolumn{3}{c|}{\heterfour} & \multicolumn{3}{c||}{\heterfive} & \\
    \cmidrule{1-17}
     \multicolumn{17}{c}{\cellcolor{greyL}Centralized Training Acc=97\%}  \\
    \hline
    & ACC$\uparrow$ & ROUND $\downarrow$& SpeedUp$\uparrow$ & ACC$\uparrow$ & ROUND $\downarrow$& SpeedUp$\uparrow$  & ACC$\uparrow$ & ROUND $\downarrow$& SpeedUp$\uparrow$ &  ACC$\uparrow$ & ROUND $\downarrow$& SpeedUp$\uparrow$ &  ACC$\uparrow$ & ROUND $\downarrow$& SpeedUp$\uparrow$  &  \\
    \hline
\cmidrule(lr){1-17}
Methods&\multicolumn{3}{c|}{ Target Acc=89\%}&\multicolumn{3}{c|}{ Target Acc=91\%}&\multicolumn{3}{c|}{Target Acc=92\%}&\multicolumn{3}{c|}{Target Acc=92\%}&\multicolumn{3}{c||}{Target Acc=92\%}&Mean Acc$\pm$ Std \\ 
\cmidrule(lr){1-17}
    FedAvg &$89.21$&$167$&$1.0\times$&$91.99$&$101$&$1.0\times$&$92.51$&$100$&$1.0\times$&$92.55$&$133$&$1.0\times$&$92.57$&$65$&$1.0\times$&$91.77\pm1.45$ \\
    FedAvgM&$86.64$&None&None&$93.06$&$51$&$2.0\times$&$92.09$&$102$&$1.0\times$&$91.38$&None&None&$92.71$&$64$&$1.0\times$&$91.18 \pm 2.62$  \\

    FedProx&$88.81$&None&None&$91.69$&$86$&$1.2\times$&$93.32$&$82$&$1.2\times$&$91.78$&None&None&$93.22$&$52$&$1.2\times$&$91.76 \pm 1.82$ \\
    SCAFFOLD &$83.42$&None&None&$90.93$&None&None&$91.53$&None&None&$92.15$&$103$&$1.3\times$&$91.61$&None&None&$89.93 \pm 3.66$ \\
    CCVR &$85.06$&None&None&$90.24$&None&None&$91.26$&None&None&$90.48$&None&None&$91.41$&None&None&$89.69 \pm 2.64$ \\
    VHL &$93.10$&$80$&$2.1\times$&$93.56$&$67$&$1.5\times$&$94.31$&$64$&$1.6\times$&$93.62$&$71$&$1.9\times$&$94.03$&$57$&$1.1\times$&$93.72 \pm 0.46$ \\
    FedASAM &$86.96$&None&None&$92.94$&$71$&$1.4\times$&$93.50$&$71$&$1.4\times$&$92.19$&$144$&$0.9\times$&$93.30$&$64$&$1.0\times$&$91.78 \pm 2.74$ \\
    FedExp &$88.31$&None&None&$92.46$&$100$&$1.0\times$&$92.59$&$71$&$1.4\times$&$92.08$&$148$&$0.9\times$&$92.92$&$64$&$1.0\times$&$91.67 \pm 1.90$ \\
    FedDecorr &$86.97$&None&None&$91.79$&$101$&$1.0\times$&$93.49$&$47$&$2.1\times$&$92.44$&$130$&$1.0\times$&$93.60$&$64$&$1.0\times$&$91.66 \pm 2.73$ \\
    FedDisco &$88.43$&None&None&$91.72$&$100$&$1.0\times$&$92.53$&$99$&$1.0\times$&$92.27$&$97$&$1.4\times$&$92.86$&$64$&$1.0\times$&$91.56 \pm 1.80$ \\
    FedInit &$65.26$&None&None&$77.42$&None&None&$90.57$&None&None&$85.57$&None&None&$88.56$&None&None&$81.48 \pm 10.36$ \\
    FedLESAM&$72.39$&None&None&$88.16$&None&None&$91.31$&None&None&$87.07$&None&None&$91.19$&None&None& $86.02 \pm 7.84$ \\
    NUCFL &$86.68$&None&None&$90.20$&None&None&$90.75$&None&None&$91.26$&None&None&$91.58$&None&None&$90.09 \pm 1.98$ \\
    \rowcolor{TgreyL}\cellcolor{white}\textbf{\method(Ours)}&$\mathbf{93.61}$&$\mathbf{80}$&$\mathbf{2.1\times}$&$\mathbf{94.20}$&$57$&$1.0\times$&$\mathbf{95.08}$&$\mathbf{49}$&$\mathbf{2.0}$&$\mathbf{94.30}$&$\mathbf{68}$&$\mathbf{2.0}$&$\mathbf{94.76}$&$\mathbf{50}$&$\mathbf{1.3}$&$\mathbf{94.39\pm0.56}$  \\
    \bottomrule[1.5pt]
    \end{tabular}}
\end{table}
\begin{table}[t]
    \caption{Top-1 accuracy of baselines and our method \method~with 5 different heterogeneous scenarios on CIFAR-100, heterogeneity degree $\alpha=0.1$, local epochs $E=5$ and total client number $K=10$.}
    \label{tab:CIFAR100_10_5_5}
    \setlength\tabcolsep{2.5pt}
    \setlength\arrayrulewidth{1.0pt}
       \renewcommand\arraystretch{1.5}
    \resizebox{\linewidth}{!}{
    \begin{tabular}{r||ccc|ccc|ccc|ccc|ccc||c}
    \toprule[1.5pt]
\multicolumn{17}{c}{\cellcolor{greyL}Dataset: CIFAR-100  Heterogeneity Level:$\mathbf{\alpha=0.1}$ Client Number:$\mathbf{K=10}$, Client Sampling Rate: $\mathbf{50\%}$ Total Communication Round:$\mathbf{T=200}$ Local Epochs:$\mathbf{E=5}$}\\
    \midrule[1.5pt]
    Diff Scenario&\multicolumn{3}{c|}{\heterone} & \multicolumn{3}{c|}{\hetertwo} & \multicolumn{3}{c|}{\heterthree} & \multicolumn{3}{c|}{\heterfour} & \multicolumn{3}{c||}{\heterfive} & \\
    \cmidrule{1-17}
     \multicolumn{17}{c}{\cellcolor{greyL}Centralized Training Acc=78\%}  \\
    \hline
    & ACC$\uparrow$ & ROUND $\downarrow$& SpeedUp$\uparrow$ & ACC$\uparrow$ & ROUND $\downarrow$& SpeedUp$\uparrow$  & ACC$\uparrow$ & ROUND $\downarrow$& SpeedUp$\uparrow$ &  ACC$\uparrow$ & ROUND $\downarrow$& SpeedUp$\uparrow$ &  ACC$\uparrow$ & ROUND $\downarrow$& SpeedUp$\uparrow$  &  \\    \hline
\cmidrule(lr){1-17}
Methods&\multicolumn{3}{c|}{ Target Acc=67\%}&\multicolumn{3}{c|}{ Target Acc=67\%}&\multicolumn{3}{c|}{Target Acc=67\%}&\multicolumn{3}{c|}{Target Acc=69\%}&\multicolumn{3}{c||}{Target Acc=64\%}&Mean Acc$\pm$ Std \\ 
\cmidrule(lr){1-17}
    FedAvg&$67.98$&$138$&$1.0\times$&$67.72$&$178$&$1.0\times$&$67.50$&$147$&$1.0\times$&$69.09$&$181$&$1.0\times$&$64.32$&$170$&$1.0\times$&$67.32\pm 1.79$  \\
    FedAvgM&$68.85$&$\mathbf{108}$&$\mathbf{1.3\times}$&$68.38$&$159$&$1.1\times$&$68.08$&$144$&$1.0\times$&$68.46$&$180$&$1.0\times$&$63.55$&None&None&$67.46 \pm 2.21$  \\

    FedProx&$66.95$&None&None&$66.38$&None&None&$66.56$&None&None&$67.74$&None&None&$60.39$&None&None&$65.60 \pm 2.96$ \\
    SCAFFOLD &$64.54$&None&None&$63.15$&None&None&$62.84$&None&None&$64.14$&None&None&$61.91$&None&None&$63.32 \pm 1.05$ \\
    CCVR & -&-&-&-&-&-&-&-&-&-&-&-&-&-&-&-\\
    VHL &$68.53$&$123$&$1.1\times$&$67.44$&$185$&$1.0\times$&$68.19$&$173$&$0.8\times$&$68.45$&None&None&$66.13$&$132$&$1.3\times$&$67.75 \pm 1.00$ \\
    FedASAM &$68.73$&$110$&$1.3\times$&$68.30$&$153$&$1.2\times$&$68.09$&$146$&$1.0\times$&$\mathbf{69.13}$&$\mathbf{176}$&$1.0\times$&$62.35$&None&None&$67.32 \pm 2.81$ \\
    FedExp & $68.70$&$123$&$1.1\times$&$67.50$&$170$&$1.0\times$&$68.07$&$162$&$0.9\times$&$68.34$&$199$&$0.9\times$&$57.95$&None&None&$66.11 \pm 4.58$\\
    FedDecorr &$67.41$&$184$&$0.8\times$&$66.80$&None&None&$67.15$&$\mathbf{146}$&$1.0\times$&$67.77$&None&None&$61.00$&None&None&$66.03 \pm 2.83$ \\
    FedDisco &$67.59$&$137$&$1.0\times$&$68.29$&$159$&$1.1\times$&$68.21$&$163$&$0.9\times$&$68.23$&$180$&$1.0\times$&$63.75$&None&None&$67.21 \pm 1.96$ \\
    FedInit &$61.70$&None&None&$61.38$&None&None&$60.25$&None&None&$63.15$&None&None&$57.57$&None&None&$60.81 \pm 2.09$ \\
    FedLESAM&$61.61$&None&None&$60.25$&None&None&$60.14$&None&None&$61.88$&None&None&$58.21$&None&None&$60.42 \pm 1.46$  \\
    NUCFL & $61.75$&None&None&$59.06$&None&None&$59.18$&None&None&$60.77$&None&None&$59.23$&None&None&$60.00 \pm 1.20$\\
    \rowcolor{TgreyL}\cellcolor{white}\textbf{\method(Ours)}& $\mathbf{68.90}$&$139$&$1.0\times$&$\mathbf{68.45}$&$\mathbf{147}$&$\mathbf{1.2\times}$&$\mathbf{68.56}$&$180$&$0.8\times$&$68.76$&None&None&$\mathbf{66.71}$&$\mathbf{114}$&$\mathbf{1.5\times}$&$\mathbf{68.28 \pm 0.89}$ \\
    \bottomrule[1.5pt]
    \end{tabular}}
\end{table}
\begin{table}[t]
\centering
    \caption{Top-1 accuracy of baselines and our method \method~with 5 different heterogeneous scenarios on CIFAR-10, heterogeneity degree $\alpha=0.05$, local epochs $E=1$ and total client number $K=10$.}
    \label{tab:CIFAR10_10_5_1_005}
    \setlength\tabcolsep{2.5pt}
    \setlength\arrayrulewidth{1.0pt}
        \renewcommand\arraystretch{1.5}
    \resizebox{0.8\textwidth}{!}{
    \begin{tabular}{r||ccc|ccc||c}
    \toprule[1.5pt]
\multicolumn{8}{l}{\cellcolor{greyL}Dataset: CIFAR-10  Heterogeneity Level:$\mathbf{\alpha=0.05}$ Client Number:$\mathbf{K=10}$, Client Sampling Rate: $\mathbf{50\%}$ }\\
\multicolumn{8}{l}{\cellcolor{greyL}Total Communication Round:$\mathbf{T=500}$ Local Epochs:$\mathbf{E=1}$}\\
    \midrule[1.5pt]
    Diff Scenario&\multicolumn{3}{c|}{\heterone} & \multicolumn{3}{c||}{\hetertwo} &\\
    \cmidrule{1-8}
     \multicolumn{8}{c}{\cellcolor{greyL}Centralized Training Acc=95\%}  \\
    \hline
    & ACC$\uparrow$ & ROUND $\downarrow$& SpeedUp$\uparrow$ & ACC$\uparrow$ & ROUND $\downarrow$& SpeedUp$\uparrow$  &  \\
    \hline
\cmidrule(lr){1-8}
Methods&\multicolumn{3}{c|}{ Target Acc=75\%}&\multicolumn{3}{c||}{ Target Acc=49\%}& Mean Acc$\pm$ Std\\ 
\cmidrule(lr){1-8}
    FedAvg &$75.28$&$298$&$1.0\times$&$45.59$&$233$&$1.0\times$&$60.44\pm20.99$\\
    FedAvgM& $75.81$&$105$&$2.8\times$&$49.26$&$275$&$0.8\times$&$62.53 \pm 18.77$ \\

    FedProx& $79.77$&$176$&$1.7\times$&$52.17$&$237$&$1.0\times$&$65.97 \pm 19.52$\\
    SCAFFOLD &$46.67$&None&None&$51.14$&$180$&$1.3\times$&$48.91 \pm 3.16$\\
    CCVR &$78.20$&$\mathbf{94}$&$\mathbf{3.2\times}$&$60.53$&$\mathbf{40}$&$\mathbf{5.8\times}$&$69.37 \pm 12.49$ \\
    VHL &$85.22$&$143$&$2.1\times$&$69.09$&$84$&$2.8\times$&$77.16 \pm 11.41$\\
    FedASAM & $75.84$&$198$&$1.5\times$&$50.62$&$275$&$0.8\times$&$63.23 \pm 17.83$\\
    FedExp &$75.70$&$198$&$1.5\times$&$46.86$&None&None&$61.28 \pm 20.39$\\
    FedDecorr &$81.03$&$105$&$2.8\times$&$51.69$&$96$&$2.4\times$&$66.36 \pm 20.75$\\
    FedDisco &$82.37$&$176$&$1.7$&$51.01$&$110$&$2.1\times$&$66.69\pm22.17$ \\
    FedInit &$77.55$&$474$&$0.6\times$&$47.99$&None&None&$62.77 \pm 20.90$ \\
    FedLESAM& $79.92$&$198$&$1.5\times$&$51.90$&$103$&$2.3\times$&$65.91 \pm 19.81$\\
    NUCFL & $72.96$&None&None&$46.92$&None&None&$59.94 \pm 18.41$\\
    \rowcolor{TgreyL}\cellcolor{white}\textbf{\method(Ours)}&$\mathbf{86.97}$&$198$&$1.5\times$&$\mathbf{70.48}$&$94$&$2.5\times$&$\mathbf{78.72 \pm 11.66}$ \\
    \bottomrule[1.5pt]
    \end{tabular}}
\end{table}
\begin{table}[t]
\centering
    \caption{Top-1 accuracy of baselines and our method \method~with 5 different heterogeneous scenarios on CIFAR-10, heterogeneity degree $\alpha=0.05$, local epochs $E=1$ and total client number $K=100$.}
    \label{tab:CIFAR10_100_10_1_005}
    \setlength\tabcolsep{2.5pt}
    \setlength\arrayrulewidth{1.0pt}
       \renewcommand\arraystretch{1.5}
    \resizebox{0.8\textwidth}{!}{
    \begin{tabular}{r||ccc|ccc||c}
    \toprule[1.5pt]
\multicolumn{8}{l}{\cellcolor{greyL}Dataset: CIFAR-10  Heterogeneity Level:$\mathbf{\alpha=0.05}$ Client Number:$\mathbf{K=100}$, Client Sampling Rate: $\mathbf{10\%}$ }\\
\multicolumn{8}{l}{\cellcolor{greyL}Total Communication Round:$\mathbf{T=500}$ Local Epochs:$\mathbf{E=1}$}\\
    \midrule[1.5pt]
    Diff Scenario&\multicolumn{3}{c|}{\heterone} & \multicolumn{3}{c||}{\hetertwo} &\\
    \cmidrule{1-8}
     \multicolumn{8}{c}{\cellcolor{greyL}Centralized Training Acc=95\%}  \\
    \hline
    & ACC$\uparrow$ & ROUND $\downarrow$& SpeedUp$\uparrow$ & ACC$\uparrow$ & ROUND $\downarrow$& SpeedUp$\uparrow$  &  \\
    \hline
\cmidrule(lr){1-8}
Methods&\multicolumn{3}{c|}{ Target Acc=32\%}&\multicolumn{3}{c||}{ Target Acc=34\%}& Mean Acc$\pm$ Std\\ 
\cmidrule(lr){1-8}
    FedAvg &$32.97$&$464$&$1.0\times$&$34.36$&$473$&$1.0\times$&$33.67\pm0.98$\\
    FedAvgM&$41.86$&$256$&$1.8\times$&$33.15$&None&None&$37.50 \pm 6.16$  \\

    FedProx&$36.63$&$241$&$1.9\times$&$34.99$&$486$&$1.0\times$&$35.81 \pm 1.16$ \\
    SCAFFOLD &$48.80$&$\mathbf{27}$&$\mathbf{17.2\times}$&$50.16$&$31$&$15.3\times$&$49.48 \pm 0.96$\\
    CCVR & $55.28$&$28$&$16.6\times$&$59.36$&$\mathbf{24}$&$\mathbf{19.7\times}$&$57.32 \pm 2.88$\\
    VHL &$59.79$&$115$&$4.0\times$&$59.85$&$163$&$2.9\times$&$59.82 \pm 0.04$\\
    FedASAM &$33.35$&$477$&$1.0\times$&$30.14$&None&None&$31.75 \pm 2.27$\\
    FedExp &$35.90$&$409$&$1.1\times$&$29.68$&None&None&$32.79 \pm 4.40$\\
    FedDecorr &-&-&-&$47.85$&$212$&$2.2\times$&-\\
    FedDisco &-&-&-&-&-&-&- \\
    FedInit &$52.98$&$57$&$8.1\times$&$58.51$&$65$&$7.3\times$&$55.74 \pm 3.91$ \\
    FedLESAM&$60.90$&$55$&$8.4\times$&$64.63$&$69$&$6.9\times$&$62.77 \pm 2.64$\\
    NUCFL &$40.70$&$264$&$1.8\times$&$37.91$&$430$&$1.1\times$&$39.30 \pm 1.97$ \\
    \rowcolor{TgreyL}\cellcolor{white}\textbf{\method(Ours)}& $\mathbf{65.86}$&$70$&$6.6\times$&$\mathbf{70.14}$&$72$&$6.6\times$&$\mathbf{68.00 \pm 3.03}$\\
    \bottomrule[1.5pt]
    \end{tabular}}
\end{table}
The number of local training epochs significantly impacts the performance of federated learning (FL). Excessive local fitting can exacerbate performance degradation. To investigate the effect of local epochs on various existing methods and \method~, we adopt a consistent experimental setup with \(K=10\) clients, a heterogeneity degree \(\alpha=0.1\), and 50\% of clients randomly sampled per round, varying only the number of local training epochs \(E=5\). Experiments are conducted across the CIFAR-10, CIFAR-100, and SVHN datasets, with results reported in Tabs.~\ref{tab:CIFAR10_10_5_5}, ~\ref{tab:CIFAR100_10_5_5} and ~\ref{tab:SVHN_10_5_5}. The findings indicate that performance generally declines as local epochs increase for most methods, including \method~, VHL, and FedLESAM. For the methods with increasing performance, especially for some local learning objective modification methods, the additional penalty term increases the learning difficulty, and a small local epochs cannot fully train the local model, so a larger local epochs is more suitable for such methods. However, the larger local epochs make the performance of these methods after over-training still needs to be further verified.

\textbf{Observation 4: SAM-based method needs to be adapted to different local epochs.} SAM-based methods require distinct gradient perturbation strategies depending on the number of local epochs. Notably, FedLESAM, a SAM-based method, exhibits significant performance degradation with increasing local epochs across all three datasets (SVHN, CIFAR-10, and CIFAR-100). In contrast, FedASAM, another SAM-based method, shows minimal degradation and, in some cases, slight improvement. This suggests that SAM-based methods necessitate tailored gradient perturbation mechanisms when local epochs are extended.

\subsection{Ablation Study on Client Sampling Rate \texorpdfstring{$\lambda_s$}{lambda_s}}
\label{appendix:diff_client_sampling_rate}
\begin{table}[ht]
\caption{The ablation study of client sampling rate under $\alpha=0.1$, $K=100$, $R=500$, $E=1$ with CIFAR-10 dataset.}
\label{tab:different_lambda}
\setlength\tabcolsep{2.5pt}
\renewcommand{\arraystretch}{1.5}
\centering
\resizebox{\textwidth}{!}{
\begin{tabular}{l l l l l  | l l l l l}
\toprule[1.5pt]
Sampling rate $\lambda_s$ & 5\% & 10\% & 20\% & 50\% &Sampling rate $\lambda_s$ & 5\% & 10\% & 20\% & 50\% \\
\hline
\hline
FedAvg & 34.59 & 48.22 & 67.38 & 72.45 & FedExp & 34.38 & 38.26 & 58.35 & 64.52 \\
FedAvgM & 35.32 & 58.80 & 68.34 & 73.89 & FedDecorr & -- & 63.69 & 77.31 & 80.03 \\
FedProx & 32.89 & 52.84 & 65.25 & 74.24 & FedDisco & -- & -- & -- & -- \\
SCAFFOLD & 36.73 & 60.17 & 67.59 & 69.34 & FedInit & 30.56 & 71.01 & 76.89 & 77.59 \\
CCVR & 57.93 & 64.06 & 74.25 & 78.78 & FedLESAM & 68.86 & 72.64 & 76.35 & 76.90 \\
VHL & 57.14 & 72.70 & 76.58 & 80.78 & NUCFL & 38.85 & 53.72 & 68.31 & 71.91 \\
FedASAM & 34.78 & 46.35 & 61.84 & 66.48 & \textbf{FedGPS(Ours)} & \textbf{70.89} & \textbf{78.32} & \textbf{79.72} & \textbf{81.97} \\
\bottomrule[1.5pt]
\end{tabular}}
\end{table}
To investigate the impact of client participation on the performance of FL systems, we conduct an ablation study on the client sampling rate. In FL frameworks like FedAvg~\cite{fedavg}, the sampling rate determines the fraction of clients randomly selected per training round, balancing computational load, communication overhead, and model convergence. Lower sampling rates reduce bandwidth usage and enable scalability in resource-constrained environments, but may slow convergence due to noisier updates from fewer participants. Conversely, higher rates accelerate learning at the cost of increased synchronization demands. By varying the sampling rate (e.g., from 10\% to 50\%) while keeping other hyperparameters fixed, this experiment isolates its effects on metrics such as test accuracy. The results are shown in Tab.~\ref{tab:different_lambda}. Under different client sampling rates, \method~ still performs better than other baselines.

\subsection{Ablation Study on Heterogeneity Degree \texorpdfstring{$\alpha$}{alpha}}
\label{appendix:exp_more_lheterogeneity_degree}

To assess the impact of varying degrees of heterogeneity, we conduct experiments under constrained computational resources. Specifically, we evaluated two distinct heterogeneity scenarios on the CIFAR-10 dataset, with setups of 10 and 100 clients, respectively. The results are presented in Tabs.~\ref{tab:CIFAR10_10_5_1_005} and ~\ref{tab:CIFAR10_100_10_1_005}. The results demonstrate that as heterogeneity increases, overall performance declines, and the performance gap across different heterogeneous scenarios widens. Specifically, as shown in Table~\ref{tab:CIFAR10_10_5_1_005}, FedAvg's accuracy drops significantly from 75.28 to 45.59, a decrease of 29.69. Notably, \method~ exhibits greater performance improvements in more challenging scenarios. For instance, in the $\alpha=0.1$ scenario, \method~ outperforms the best baseline method by an average of 1.04 in accuracy, while in the more heterogeneous $\alpha=0.05$ scenario, this improvement rises to 1.56. The advantage of \method~ becomes even more pronounced with a larger number of clients and lower sampling rates, achieving an improvement of 2.06 in the $\alpha=0.1$ scenario and 5.23 in the $\alpha=0.05$ scenario as shown in Tab.~\ref{tab:CIFAR10_100_10_1_005}. These findings further validate the effectiveness and robustness of \method~under diverse and challenging FL conditions.

\subsection{Different Heterogeneity Partition Strategy}
\label{appendix:diff_partition_strategy}
\begin{table}[ht]
\caption{Experimental results under $C=N$ heterogeneity partition method with CIFAR-10 dataset. Here we mainly select $C=2$ and $C=3$ these two scenarios. }
\label{tab:c2_c3}
\renewcommand{\arraystretch}{1.5}
\centering
\begin{tabular}{l r r | l r r}
\toprule[1.5pt]
 & $C=2$ & $C=3$ &  & $C=2$ & $C=3$ \\
\toprule[1.5pt]
FedAvg & 51.75 & 69.97 & FedExp & 47.53 & 67.29 \\
FedAvgM & 50.61 & 73.21 & FedDecorr & 69.36 & 79.60 \\
FedProx & 49.64 & 68.96 & FedDisco & -- & -- \\
SCAFFOLD & 55.08 & 74.34 & FedInit & 54.80 & 71.12 \\
CCVR & 55.83 & 73.69 & FedLESAM & 66.62 & 79.42 \\
VHL & 73.53 & 84.16 & NUCFL & 50.03 & 72.89 \\
FedASAM & 55.33 & 69.37 & \textbf{\method(Ours)} & \textbf{78.17} & \textbf{85.71} \\
\bottomrule[1.5pt]
\end{tabular}
\end{table}
Besides the Dirichlet distribution-based partitioning, another common approach is label distribution skew with limited classes per client, often denoted as $C=N$~\cite{zhao2018federated}. In this method, each client is restricted to samples from only $N$ distinct classes out of the total available classes in the dataset, creating extreme heterogeneity. This partitioning simulates scenarios where clients have specialized or siloed data, such as different devices capturing only certain categories (e.g., one client with images of cats and dogs only, while another has birds and fish). It emphasizes qualitative skew (absence of entire classes) rather than quantitative skew (imbalanced sample counts per class). To verify that \method~is still robust under other heterogeneous partition methods, the experimental results in $C=2$ and $C=3$ scenarios are shown in Tab~\ref{tab:c2_c3}, indicating that \method~is still robust to different heterogeneous partition methods.
\subsection{More Visualization of Results}
\label{appendix:more_visual_res}

\begin{figure}[t]
    \centering
    \includegraphics[width=1.0\linewidth]{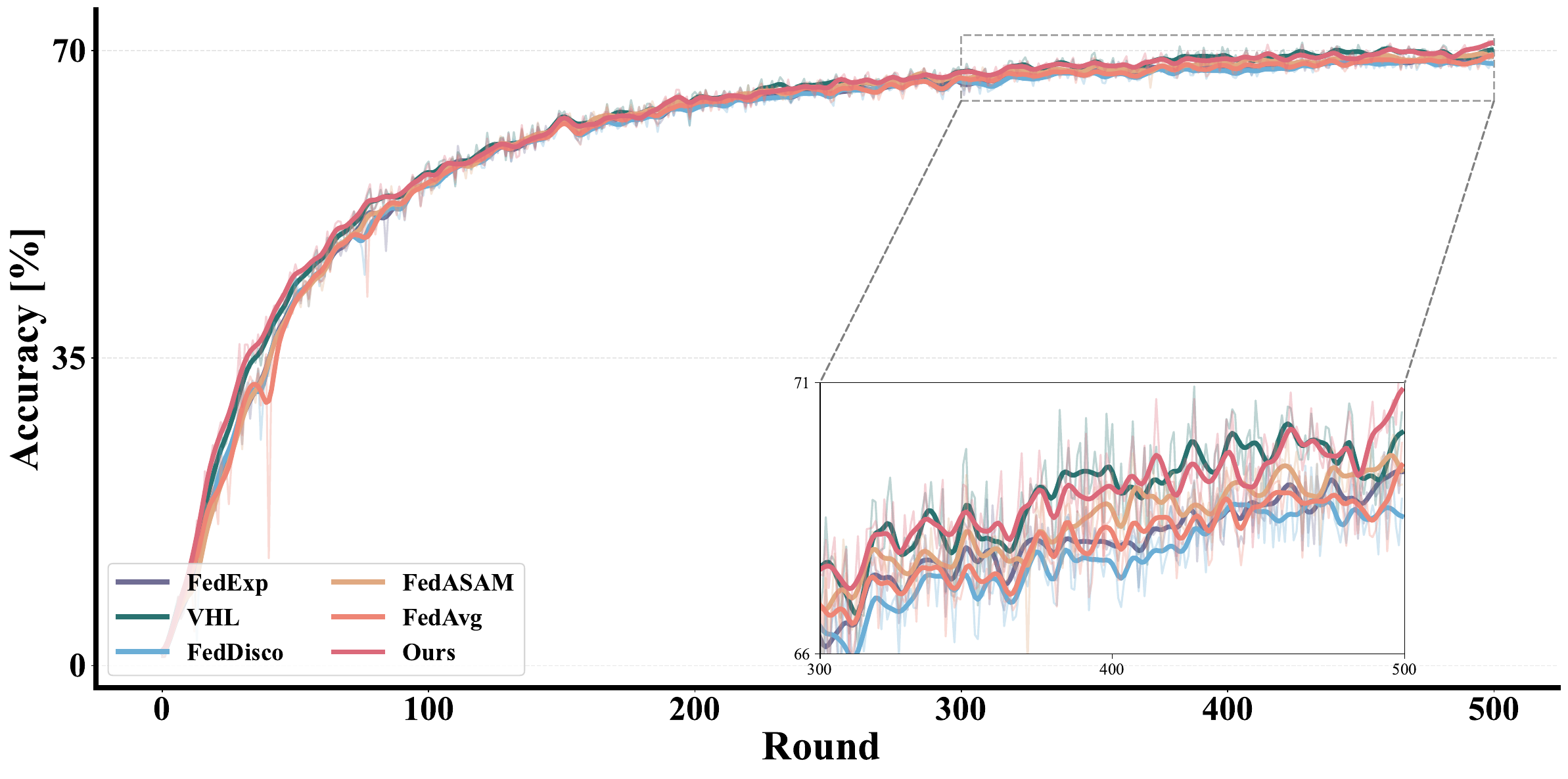}
    \caption{The training process visualization of top-5 baselines and our method \method~ on CIFAR-100, heterogeneity degree $\alpha=0.1$, local epochs $E=1$ and total client number $K=10$ under heterogeneous scenario 1.}
    \label{fig:convergence_CIFAR100_10_5_1_0}
\end{figure}

\begin{figure}[t]
    \centering
    \includegraphics[width=1.0\linewidth]{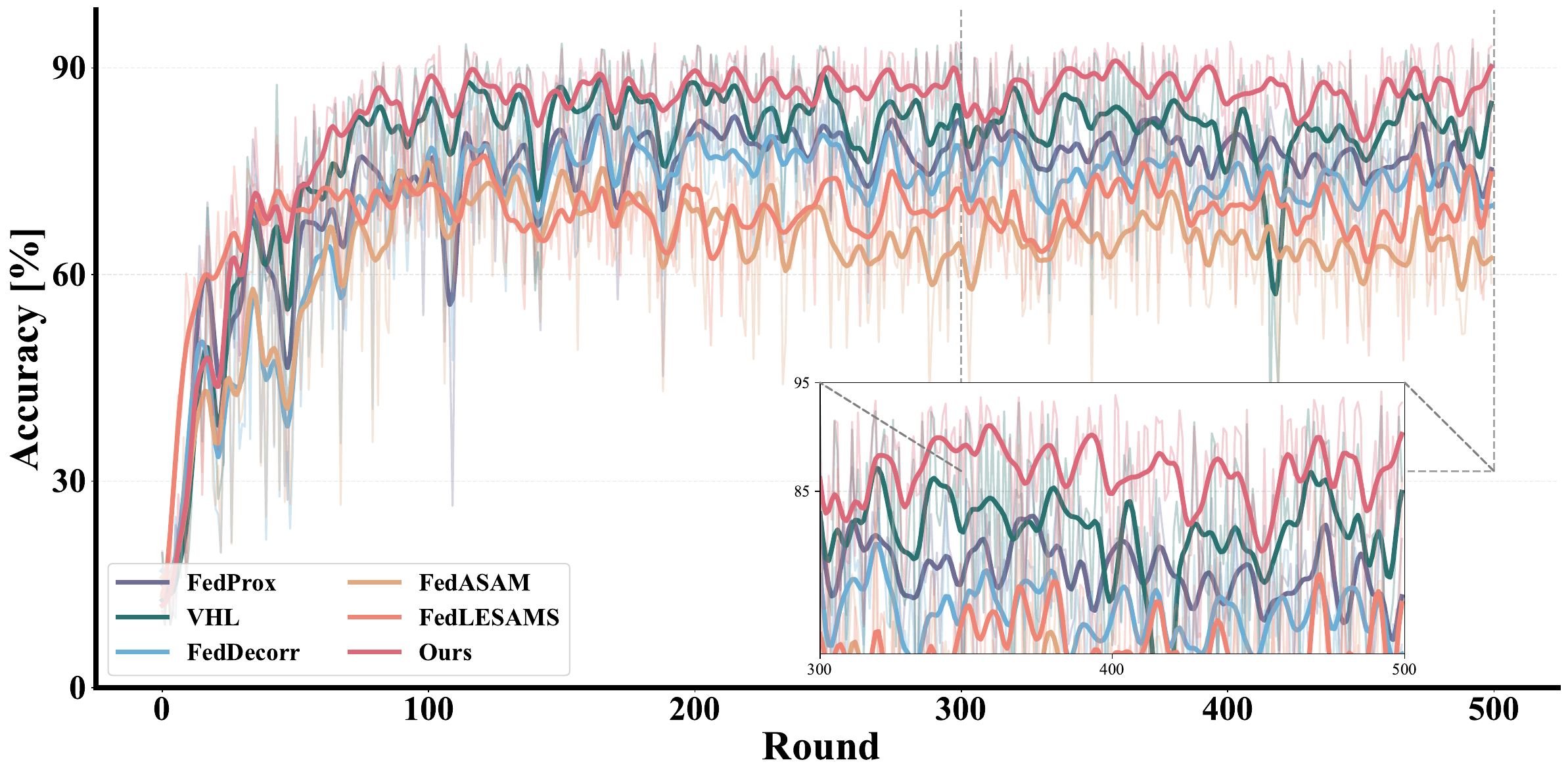}
    \caption{The training process visualization of top-5 baselines and our method \method~ on SVHN, heterogeneity degree $\alpha=0.1$, local epochs $E=1$ and total client number $K=10$ under heterogeneous scenario 1.}
    \label{fig:convergence_SVHN_10_5_1_0}
\end{figure}

\begin{figure}[t]
    \centering
    \includegraphics[width=1.0\linewidth]{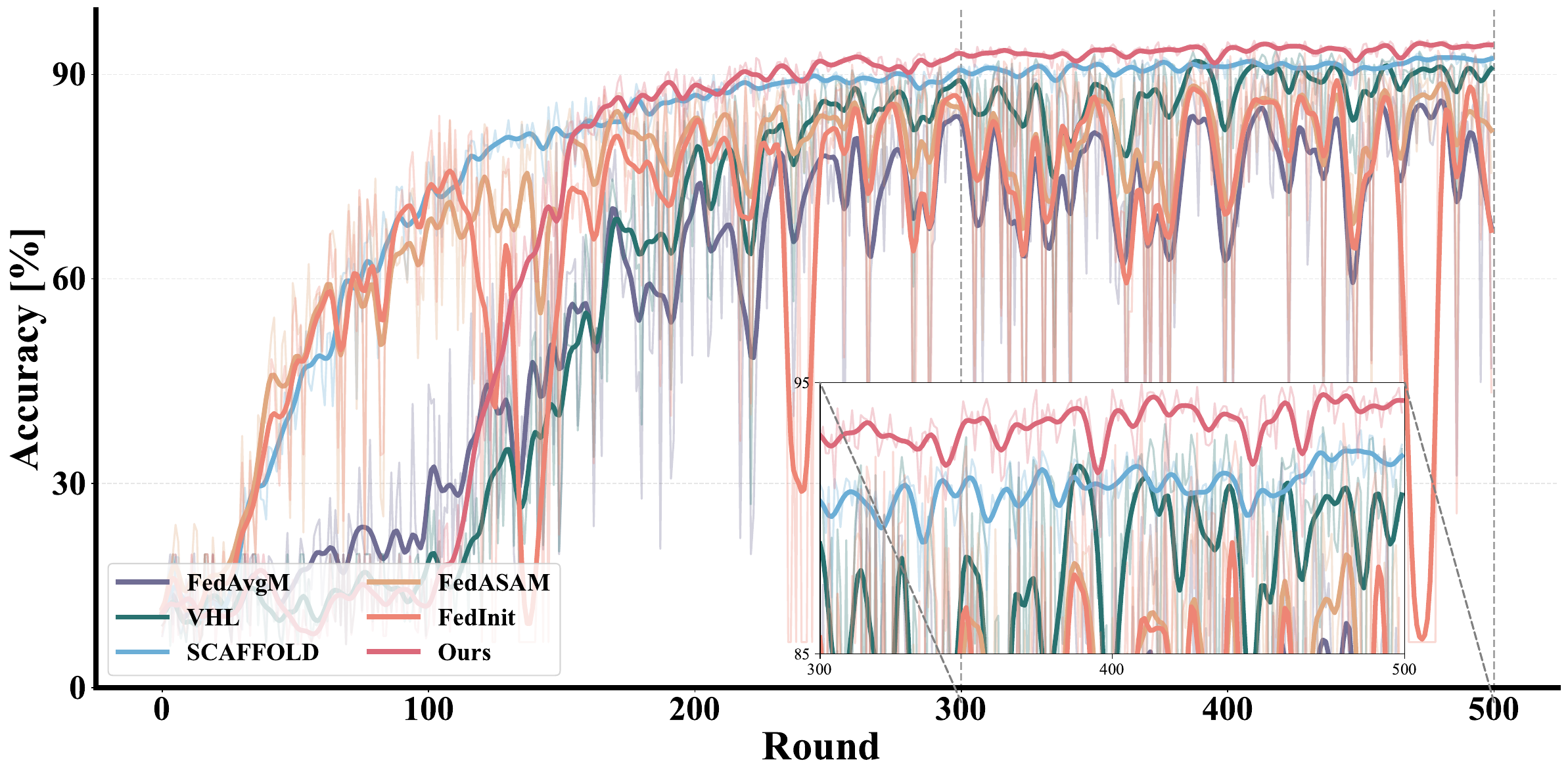}
    \caption{The training process visualization of top-5 baselines and our method \method~ on SVHN, heterogeneity degree $\alpha=0.1$, local epochs $E=1$ and total client number $K=100$ under heterogeneous scenario 1.}
    \label{fig:convergence_SVHN_100_10_1_0}
\end{figure}

\begin{figure}[t]
    \centering
    \includegraphics[width=1.0\linewidth]{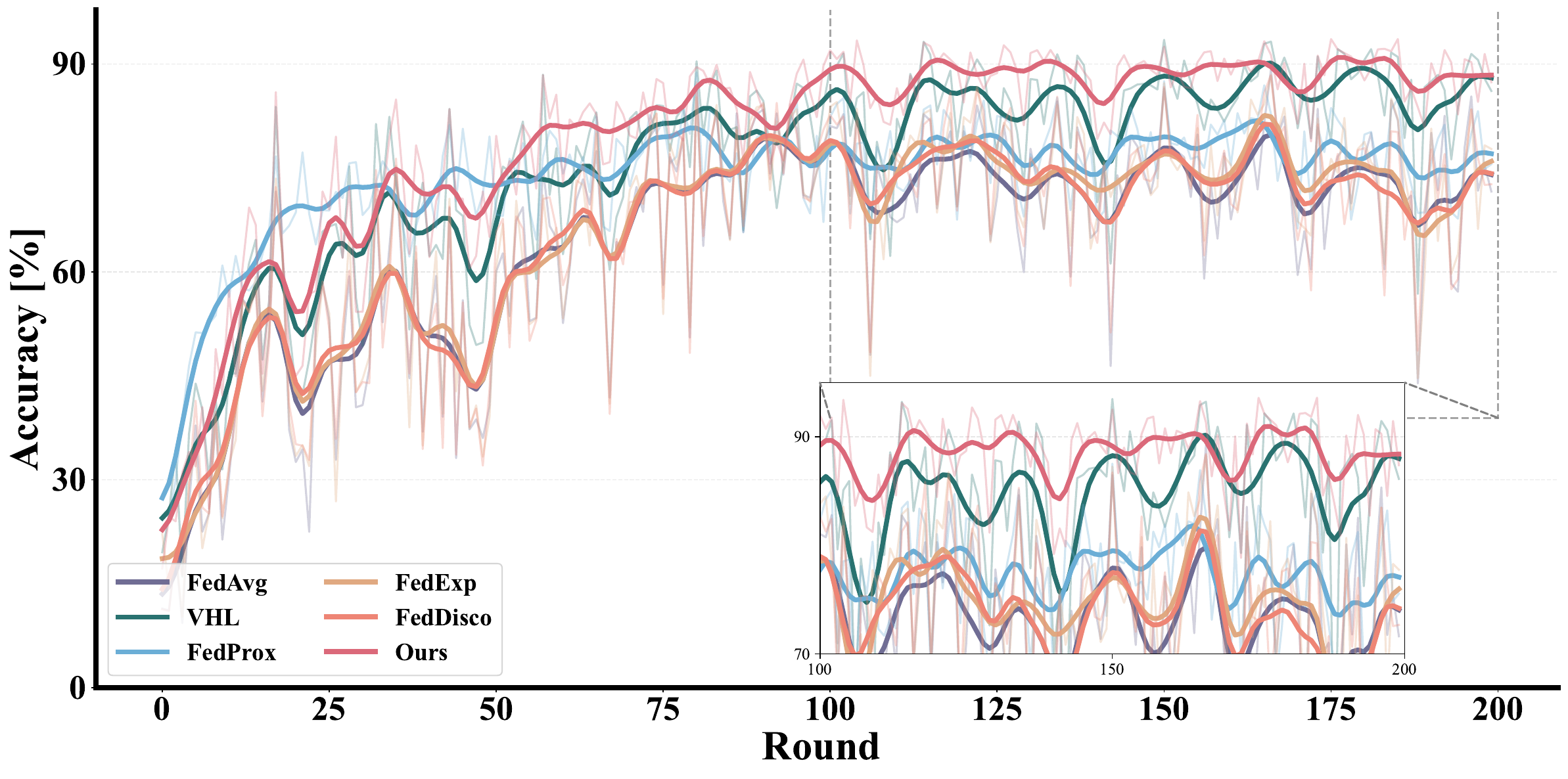}
    \caption{The training process visualization of top-5 baselines and our method \method~ on SVHN, heterogeneity degree $\alpha=0.1$, local epochs $E=5$ and total client number $K=10$ under heterogeneous scenario 1.}
    \label{fig:convergence_SVHN_10_5_5_0}
\end{figure}

\begin{figure}[t]
    \centering
    \includegraphics[width=1.0\linewidth]{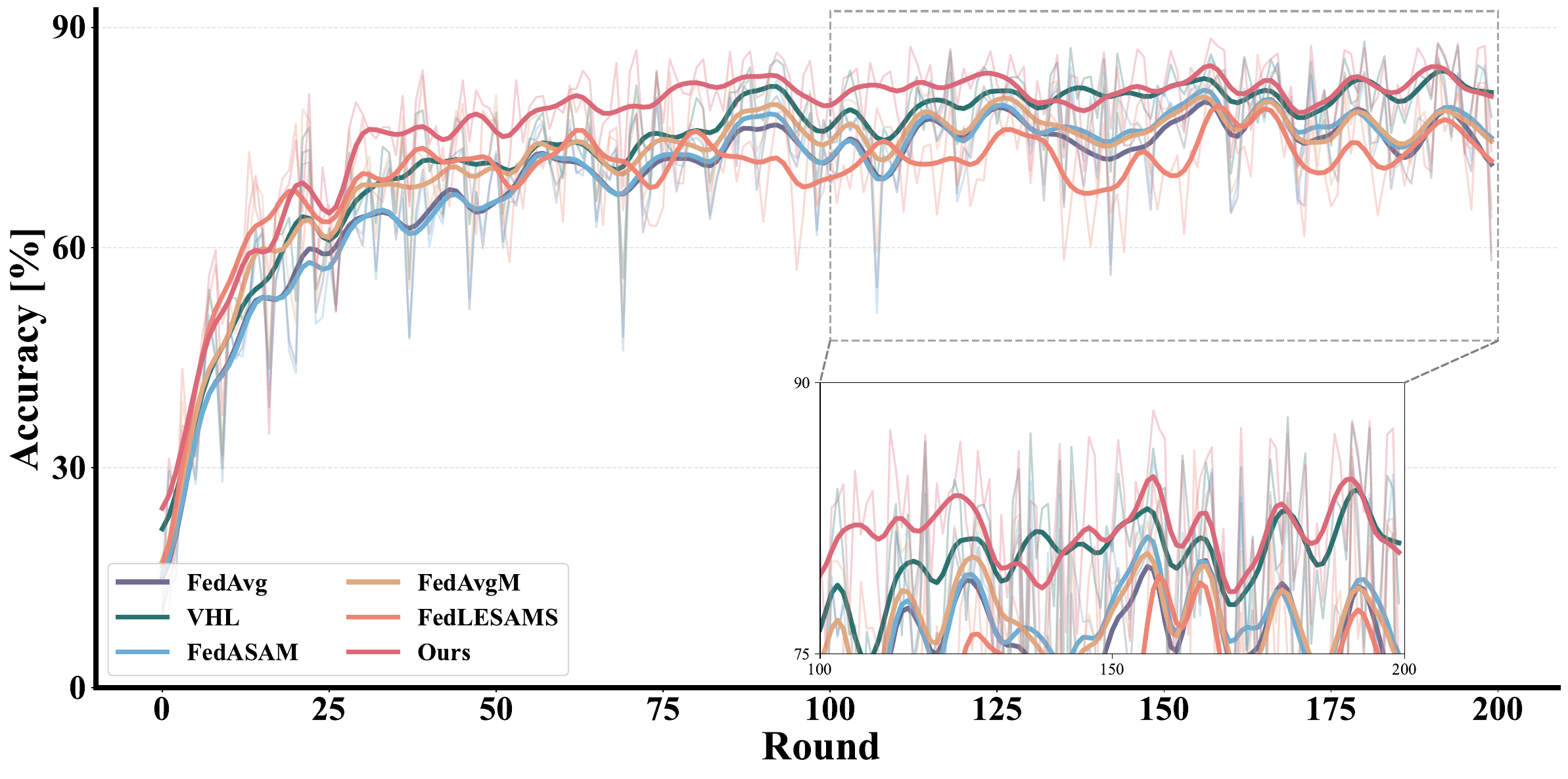}
    \caption{The training process visualization of top-5 baselines and our method \method~ on CIFAR-10, heterogeneity degree $\alpha=0.1$, local epochs $E=5$ and total client number $K=10$ under heterogeneous scenario 1.}
    \label{fig:convergence_CIFAR10_10_5_5_0}
\end{figure}
In this section, we visualize additional experimental results, which illustrate the dynamic training process while highlighting performance and convergence speed. Due to the large number of baselines, we visualize only the top-5 methods in this setting alongside \method~for comparison. The results are shown in Figs.~\ref{fig:convergence_CIFAR100_10_5_1_0}, ~\ref{fig:convergence_SVHN_10_5_1_0},~\ref{fig:convergence_SVHN_100_10_1_0},~\ref{fig:convergence_SVHN_10_5_5_0} and ~\ref{fig:convergence_CIFAR10_10_5_5_0}. The results reveal that in the early training rounds, \method~does not exhibit a significant speed advantage over other methods and, in some cases, converges more slowly. However, as training progresses, \method~ demonstrates sustained performance improvements in later rounds, while other methods plateau, with their performance stabilizing. This observation motivates future research into developing more efficient variants of \method~.

\subsection{Different Random Training Seeds}
\label{appendix:exp_more_training_seeds}
The choice of random training seeds impacts model initialization and the random client sampling process. To further validate the effectiveness of \method in this context, we conduct experiments across the same heterogeneous scenarios using three distinct random seeds to control for this randomness. The results, presented in Tabs.~\ref{tab:CIFAR10_10_5_1_random} and ~\ref{tab:CIFAR10_10_5_5_random}, reveal that such randomness noticeably affects algorithm performance, with variations in some scenarios reaching up to $\pm 2$ or more. Despite this, \method consistently mitigates the impact of randomness on performance, as evidenced by lower standard deviations. These findings underscore the robustness of \method, not only in addressing heterogeneous data distributions but also in handling variability from model initialization and random client selection process.
\newcommand{\rot}[1]{\rotatebox{60}{#1}}

\begin{sidewaystable}[htbp]
    \caption{Top-1 accuracy of baselines and our method \method~with 5 different heterogeneous scenarios on CIFAR-10, heterogeneity degree $\alpha=0.1$, local epochs $E=1$ and total client number $K=10$under 3 different training random seeds.}\label{tab:CIFAR10_10_5_1_random}
    \setlength\tabcolsep{2.5pt}
        \renewcommand\arraystretch{1.5}
    \resizebox{\linewidth}{!}{
    \begin{tabular}{lc||ccccccccccccccc}
    \toprule[1.5pt]
\multicolumn{16}{c}{\cellcolor{greyL}Dataset: CIFAR-10  Heterogeneity Level:$\mathbf{\alpha=0.1}$ Client Number:$\mathbf{K=10}$, Client Sampling Rate: $\mathbf{50\%}$ Total Communication Round:$\mathbf{R=500}$ Local Epochs:$\mathbf{E=1}$ with 3 different random seeds for training}\\
\midrule[1.5pt]
    & &\rot{FedAvg} &\rot{FedAvgM} & \rot{FedProx}&\rot{SCAFFOLD}&\rot{CCVR}&\rot{VHL}&\rot{FedASAM}&\rot{FedExp}&\rot{FedDecorr}&\rot{FedDisco}&\rot{FedInit}&\rot{FedLESAM}&\rot{NUCFL}&\rot{\textbf{\method(Ours)}}\\
    \cmidrule{1-16}
     \multicolumn{16}{c}{\cellcolor{greyL}Target Acc=84\%}  \\
    \hline
\cmidrule(lr){1-16}
    \multirow{3}{*}{\heterone}& seed 0&$84.21$&$85.74$&$86.13$&$82.39$&$84.30$&$89.07$&$86.49$&$84.00$&$85.76$&$85.69$&$86.84$&$88.80$& $83.76$&$\mathbf{90.31}$\\
    &seed 1 & $83.67$&$84.81$&$86.61$&$81.97$&$83.84$&$88.65$&$84.98$&$84.53$&$85.31$&$83.30$&$86.62$&$86.49$&$82.34$&$\mathbf{89.57}$\\
    &seed 2 &$85.00$&$84.75$&$86.45$&$81.12$&$83.77$&$90.02$&$85.80$&$83.19$&$85.60$&$85.75$&$88.08$&$86.67$&$81.98$&$\mathbf{90.29}$\\
    \cmidrule(lr){1-16}
   \multicolumn{2}{r||}{ Mean$\pm$ std}&$84.29\pm0.67$& $85.10\pm 0.56$& $86.40\pm 0.24$&$81.83\pm 0.65$& $83.97\pm 0.29$&  $89.25\pm 0.70$ &  $85.76\pm 0.76$ &  $ 83.91\pm 0.67$& $85.56\pm 0.23$& $84.91\pm 1.40 $& $87.18\pm 0.79 $& $87.00\pm 1.28$&$82.69\pm0.94$&$\mathbf{90.06\pm0.42}$\\
    \cmidrule{1-16}
     \multicolumn{16}{c}{\cellcolor{greyL}Target Acc=79\%}  \\
    \hline
\cmidrule(lr){1-16}
    \multirow{3}{*}{\hetertwo}& seed 0& $79.13$&$81.78$&$83.12$&$80.78$&$83.28$&$87.20$&$81.99$&$79.25$&$84.07$&$81.84$&$83.49$&$84.88$&$79.45$&$\mathbf{88.45}$\\
    &seed 1 & $83.34$&$85.17$&$84.35$&$78.60$&$81.64$&$88.22$&$84.25$&$79.72$&$82.96$&$81.93$&$86.72$&$85.62$&$77.46$&$\mathbf{88.89}$ \\
    &seed 2 &$84.40$&$83.89$&$82.67$&$80.75$&$81.22$&$87.69$&$83.06$&$82.69$&$82.15$&$83.87$&$82.27$&$85.69$&$80.65$&$\mathbf{88.00}$\\
    \cmidrule(lr){1-16}
   \multicolumn{2}{r||}{ Mean$\pm$ std}&$82.29\pm2.79$&$83.61\pm1.71$&$83.38\pm0.87$&$80.04\pm1.25$&$82.05\pm1.09$&$87.70\pm0.51$&$83.10\pm1.13$&$80.55\pm1.87$&$83.06\pm0.96$&$82.55\pm1.15$&$84.16\pm2.30$&$85.40\pm0.45$&$79.19\pm1.61$&$\mathbf{88.45\pm0.45}$\\
    
        \cmidrule{1-16}
     \multicolumn{16}{c}{\cellcolor{greyL}Target Acc=80\%}  \\
    \hline
\cmidrule(lr){1-16}
    \multirow{3}{*}{\heterthree}& seed 0& $80.63$&$81.35$&$82.37$&$79.08$&$83.20$&$86.83$&$80.45$&$79.60$&$81.38$&$80.42$&$80.48$&$84.78$&$79.76$&$\mathbf{87.78}$\\
    &seed 1 & $80.18$&$82.61$&$83.52$&$79.59$&$82.23$&$86.34$&$81.11$&$79.51$&$81.25$&$80.56$&$81.00$&$83.49$&$80.32$&$\mathbf{86.43}$ \\
    &seed 2 &$79.51$&$80.36$&$82.91$&$81.56$&$81.87$&$87.00$&$80.34$&$80.29$&$81.86$&$79.94$&$79.29$&$83.80$&$82.01$&$\mathbf{87.44}$\\
    \cmidrule(lr){1-16}
   \multicolumn{2}{r||}{ Mean$\pm$ std}&$80.11\pm0.56$&$81.44\pm1.13$&$82.93\pm0.58$&$80.08\pm1.31$&$82.43\pm0.69$&$86.72\pm0.34$&$80.63\pm0.42$&$79.80\pm0.43$&$81.50\pm0.32$&$80.31\pm0.33$&$80.26\pm0.88$&$84.02\pm0.67$&$80.70\pm1.17$&$\mathbf{87.22\pm0.70}$\\
        \cmidrule{1-16}
     \multicolumn{16}{c}{\cellcolor{greyL}Target Acc=68\%}  \\
    \hline
\cmidrule(lr){1-16}
    \multirow{3}{*}{\heterfour}& seed 0& $68.62$&$70.15$&$76.62$&$71.83$&$76.57$&$84.30$&$73.11$&$71.55$&$73.14$&$70.37$&$69.44$&$78.99$&$68.78$&$\mathbf{85.06}$\\
    &seed 1 & $68.50$&$69.59$&$77.87$&$69.23$&$76.31$&$83.67$&$71.22$&$66.78$&$72.98$&$67.90$&$68.87$&$76.59$&$68.96$&$\mathbf{85.08}$ \\
    &seed 2 &$68.88$&$70.02$&$77.59$&$72.93$&$76.52$&$82.66$&$71.89$&$66.27$&$75.54$&$70.58$&$68.33$&$76.43$&$69.73$&$\mathbf{85.59}$\\
    \cmidrule(lr){1-16}
   \multicolumn{2}{r||}{ Mean$\pm$ std}&$68.67\pm0.19$&$69.92\pm0.29$&$77.36\pm0.66$&$71.33\pm1.90$&$76.47\pm0.14$&$83.41\pm1.04$&$72.07\pm0.96$&$68.20\pm2.91$&$73.89\pm1.43$&$69.62\pm1.49$&$68.88\pm0.56$&$77.34\pm1.43$&$69.16\pm 0.50$&$\mathbf{85.24\pm 0.30}$\\
        \cmidrule{1-16}
     \multicolumn{16}{c}{\cellcolor{greyL}Target Acc=65\%}  \\
    \hline
\cmidrule(lr){1-16}
    \multirow{3}{*}{\heterfive}& seed 0&$65.86$&$67.51$&$68.81$&$68.43$&$74.72$&$81.05$&$66.68$&$66.66$&$73.77$&$69.94$&$68.04$&$74.15$&$65.78$&$\mathbf{82.04}$\\
    &seed 1 & $68.85$&$71.45$&$69.48$&$65.50$&$73.49$&$80.06$&$69.12$&$68.32$&$70.91$&$67.91$&$70.05$&$68.83$&$69.45$&$\mathbf{80.71}$ \\
    &seed 2 &$72.56$&$66.21$&$67.52$&$66.15$&$73.11$&$78.76$&$73.37$&$68.16$&$68.84$&$68.51$&$67.46$&$70.76$&$73.87$&$\mathbf{79.65}$
\\
    \cmidrule(lr){1-16}
   \multicolumn{2}{r||}{ Mean$\pm$ std}&$69.09\pm3.36$&$68.39\pm2.73$&$68.60\pm1.00$&$66.69\pm1.54$&$73.77\pm0.84$&$79.96\pm1.15$&$69.72\pm3.39$&$67.71\pm0.92$&$71.17\pm2.48$&$68.79\pm1.04$&$68.52\pm1.36$&$71.25\pm2.69$&$69.70\pm4.05$&$\mathbf{80.80\pm1.20}$\\
    \midrule[1.5pt]
    \hline
    \multicolumn{2}{r||}{Total Mean Acc $\pm$ std}&$76.89\pm 7.11$&$77.69\pm 7.45$&$79.73\pm 6.53$&$75.99\pm6.24$&$79.74\pm4.10$&$85.41\pm3.51$&$78.26\pm6.64$&$76.03\pm7.11$&$79.03\pm5.84$&$77.23\pm7.03$&$77.80\pm8.10$&$81.06\pm6.29$ &$76.29\pm6.17$&$\mathbf{86.35\pm3.35}$\\
    \bottomrule[1.5pt]
    \end{tabular}
    }
\end{sidewaystable}

\begin{sidewaystable}[htbp]
    \caption{Top-1 accuracy of baselines and our method \method~with 5 different heterogeneous scenarios on CIFAR-10, heterogeneity degree $\alpha=0.1$, local epochs $E=5$ and total client number $K=10$ under 3 different training random seeds.}\label{tab:CIFAR10_10_5_5_random}
    \setlength\tabcolsep{2.5pt}
        \renewcommand\arraystretch{1.5}
    \resizebox{\linewidth}{!}{
    \begin{tabular}{lc||ccccccccccccccc}
    \toprule[1.5pt]
\multicolumn{16}{c}{\cellcolor{greyL}Dataset: CIFAR-10  Heterogeneity Level:$\mathbf{\alpha=0.1}$ Client Number:$\mathbf{K=10}$, Client Sampling Rate: $\mathbf{50\%}$ Total Communication Round:$\mathbf{R=200}$ Local Epochs:$\mathbf{E=5}$ with 3 different random seeds for training}\\
\midrule[1.5pt]
    & &\rot{FedAvg} &\rot{FedAvgM} & \rot{FedProx}&\rot{SCAFFOLD}&\rot{CCVR}&\rot{VHL}&\rot{FedASAM}&\rot{FedExp}&\rot{FedDecorr}&\rot{FedDisco}&\rot{FedInit}&\rot{FedLESAM}&\rot{NUCFL}&\rot{\textbf{\method(Ours)}}\\
    \cmidrule{1-16}
     \multicolumn{16}{c}{\cellcolor{greyL}Target Acc=85\%}  \\
    \hline
\cmidrule(lr){1-16}
    \multirow{3}{*}{\heterone}& seed 0&$85.89$&$85.60$&$85.52$&$83.75$ &  $83.95$ & $88.10 $  &  $85.79$  & $85.05$    & $84.53$   &  $85.60$   & $79.23$  &  $86.36$ &  $82.80$ &
   $\mathbf{88.47}$ \\
    &seed 1 &$86.40$ &$86.28$&$86.13$&$82.90$ &  $84.20$ & $88.41$  &  $86.14$  &  $86.31$   & $87.01$   &  $85.54$   & $83.56$  &  $85.83$ &  $82.08$ &$\mathbf{89.69}$\\
    &seed 2 &$85.67$&$85.75$&$85.95$&$83.67$ & $84.63$  & $88.67$  & $86.20$   & $85.82$    & $85.29$   &  $85.87$   &  $83.05$ & $85.34$   & $81.87$ &$\mathbf{88.89}$\\
    \cmidrule(lr){1-16}
   \multicolumn{2}{r||}{ Mean$\pm$ std}&$85.99\pm0.37$&$85.88 \pm 0.36$&$85.87 \pm 0.31$&$83.44 \pm 0.47$  & $84.26 \pm 0.34$  & $88.39\pm0.29$  &  $86.04 \pm 0.22$  & $85.73 \pm 0.64$    &   $85.61 \pm 1.27$ &  $85.67 \pm 0.18$   &  $81.95 \pm 2.37$ &   $85.84 \pm 0.51$& $82.25\pm0.49$ &$\mathbf{89.02\pm0.62}$\\
    \cmidrule{1-16}
     \multicolumn{16}{c}{\cellcolor{greyL}Target Acc=85\%}  \\
    \hline
\cmidrule(lr){1-16}
    \multirow{3}{*}{\hetertwo}& seed 0&$85.36$ &$86.84$&$84.31$&$80.10$ & $83.87$  & $86.40$ &   $86.12$ &  $85.64$   & $84.90$   &  $85.59$   & $74.43$  &  $80.94$ &  $78.48$ &$\mathbf{87.96}$\\
    
    &seed 1 &$85.85$&$85.93$&$83.97$& $81.57$ & $83.69$  & $87.49$  &  $86.60$  &  $85.29$   & $85.16$   & $86.39$    &  $74.25$   & $83.33$ & $80.67$ & $\mathbf{87.96}$ \\
    
    &seed 2 &$83.08$&$84.38$&$84.15$&$81.59$ & $83.86$  &  $87.22$ & $84.85$   &  $83.12$   & $83.58$   &$82.26$     & $73.64$   & $82.41$  & $81.63$ &$\mathbf{88.63}$\\
    \cmidrule(lr){1-16}
   \multicolumn{2}{r||}{ Mean$\pm$ std}&$84.76\pm1.48$ & $ 85.72 \pm 1.24$&$84.14 \pm 0.17$&$81.09 \pm 0.85$  & $83.81 \pm 0.10$  &  $87.04\pm0.57$  &  $85.86 \pm 0.90$   & $84.68 \pm 1.37$   &   $84.55 \pm 0.85$  & $84.75 \pm 2.19$  & $74.11 \pm 0.41$  & $82.23 \pm 1.21$  &$80.26\pm1.61$ &$\mathbf{88.18\pm 0.39}$\\
    
        \cmidrule{1-16}
     \multicolumn{16}{c}{\cellcolor{greyL}Target Acc=84\%}  \\
    \hline
\cmidrule(lr){1-16}
    \multirow{3}{*}{\heterthree}& seed 0&$84.21$ &$84.06$&$82.87$&$82.14$ &  $83.32$ & $84.50$  & $81.38$   &  $82.49$   & $83.36$   &  $84.66$   & $75.76$  &  $81.33$ &  $76.51$ &$\mathbf{85.79 }$\\
    &seed 1 & $82.47$&$84.61$&$84.03$&$81.49$ &  $83.12$ &  $84.96$ &  $82.49$  &  $ 81.52$   &  $84.95$  &   $81.96$  & $74.58$  & $82.97$  &  $75.45$ &$\mathbf{86.47}$ \\
    &seed 2 &$82.02$&$82.94$&$84.46$&$80.66$ & $83.48$  &  $85.00$ &  $83.02$  &  $83.05$   &   $ 83.60$ &    $82.70$ & $75.22$  &  $82.22$ & $77.49$ &$\mathbf{85.41}$\\
    \cmidrule(lr){1-16}
   \multicolumn{2}{r||}{ Mean$\pm$ std}&$82.90\pm1.16$&$83.87 \pm 0.85$&$83.79 \pm 0.82$&$81.43 \pm 0.74$ &$78.07 \pm 0.78$   & $84.82\pm0.28$  & $82.30 \pm 0.84$   &  $82.35 \pm 0.77$   & $83.97 \pm 0.86$   &  $83.11 \pm 1.40$   &$75.19 \pm 0.59$  & $82.17 \pm 0.82$  &  $ 76.48\pm1.02$ &$\mathbf{85.89\pm0.54}$\\
        \cmidrule{1-16}
     \multicolumn{16}{c}{\cellcolor{greyL}Target Acc=72\%}  \\
    \hline
\cmidrule(lr){1-16}
    \multirow{3}{*}{\heterfour}& seed 0& $72.71$&$75.56$&$76.16$&$73.32$ &  $78.54$ &  $80.91$ & $74.91$   &  $74.15$   &    $74.75$& $70.14$    & $61.05$  &$65.53$   &    $66.80$ &$\mathbf{84.69}$\\
    &seed 1 &$73.22$ &$74.37$&$76.53$&$72.6$ & $77.17$  & $81.86$  & $75.63$   & $72.62$    &  $75.73$  &  $71.16$   &  $62.87$ &   $67.82$&  $68.36$ & $\mathbf{85.10}$ \\
    &seed 2 &$69.14$&$75.54$&$77.75$&$70.46$ &$78.51$   & $84.86$  &  $75.51$  &  $ 71.65$   &  $ 74.58$  & $71.72$    & $61.93$  &  $68.99$ & $70.01$  &$\mathbf{85.30}$\\
    \cmidrule(lr){1-16}
   \multicolumn{2}{r||}{ Mean$\pm$ std}&$71.69\pm2.22$&$75.16 \pm 0.68$&$76.81 \pm 0.83$&$72.13 \pm 1.49$ &$78.07 \pm 0.78$  & $82.54\pm2.06$  & $75.35 \pm 0.39$   &  $72.81 \pm 1.26$   & $75.02 \pm 0.62$   & $71.01 \pm 0.80$    & $61.95 \pm 0.91$  &$67.45 \pm 1.76$ &  $68.39\pm1.61$ &$\mathbf{85.03\pm0.31}$\\
        \cmidrule{1-16}
     \multicolumn{16}{c}{\cellcolor{greyL}Target Acc=65\%}  \\
    \hline
\cmidrule(lr){1-16}
    \multirow{3}{*}{\heterfive}& seed 0&$65.51$&$70.35$&$74.60$&$74.14$ & $75.36$  & $76.88$  &  $68.01$  &  $71.74$   & $74.75$   &    $66.79$ &  $62.82$ & $64.99$  &  $64.57$ &$\mathbf{77.70}$\\
    &seed 1 &$66.41$&$70.51$&$72.67$& $69.23$ & $74.70$  &  $73.00$ &  $69.08$ &  $67.16$  &   $71.57$  &  $70.17$  &  $65.75$   & $65.82$  & $66.89$  &$\mathbf{73.29}$ \\
    &seed 2&$69.67$&$68.07$&$72.0$& $69.56$&  $74.50$ & $74.98$  &  $70.01$  &  $68.08$   &  $70.43$  & $69.08$    & $61.60$  & $65.31$  &$65.76$   &$\mathbf{75.14}$
\\
    \cmidrule(lr){1-16}
   \multicolumn{2}{r||}{ Mean$\pm$ std}&$67.20\pm2.19$&$69.64 \pm 1.36$&$73.09 \pm 1.35$&$70.98 \pm 2.74$& $74.85 \pm 0.45$  & $74.95\pm1.94$  &  $69.03 \pm 1.00$  & $69.53 \pm 3.35$    &$71.25 \pm 0.71$   & $68.68 \pm 1.73$    & $63.39 \pm 2.13$  & $65.37 \pm 0.42$  &$65.74\pm1.16$   &$\mathbf{75.38\pm2.21}$\\
    \midrule[1.5pt]
    \hline
    \multicolumn{2}{r||}{Total Mean Acc $\pm$ std}&$78.51\pm7.99$&$80.05 \pm 6.80$&$80.74 \pm 5.14$&$77.81 \pm 5.52$ & $80.86 \pm 3.89$  & $83.55\pm5.02$  & $79.72 \pm 6.86$   &  $79.02 \pm 6.98$   &$80.08 \pm 6.07$    &  $78.64 \pm 7.62$   & $71.32 \pm 7.94$  & $76.61 \pm 8.80$  &$74.62\pm6.81$   &$\mathbf{84.70\pm5.14}$\\
    \bottomrule[1.5pt]
    \end{tabular}
    }
\end{sidewaystable}

\end{document}